
\typeout{Capturing Knowledge Graphs and Rules with Octagon Embeddings}


\documentclass{article}
\pdfpagewidth=8.5in
\pdfpageheight=11in

\usepackage{ijcai24}

\usepackage{times}
\usepackage{soul}
\usepackage{url}
\usepackage[hidelinks]{hyperref}
\usepackage[utf8]{inputenc}
\usepackage[small]{caption}
\usepackage{graphicx}
\usepackage{amsmath,amssymb}
\usepackage{amsthm}
\usepackage{booktabs}
\usepackage{algorithm}
\usepackage{algorithmic}
\usepackage[switch]{lineno}

\usepackage{subcaption}


\urlstyle{same}



\newtheorem{proposition}{Proposition}
\newtheorem{lemma}{Lemma}
\newtheorem{definition}{Definition}

\usepackage{xcolor}

\allowdisplaybreaks





\pdfinfo{
/TemplateVersion (IJCAI.2024.0)
}

\title{Capturing Knowledge Graphs and Rules with Octagon Embeddings}


\author{
Victor Charpenay$^1$
\and
Steven Schockaert$^2$\\
\affiliations
$^1$Mines Saint-Etienne, UMR 6158 LIMOS, Saint-\'Etienne, France\\
$^2$Cardiff University, Cardiff, UK\\
\emails
victor.charpenay@emse.fr,
schockaerts1@cardiff.ac.uk
}

\begin{document}

\maketitle

\begin{abstract}
Region based knowledge graph embeddings represent relations as geometric regions. This has the advantage that the rules which are captured by the model are made explicit, making it straightforward to incorporate prior knowledge and to inspect learned models. Unfortunately, existing approaches are severely restricted in their ability to model relational composition, and hence also their ability to model rules, thus failing to deliver on the main promise of region based models. With the aim of addressing these limitations, we investigate regions which are composed of axis-aligned octagons. Such octagons are particularly easy to work with, as intersections and compositions can be straightforwardly computed, while they are still sufficiently expressive to model arbitrary knowledge graphs. Among others, we also show that our octagon embeddings can properly capture a non-trivial class of rule bases. Finally, we show that our model achieves competitive experimental results.
\end{abstract}

\section{Introduction}
Knowledge graphs, i.e.\ sets of (entity, relation, entity) triples,  have become one of the most popular frameworks for knowledge representation, with applications ranging from search \cite{DBLP:journals/ftir/ReinandaMR20} and recommendation \cite{DBLP:journals/tkde/GuoZQZXXH22} to natural language processing \cite{DBLP:conf/ijcnlp/SchneiderSVGSM22} and computer vision \cite{DBLP:conf/cvpr/MarinoSG17}. Their popularity has spurred an extensive line of work dedicated to representation learning on knowledge graphs. Most works in this area focus on the paradigm of knowledge graph embedding \cite{DBLP:conf/aaai/BordesWCB11}, aiming to learn a vector representation $\mathbf{e}$ for each entity $e$ and a scoring function $s_r$ for each relation $r$ such that $s_r(\mathbf{e},\mathbf{f})$ reflects the likelihood that $(e,r,f)$ is a valid triple, i.e.\ that entity $e$ is in relation $r$ with entity $f$. 

Knowledge graph embeddings are intended to capture semantic regularities, making it possible to predict plausible triples that were missing from the original knowledge graph. Consider, for instance, the seminal TransE model \cite{DBLP:conf/nips/BordesUGWY13}, which uses scoring functions of the form $s_r(\mathbf{e},\mathbf{f}) = -d(\mathbf{e}+\mathbf{r},\mathbf{f})$, where $\mathbf{r}\in \mathbb{R}^n$ is an embedding of the relation $r$, $\mathbf{e},\mathbf{f}\in\mathbb{R}^n$ are entity embeddings of the same dimension and $d$ is a distance metric. Note that $s_r(\mathbf{e},\mathbf{f})$ achieves its maximal value of $0$ iff $\mathbf{e}+\mathbf{r}=\mathbf{f}$. We can thus say that the embedding fully supports the triple $(e,r,f)$ when $s_r(\mathbf{e},\mathbf{f})=0$. If $\mathbf{r_1}+\mathbf{r_2}=\mathbf{r_3}$ we have that $s_{r_1}(\mathbf{e},\mathbf{f})=0$ and  $s_{r_2}(\mathbf{f},\mathbf{g})=0$ together imply $s_{r_3}(\mathbf{e},\mathbf{g})=0$. In this sense, we can say that the embedding captures the following rule:
\begin{align}\label{eqExampleRule}
r_1(X,Y) \wedge r_2(Y,Z) \rightarrow r_3(X,Z)
\end{align}
This correspondence between knowledge graph embeddings and symbolic rules is appealing. For instance, if we already know  that \eqref{eqExampleRule} is valid, we can impose $\mathbf{r_1}+\mathbf{r_2}=\mathbf{r_3}$ when learning the embedding. We can also use this correspondence to inspect which semantic dependencies a learned embedding is capturing. 
Unfortunately, TransE has some inherent limitations, which mean that certain knowledge graphs cannot be faithfully captured \cite{DBLP:conf/aaai/WangGL18}. This has been addressed in more recent models such as ComplEx \cite{DBLP:journals/jmlr/TrouillonDGWRB17}, ConvE \cite{DBLP:conf/aaai/DettmersMS018} and TuckER \cite{DBLP:conf/emnlp/BalazevicAH19}, to name just a few, but while these models often perform better on the task of link prediction, the connection between their parameters and the captured semantic dependencies is considerably more opaque.  

Region based knowledge graph embedding models \cite{DBLP:conf/kr/Gutierrez-Basulto18,DBLP:conf/nips/AbboudCLS20,DBLP:conf/iclr/0002S23} aim to get the best of both worlds, increasing the expressivity of TransE while maintaining an explicit correspondence between model parameters and semantic dependencies. They represent each relation $r$ as a geometric region $X_r \subseteq \mathbb{R}^{2n}$. We say that a triple $(e,r,f)$ is captured by the embedding if $\mathbf{e}\oplus \mathbf{f} \in X_r$, where  $\oplus$ denotes vector concatenation. To characterise semantic dependencies, we can then exploit the fact that the intersection, subsumption and composition of relations can be naturally modelled in terms of their embeddings $X_r$. For instance, we say that the embedding captures the rule $r_1(X,Y) \wedge r_2(X,Y) \rightarrow r_3(X,Y)$ iff $X_{r_1}\cap X_{r_2} \subseteq X_{r_3}$. To model rules of the form \eqref{eqExampleRule}, we can characterise relational composition as follows:
\begin{align}\label{eqDefComposition}
X_r \diamond X_s = \{\mathbf{e}\oplus \mathbf{g} \,|\, \exists \mathbf{f}\in\mathbb{R}^n\,.\, \mathbf{e}\oplus \mathbf{f}\in X_r \wedge \mathbf{f}\oplus \mathbf{g}\in X_s\}
\end{align}
We then say that \eqref{eqExampleRule} is captured iff $X_{r_1}\diamond X_{r_2} \subseteq X_{r_3}$. Rules of the form \eqref{eqExampleRule}, which are sometimes referred to as closed path rules, play an important role in link prediction \cite{DBLP:conf/ijcai/MeilickeCRS19}. When designing a region based model, it is thus important that the composition of regions can be straightforwardly characterised. However, existing approaches are severely limited in this respect. For instance, BoxE \cite{DBLP:conf/nips/AbboudCLS20} cannot model relational composition at all, while ExpressivE \cite{DBLP:conf/iclr/0002S23} uses parallelograms, which are not closed under composition.


To address these concerns, our aim in this paper is to develop a region based model which is as simple as possible, while (i) still being expressive enough to capture arbitrary knowledge graphs and (ii) using regions which are closed under intersection and composition. Based on these desiderata, we arrive at a model which relies on axis-aligned octagons (with all angles fixed at 45 degrees).
We show that despite their simplicity, the proposed octagon embeddings are sufficiently expressive to properly capture a large class of rule bases. This is an important property, among others because it means that we can inject prior knowledge, in the form of a given rule base, without unintended consequences. Our result is considerably more general than what is possible with BoxE, which is not able to capture any closed path rules, and more general than what is known about ExpressivE. Moreover, because compositions and intersections of octagons can be straightforwardly computed, octagon embeddings are considerably more practical than existing alternatives. While our main focus is on better understanding the expressivity of region based models, we have also empirically evaluated the proposed octagon embeddings.
We found octagons to achieve results close to the current state-of-the-art.
This demonstrates that learning octagon embeddings is a promising strategy, especially in contexts where both knowledge graphs and rules need to be modelled.\footnote{This paper is to appear in the proceedings of IJCAI 24.}

\section{Preliminaries}
Let $\mathcal{E}$ and $\mathcal{R}$ be sets of entities and relations respectively. We consider knowledge graph embeddings in which each entity $e\in \mathcal{E}$ is represented by a vector $\mathbf{e}\in\mathbb{R}^n$ and each relation $r\in\mathcal{R}$ is represented by a region $X_r\subseteq \mathbb{R}^{2n}$. We refer to such representations as region-based knowledge graph embeddings.
We say that a triple $(e,r,f)$ is captured or supported by a given embedding if $\mathbf{e}\oplus \mathbf{f}\subseteq X_r$. This notion of support allows us to unambiguously associate a knowledge graph with a given geometric embedding, which in turn allows us to study the expressivity of knowledge graph embedding models. In practice, we typically use regions with soft boundaries. This makes learning easier and it is better aligned with the fact that link prediction is typically treated as a ranking problem rather than a classification problem. We will return to the issue of learning regions with soft boundaries in Section \ref{secLearningOctagonEmbeddings}.

\begin{figure}
\centering
\includegraphics[height=135pt]{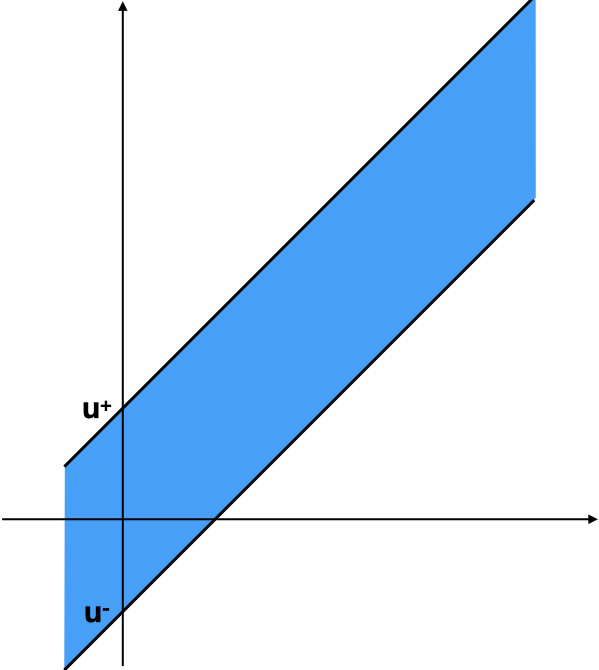}
\includegraphics[height=135pt]{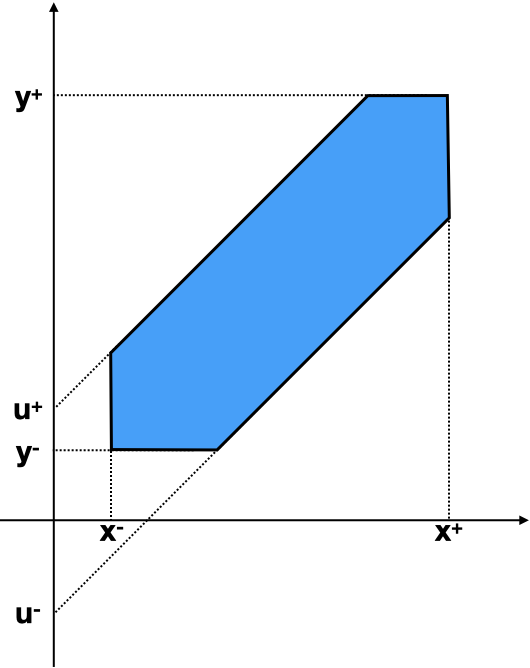}
\caption{Basic region based embeddings: TransE bands (left) and hexagons (right).\label{figTransEHexagon}}
\end{figure}

\begin{figure}
\centering
\includegraphics[width=200pt]{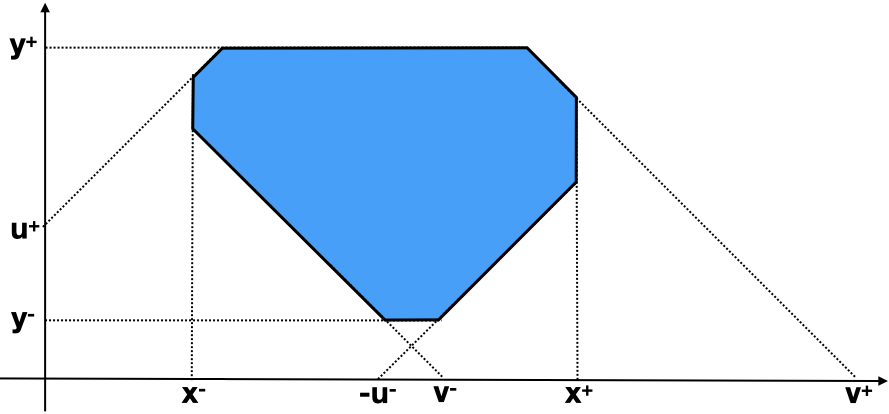}
\caption{Region-based embeddings with octagons.\label{figOctagon}}
\end{figure}
\paragraph{Coordinate-wise region embeddings.}
Region based embeddings were first studied from a theoretical point of view in \cite{DBLP:conf/kr/Gutierrez-Basulto18}. Their setting allowed relations to be represented by arbitrary convex polytopes. Using such regions is not feasible in practice, however, as they require exponentially many parameters. A natural solution is to use regions $X_r$ which can be described using $n$ two-dimensional regions $X^i_r$:
\begin{align}\label{eqCoordinateWise}
X_r \,{=}\, \{(x_1{,}...,x_{n},y_1{,}...,y_n) \,|\, \forall i\in \{1{,}...,n\}.(x_i,y_{i})\in X^i_r\}
\end{align}
We call such region based embeddings \emph{coordinate-wise}. For example, we can view TransE as a coordinate-wise region based embedding model, where 
\begin{align}\label{eqTransEStripes}
X^i_r = \{(x,y) \,|\, u_i^- \leq y-x \leq u_i^+\}
\end{align}
In this case, the two-dimensional regions correspond to unbounded bands, as illustrated in Figure \ref{figTransEHexagon}. ExpressivE \cite{DBLP:conf/iclr/0002S23} is also a coordinate-wise model, where the regions $X^i_r$ are parallelograms. An important drawback of parallelograms is that they are not closed under composition: if $X^i_{r_1}$ and $X^i_{r_2}$ are parallelograms, then it may be that $X^i_{r_1} \diamond X^i_{r_2}$ is not a parallelogram, with $\diamond$ defined as in \eqref{eqDefComposition}. Similarly, parallelograms are not closed under intersection. To address these limitations, we can consider some alternatives. Starting from the TransE bands defined in \eqref{eqTransEStripes}, a natural generalisation is to add domain and range constraints, i.e.\ to use hexagonal regions of the following form, as also illustrated in Figure \ref{figTransEHexagon}:
\begin{align}\label{eqHexagons}
X^i_r = \{(x,y) \,|\, &u_i^- \leq y-x \leq u_i^+, \\ 
&x_i^- \leq x \leq x_i^+, y_i^- \leq y \leq y_i^+\} \notag
\end{align}
Unfortunately, as we will see in Section \ref{secExpressivity}, such hexagons are not sufficiently expressive to capture arbitrary knowledge graphs. As a minimal extension, we consider octagons of the following form, as illustrated in Figure \ref{figOctagon}:
\begin{align}\label{eqOctagons}
X^i_r = \{(x,y) \,|\, &u_i^- \leq y-x \leq u_i^+, v_i^- \leq x+y \leq v_i^+,\\ 
&x_i^- \leq x \leq x_i^+, y_i^- \leq y \leq y_i^+\} \notag
\end{align}
Throughout the paper, we will refer to regions of the form \eqref{eqOctagons} as octagons. However, note that some of these regions are degenerate, having in fact fewer than eight vertices.


\paragraph{Cross-coordinate region embeddings.}
Some region based models go beyond coordinate-wise embeddings. We will refer to them as \emph{cross-coordinate} models. For instance, BoxE \cite{DBLP:conf/nips/AbboudCLS20} uses regions of the following form:
\begin{align}\label{eqBoxE}
X_r = \{&(x_1,\ldots,x_{2n},y_1,\ldots,y_{2n}) \,|\,\\
&\forall i\in\{1,\ldots,n\}\,.\,(x_i,y_{n+i})\,{\in}\, A^i_r \wedge (x_{n+i},y_i)\,{\in}\, B^i_r\}\notag
\end{align}
An advantage of cross-coordinate models is that they can rely on simpler two-dimensional regions. For instance, the regions $A^i_r$ and $B^i_r$ in the BoxE model essentially correspond to TransE bands. While a coordinate-wise model with TransE bands is not sufficiently expressive to capture arbitrary knowledge graphs, by using cross-coordinate comparisons, BoxE can circumvent this limitation. 

\paragraph{Capturing rules.}
The main promise of region based models is that the spatial configuration of the regions reveals the semantic dependencies that are captured. Let us first consider a closed-path rule of the following form:
\begin{align}\label{eqClosedPathRule}
r_1(X_1,X_2) \wedge \ldots \wedge r_k(X_k,X_{k+1}) \rightarrow r(X_1,X_{k+1})
\end{align}
Such a rule can be encoded using relational composition and set inclusion, as follows:
$$
r_1 \circ \ldots \circ r_k \subseteq r
$$
Accordingly, we say that a region based embedding captures this rule iff the following inclusion holds:
$$
X_{r_1} \diamond \ldots \diamond X_{r_k} \subseteq X_r
$$
Other types of rules can be modelled in a similar way. For instance, a rule such as $r_1(X,Y) \wedge r_2(X,Y) \rightarrow r(X,Y)$ is captured by a region based embedding if $X_{r_1}\cap X_{r_2} \subseteq X_r$.

To illustrate how rules can be captured by region based embeddings, first consider a coordinate-wise model with TransE bands, i.e.\ regions of the form \eqref{eqTransEStripes}. We have that $X_{r_1}\diamond \ldots \diamond X_{r_k} \subseteq X_r$ holds iff $\sum_{i=1}^k u_{r_i}^- \geq u_r^-$ and $\sum_{i=1}^k u_{r_i}^+ \leq u_r^+$. This means that every rule of the form \eqref{eqClosedPathRule} can be satisfied. However, the model is not sensitive to the order in which the relations in the body appear. For instance, whenever $r_1(X,Y) \wedge r_2(Y,Z) \rightarrow r(X,Z)$ is satisfied, we also have that $r_2(X,Y) \wedge r_1(Y,Z) \rightarrow r(X,Z)$ is satisfied, which is clearly undesirable. Coordinate-wise models with TransE bands are thus not sufficiently expressive to properly capture rules. 

Now consider a cross-coordinate model such as BoxE, using regions of the form \eqref{eqBoxE}. It holds that $(x_1,\allowbreak \ldots,\allowbreak x_{2n},\allowbreak z_1,\allowbreak \ldots,\allowbreak z_{2n})\in X_{r_1} \diamond X_{r_2}$ iff there are $y_1,\allowbreak \ldots,\allowbreak y_{2n}\in\mathbb{R}$ such that for every $i\in \{1,\ldots,n\}$ we have:
\begin{align*}
(x_i,y_{n+i})&\in A_{r_1}^i &
(x_{n+i},y_i)&\in B_{r_1}^i \\
(y_i,z_{n+i})&\in A_{r_2}^i &
(y_{n+i},z_i)&\in B_{r_2}^i 
\end{align*}
This is the case iff for every $i\in \{1,\ldots,n\}$ we have:
\begin{align*}
(x_i,z_i)&\in A_{r_1}^i \diamond B_{r_2}^i &
(x_{n+i},z_{n+i})&\in B_{r_1}^i \diamond A_{r_2}^i
\end{align*}
While $X_{r_1} \diamond X_{r_2}$ constrains the pairs $(x_i,z_i)$ and $(x_{n+i},z_{n+i})$, any region $X_{r_3}$ rather constrains the pairs $(x_i,z_{n+i})$ and $(x_{n+i},z_{i})$. It is thus not possible to have $X_{r_1} \diamond X_{r_2}\subseteq X_{x_3}$, unless in trivial cases where $X_{r_1}=\emptyset$, $X_{r_2}=\emptyset$, or (if infinite bounds are allowed) $X_{r_3}=\mathbb{R}^{4n}$. In all these trivial cases, the embedding also captures other rules, e.g.\ if $X_{r_1}=\emptyset$ then all rules of the form $r_1(X,Y)\wedge r_4(Y,Z)\rightarrow r_5(X,Z)$ are also satisfied.

ExpressivE is able to capture rules of the form \eqref{eqClosedPathRule}, while avoiding unintended consequences \cite{DBLP:conf/iclr/0002S23}. However, in practice we are typically interested in capturing \emph{sets} of such rules, and it is unclear under which conditions this is possible with ExpressivE. Another drawback of ExpressivE is that compositions and intersections of regions are difficult to compute, 
which, among others, makes checking whether a given rule is satisfied computationally expensive. Moreover, it is not clear how injecting rules into the learning process can then be done in a practical way.

\section{Modelling Relations with Octagons}\label{secModellingWithOctagons}
In this paper, we focus on coordinate-wise models where the two-dimensional regions are octagons. Let us write $X_r = [O_1^r,\ldots,O_n^r]$ to denote that relation $r$ is defined using the octagons $O_1^r,\ldots,O_n^r$, in the sense that $(x_1,\allowbreak \ldots,\allowbreak x_n,\allowbreak y_1,\allowbreak \ldots,\allowbreak y_n)\in X_r$ iff $(x_i,y_i)\in O_i^r$ for each $i\in \{1,\ldots,n\}$. Clearly, if $X_r = [O_1^r,\ldots,O_n^r]$ and $X_s = [O_1^s,\ldots,O_n^s]$ then we have $X_r\cap X_s = [O_1^r \cap O_1^s,\ldots,O_n^r \cap O_n^s]$ and $X_r\diamond X_s = [O_1^r \diamond O_1^s,\allowbreak \ldots,\allowbreak O_n^r \diamond O_n^s]$. To study how properties such as reflexivity, symmetry and transitivity can be satisfied in octagon embeddings, it is thus sufficient to study these properties for individual octagons, which is what we focus on in this section. 

\paragraph{Parameterisation.} Let us write $\mathsf{Octa}(x_i^-,\allowbreak x_i^+,\allowbreak y_i^-,\allowbreak y_i^+,\allowbreak u_i^-,\allowbreak u_i^+,\allowbreak v_i^-,\allowbreak v_i^+)$ to denote the octagon defined in \eqref{eqOctagons}. Note that the eight parameters are not independent. 
For instance, we have that $\mathsf{Octa}(x^-,\allowbreak x^+,\allowbreak y^-,\allowbreak y^+,\allowbreak u^-,\allowbreak u^+,\allowbreak v^-,\allowbreak v^+)=\mathsf{Octa}(x^-,\allowbreak x^+,\allowbreak y^-,\allowbreak y^+,\allowbreak u^-,\allowbreak u^+,\allowbreak \max(v^-,x^-+y^-),\allowbreak v^+)$. Indeed, if $x\geq x^-$ and $y\geq y^-$, we also have $x+y \geq x^- + y^-$, meaning that the bound $x+y\geq v^-$ can be strengthened to $x+y\geq \max(x^-+y^-,v^-)$. We call the parameters $(x^-,\allowbreak x^+,\allowbreak y^-,\allowbreak y^+,\allowbreak u^-,\allowbreak u^+,\allowbreak v^-,\allowbreak v^+)$ \emph{normalised} if they cannot be strengthened, i.e.\ if increasing any of the lower bounds or decreasing any of the upper bounds always leads to a different octagon. As the following proposition reveals, we can easily normalise any set of parameters.\footnote{All proofs can be found in the appendix.}

\begin{proposition}
Consider the following set of parameters:
\begin{align*}
x_1^- =&\max(x^-, v^- - y^+, y^- - u^+, 0.5 \cdot (v^- - u^+))\\
x_1^+ =& \min(x^+, v^+ - y^-, y^+ - u^-, 0.5 \cdot (v^+ - u^-))\\
y_1^- =&\max(y^-, u^- + x^-, v^- -x^+, 0.5 \cdot (u^- + v^-) )\\
y_1^+ =&\min(y^+, u^+ + x^+, v^+ -x^-, 0.5 \cdot (u^+ + v^+))\\
u_1^- =&\max(u^-, y^- - x^+,v^- - 2x^+, 2y^- - v^+ )\\
u_1^+ =&\min(u^+, y^+ - x^-,v^+ - 2x^-, 2y^+ - v^- )\\
v_1^- =&\max(v^-, x^- + y^-, u^- + 2x^-, 2y^- - u^+ )\\
v_1^+ =&\min(v^+, x^+ + y^+, u^+ + 2x^+, 2y^+ - u^-)
\end{align*}
Then $\mathsf{Octa}(x^-,x^+,y^-,y^+,u^-,u^+,v^-,v^+)=\mathsf{Octa}(x_1^-,\allowbreak x_1^+,\allowbreak y_1^-,\allowbreak y_1^+,\allowbreak u_1^-,\allowbreak u_1^+,\allowbreak v_1^-,\allowbreak v_1^+)$. Furthermore, if we have $x_1^->x_1^+$, $y_1^->y_1^+$, $u_1^- > u_1^+$ or $v_1^- > v_1^+$ then $\mathsf{Octa}(x^-,\allowbreak x^+,\allowbreak y^-,\allowbreak y^+,\allowbreak u^-,\allowbreak u^+,\allowbreak v^-,\allowbreak v^+)=\emptyset$. Otherwise, the set of parameters $(x_1^-,\allowbreak x_1^+,\allowbreak y_1^-,\allowbreak y_1^+,\allowbreak u_1^-,\allowbreak u_1^+,\allowbreak v_1^-,\allowbreak v_1^+)$ is normalised.
\end{proposition}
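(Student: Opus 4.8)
The plan is to establish three things in turn: that $\mathsf{Octa}(x^-,\ldots,v^+)$ and $\mathsf{Octa}(x_1^-,\ldots,v_1^+)$ denote the same set; that a contradictory pair among the tightened bounds forces this set to be empty; and that when no such contradiction arises the tightened parameters cannot be strengthened.

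For the equality, one inclusion is free: each tightened lower bound is the maximum of a collection of terms one of which is the original lower bound, and dually for the upper bounds, so $\mathsf{Octa}(x_1^-,\ldots,v_1^+)$ is cut out by a logically stronger system of inequalities and is therefore contained in $\mathsf{Octa}(x^-,\ldots,v^+)$. For the converse I would take an arbitrary point $(x,y)$ of $\mathsf{Octa}(x^-,\ldots,v^+)$ and verify each of the eight tightened inequalities; every such verification is a one-line consequence of adding or subtracting two of the original eight inequalities---for instance $x \ge v^- - y^+$ from $x+y \ge v^-$ and $y \le y^+$, or $x \ge \tfrac{1}{2}(v^- - u^+)$ from adding $x+y \ge v^-$ and $x - y \ge -u^+$. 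The complete list of checks is invariant under the dihedral symmetries of the square ($x \leftrightarrow y$, $x \leftrightarrow -x$, $y \leftrightarrow -y$), so only a few are genuinely different. Granting the equality, the emptiness claim is immediate: if, say, $x_1^- > x_1^+$, then the tightened system already contains the unsatisfiable pair $x \ge x_1^-$ and $x \le x_1^+$, so the tightened octagon, and hence the original one, is empty; the cases of $y$, $y - x$ and $x + y$ are identical, referring to the relevant functional instead of $x$.

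For normalization, assume $x_1^- \le x_1^+$, $y_1^- \le y_1^+$, $u_1^- \le u_1^+$ and $v_1^- \le v_1^+$; since each tightened lower bound lies between its original counterpart and the matching tightened upper bound, these assumptions in particular give $x^- \le x^+$, $y^- \le y^+$, $u^- \le u^+$ and $v^- \le v^+$. For each of the eight bounds I would exhibit a point of the octagon at which the relevant linear functional---one of $x$, $y$, $y - x$, $x + y$---attains exactly that bound. Take $v_1^-$ as the representative case: intersect the octagon with the line $x + y = v_1^-$ and parametrise it by $x$; the remaining seven constraints then carve out an interval of admissible values of $x$, whose nonemptiness reduces to nine inequalities, each of which follows either from the fact that $v_1^-$ dominates every term in the maximum defining it, or from $v_1^- \le v_1^+$ together with the fact that $v_1^+$ is bounded above by every term in the minimum defining it, or from one of $x^- \le x^+$, $y^- \le y^+$, $u^- \le u^+$. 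The analogous slices $x + y = v_1^+$, $x = x_1^{\pm}$, $y = y_1^{\pm}$ and $y - x = u_1^{\pm}$ furnish witnesses for the other seven bounds. These witness points show at once that the octagon is nonempty and that none of its bounds can be tightened: by the first part of the proof the associated functional is bounded on the octagon by the tightened bound, and the witness shows the bound is attained, so moving it inward removes at least that point and yields a strictly smaller---hence different---octagon.

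I expect normalization to be the only real obstacle, and within it the decisive idea is the one-dimensional slicing: each bound should be witnessed by a point of the octagon lying on the corresponding facet line. Once that is set up, the nine feasibility inequalities per bound are mechanical, and the rest---the equality of the octagons, the emptiness criterion, and deducing non-strengthenability from attainment---is routine bookkeeping.
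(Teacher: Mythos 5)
Your proposal is correct and follows essentially the same route as the paper: the equality and emptiness claims via soundness of the derived bounds, and normalisation by exhibiting, for each bound, a witness point of the octagon lying on the corresponding facet line, verified by case analysis on which term attains the defining max or min. The only cosmetic difference is that the paper checks eight specific vertex-type points (each witnessing two bounds at once), whereas you show each facet slice is a nonempty interval; the underlying inequality checks are the same.
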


\begin{figure}
\centering
\includegraphics[width=220pt]{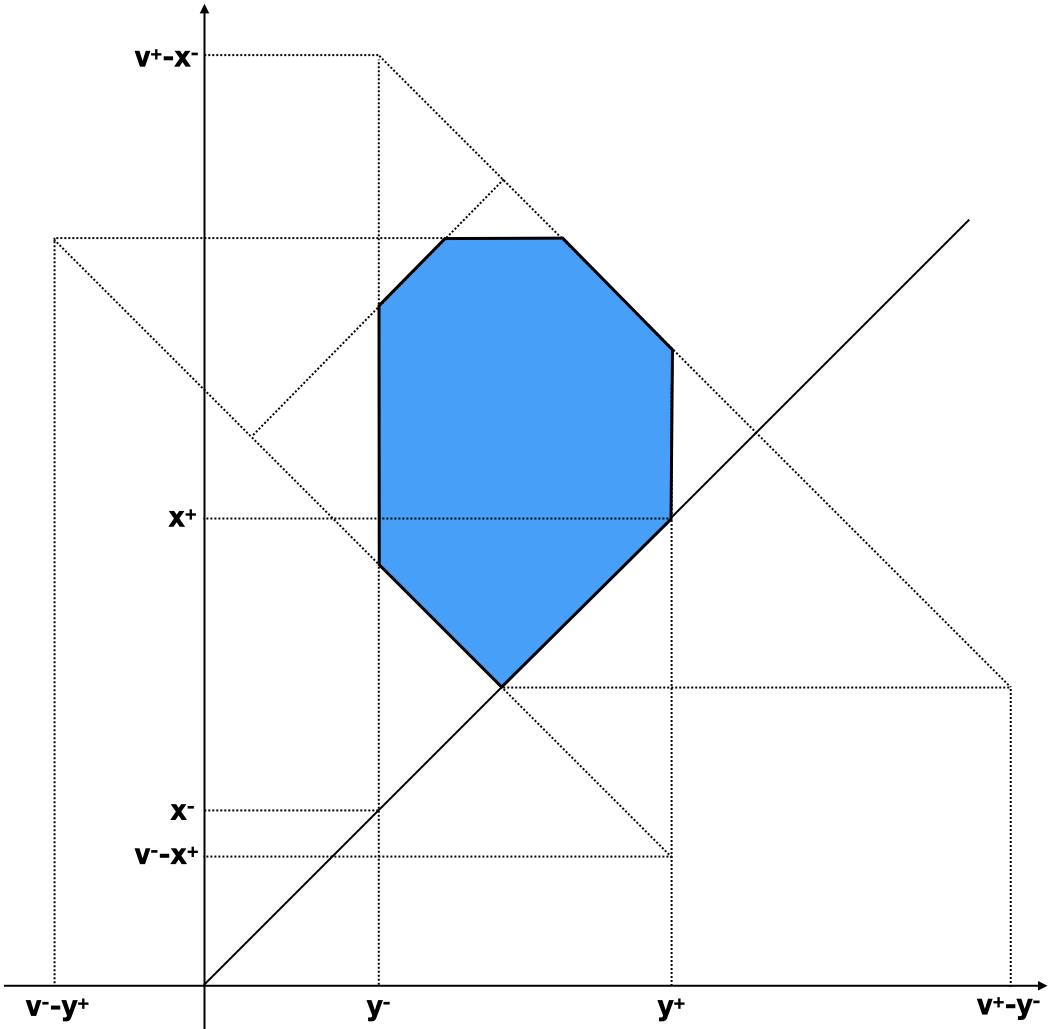}
\caption{Example of an octagon encoding a transitive relation. \label{figOctagonTransitive}}
\end{figure}

\paragraph{Capturing relational properties.}
As we saw in the previous section, two operations are of particular importance for capturing rules: intersection and composition.
The \emph{intersection} of $\mathsf{Octa}(x_1^-,\allowbreak x_1^+,\allowbreak y_1^-,\allowbreak y_1^+,\allowbreak u_1^-,\allowbreak u_1^+,\allowbreak v_1^-,\allowbreak v_1^+)$ and  $\mathsf{Octa}(x_2^-,\allowbreak x_2^+,\allowbreak y_2^-,\allowbreak y_2^+,\allowbreak u_2^-,\allowbreak u_2^+,\allowbreak v_2^-,\allowbreak v_2^+)$ is the following octagon: 
\begin{align*}
\mathsf{Octa}(&\max(x_1^-,x_2^-),\allowbreak \min(x_1^+,x_2^+),\max(y_1^-,y_2^-),\\
&\min(y_1^+,y_2^+),\allowbreak \max(u_1^-,u_2^-),\allowbreak \min(u_1^+u_2^+),\\
&\max(v_1^-,v_2^-),\allowbreak \min(v_1^+,v_2^+))
\end{align*}
Octagons are also closed under \emph{composition}, as the following proposition reveals.
\begin{proposition}
Let $O_1 = \mathsf{Octa}(x_1^-,\allowbreak x_1^+,\allowbreak y_1^-,\allowbreak y_1^+,\allowbreak u_1^-,\allowbreak u_1^+,\allowbreak v_1^-,\allowbreak v_1^+)$ and $O_2 = \mathsf{Octa}(x_2^-,\allowbreak x_2^+,\allowbreak y_2^-,\allowbreak y_2^+,\allowbreak u_2^-,\allowbreak u_2^+,\allowbreak v_2^-,\allowbreak v_2^+)$ be non-empty octagons with normalised parameters. Then $O_1\diamond O_2 = \mathsf{Octa}(x_3^-,\allowbreak x_3^+,\allowbreak y_3^-,\allowbreak y_3^+,\allowbreak u_3^-,\allowbreak u_3^+,\allowbreak v_3^-,\allowbreak v_3^+)$, where:
\begin{align*}
x_3^- &= \max(x_1^-, x_2^- - u_1^+, v_1^- - x_2^+)\\
x_3^+ &= \min(x_1^+, x_2^+ - u_1^-,v_1^+ - x_2^-)\\
y_3^- &= \max(y_2^-, u_2^-+y_1^-, v_2^- - y_1^+)\\
y_3^+ &= \min(y_2^+,u_2^+ + y_1^+,v_2^+-y_1^-)\\
u_3^- &= \max(y_2^- - x_1^+, u_2^- + u_1^-, v_2^- - v_1^+)\\
u_3^+ &= \min(y_2^+ - x_1^-,u_2^+ + u_1^+, v_2^+ - v_1^-)\\
v_3^- &= \max(x_1^- + y_2^-, u_2^-+v_1^-,v_2^- -u_1^+)\\
v_3^+ &= \min(x_1^+ + y_2^+, u_2^+ + v_1^+,v_2^+ - u_1^-)
\end{align*}
\end{proposition}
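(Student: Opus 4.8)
The plan is to compute $O_1\diamond O_2$ straight from the definition by eliminating the shared variable through Fourier--Motzkin elimination, and then to collapse the resulting system of inequalities using the normalisation hypothesis. Writing out the membership condition, $(x,z)\in O_1\diamond O_2$ holds iff there is some $y\in\mathbb{R}$ satisfying the four defining inequalities of $O_1$ (with $x$ treated as a constant) and the four of $O_2$ (with $z$ treated as a constant). This yields six lower bounds on $y$, namely $y_1^-$, $x+u_1^-$, $v_1^--x$, $x_2^-$, $z-u_2^+$ and $v_2^--z$, and six upper bounds, namely $y_1^+$, $x+u_1^+$, $v_1^+-x$, $x_2^+$, $z-u_2^-$ and $v_2^+-z$. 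Such a $y$ exists iff every lower bound is at most every upper bound, so $O_1\diamond O_2$ is precisely the set of $(x,z)$ satisfying these $36$ pairwise inequalities together with the four that do not involve $y$, namely $x_1^-\le x\le x_1^+$ and $y_2^-\le z\le y_2^+$.

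I would then sort the $36$ pairwise inequalities into four kinds. (i) Six ``diagonal'' ones reduce to statements such as $u_1^-\le u_1^+$, $v_2^-\le v_2^+$ and $x_2^-\le x_2^+$, which hold because $O_1$ and $O_2$ are non-empty. (ii) Six involve only $x$ and the parameters of $O_1$; each follows from $x_1^-\le x\le x_1^+$ together with one of the normalisation inequalities of $O_1$ guaranteed by Proposition~1 (for instance $x\le x_1^+\le (v_1^+-u_1^-)/2$). (iii) Symmetrically, six involve only $z$ and the parameters of $O_2$, and follow from $y_2^-\le z\le y_2^+$ and the normalisation of $O_2$. (iv) The remaining sixteen, together with the four $y$-free inequalities, are the ones that matter: grouping them by which of the linear forms $x$, $z$, $z-x$ and $x+z$ each one bounds, and keeping the tightest bound in each of these eight slots, reproduces exactly $x_3^\pm$, $y_3^\pm$, $u_3^\pm$ and $v_3^\pm$. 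Four of the terms appearing in the statement --- the $y_2^--x_1^+$ term of $u_3^-$, the $y_2^+-x_1^-$ term of $u_3^+$, the $x_1^-+y_2^-$ term of $v_3^-$ and the $x_1^++y_2^+$ term of $v_3^+$ --- are not themselves pairwise bounds but sums or differences of two of the $y$-free bounds; including them does not change the octagon and in fact puts the output into the normalised form of Proposition~1. Showing that the whole system is logically equivalent to the defining inequalities of $\mathsf{Octa}(x_3^-,\dots,v_3^+)$ then completes the proof.

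The bulk of the work is the bookkeeping in parts (ii)--(iv): matching each redundant pairwise inequality to the correct normalisation inequality among the roughly two dozen provided by Proposition~1, and checking both inclusions for the essential ones, i.e.\ that each essential Fourier--Motzkin inequality follows from the eight claimed bounds and that each of the eight claimed bounds (including the four extra terms above) follows from the Fourier--Motzkin system. Keeping the signs right and tracking which output parameter each term feeds into is where mistakes are easy; grouping the $36+4$ inequalities into the eight slots from the start keeps this routine rather than delicate. One subtlety is worth flagging: two of the $36$ pairwise inequalities, $y_1^-\le x_2^+$ and $x_2^-\le y_1^+$, involve neither $x$ nor $z$, and since the coordinate projections of a non-empty normalised octagon are tight they say exactly that the intervals $[y_1^-,y_1^+]$ and $[x_2^-,x_2^+]$ overlap, i.e.\ that $O_1\diamond O_2\neq\emptyset$; so the stated identity is to be read under that (generic) condition, with the empty case treated separately.
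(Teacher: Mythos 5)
Your Fourier--Motzkin route is correct and genuinely different from the paper's proof. The paper establishes the easy inclusion $O_1\diamond O_2\subseteq O_3$ by combining defining inequalities, and for the converse it normalises $O_3$, lists the eight vertices of the resulting octagon via the vertex lemma, and then checks \emph{computationally} (with Gurobi) that each vertex admits a witness $y^*$. You replace that outsourced step with an exact quantifier elimination: the $36$ pairwise lower-versus-upper comparisons on $y$ plus the four $y$-free inequalities, followed by a by-hand classification of which comparisons are redundant. I verified your bookkeeping: six comparisons are constants expressing non-emptiness of $O_1$ and $O_2$; twelve are absorbed by the normalisation inequalities of $O_1$ (resp.\ $O_2$) together with $x\in[x_1^-,x_1^+]$ (resp.\ $z\in[y_2^-,y_2^+]$); sixteen are the essential ones and populate the eight slots exactly as claimed; and the four extra terms in the statement ($y_2^--x_1^+$, $y_2^+-x_1^-$, $x_1^-+y_2^-$, $x_1^++y_2^+$) are indeed redundant consequences of the $y$-free bounds. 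What your approach buys is a self-contained, checkable proof of the hard direction; what the paper's buys is brevity at the cost of a solver call.

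The ``subtlety'' you flag at the end should be promoted from a reading convention to a correction of the statement: the two parameter-only comparisons $y_1^-\le x_2^+$ and $x_2^-\le y_1^+$ are \emph{not} implied by non-emptiness of the right-hand side, so the proposition as printed is actually false without them. Concretely, take $O_1=[0,10]\times[5,6]$ and $O_2=[0,1]\times[0,9]$, both normalised as octagons with $u_1^\pm=(-5,6)$, $v_1^\pm=(5,16)$ and $u_2^\pm=(-1,9)$, $v_2^\pm=(0,10)$. Then $O_1\diamond O_2=\emptyset$ because the range $[5,6]$ of $O_1$ misses the domain $[0,1]$ of $O_2$, yet the displayed formulas give $\mathsf{Octa}(4,6,4,5,-6,5,4,15)$, which contains the point $(5,4.5)$. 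So the identity holds exactly when $[y_1^-,y_1^+]\cap[x_2^-,x_2^+]\neq\emptyset$, and otherwise $O_1\diamond O_2=\emptyset$ while $O_3$ may not be. Your proof, read as establishing that corrected statement, is complete; just state the overlap hypothesis explicitly rather than leaving it as a parenthetical ``generic condition''.
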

The \emph{inverse} of a relation can also be straightforwardly characterised. Let us define the inverse of an octagon $O = \mathsf{Octa}(x^-,x^+,y^-,y^+,u^-,u^+,v^-,v^+)$ as: 
\begin{align}\label{eqInvOctagon}
O^{\text{inv}} =\mathsf{Octa}(y^-,y^+,x^-,x^+,-u^+,-u^-,v^-,v^+)
\end{align}
Then clearly we have $(x,y)\in O$ iff $(y,x)\in O^{\text{inv}}$, hence the relation encoded by $O^{\text{inv}}$ is indeed the inverse of the relation encoded by $O$. It also follows that $O$ is \emph{symmetric} if the following conditions are satisfied: $x^-=y^-$, $x^+=y^+$ and $u^- = -u^+$. Similarly, assuming the parameters are normalised, the relation captured by the octagon $O$ is \emph{reflexive} over its domain $[x^-,x^+]$, i.e.\ $\forall x\in[x^-,x^+]\,.\, (x,x)\in O$, iff the following conditions are satisfied:
\begin{itemize}
\item $u^-\leq 0 \leq u^+$;
\item $v^- \leq 2x^- \leq 2x^+ \leq v^+$.
\end{itemize}
From the definition of composition, it straightforwardly follows that a non-empty octagon with normalised coordinates is \emph{transitive}, i.e.\ satisfies $O\diamond O \subseteq O$, iff the following conditions are satisfied:
\begin{itemize}
\item $u^-\leq \max(y^- -x^+, 2u^-, v^- -v^+)$;
\item $u^+\geq \min(y^+ -x^-, 2u^+, v^+ -v^-)$;
\item $v^-\leq \max(x^- + y^-, u^- + v^-, v^- - u^+)$;
\item $v^+\geq \min(x^+ + y^+, u^+ + v^+, v^+ - u^-)$.
\end{itemize}
We can also show the following.
\begin{proposition}\label{propTransitivity}
Let $O=\mathsf{Octa}(x^-,\allowbreak x^+,\allowbreak y^-,\allowbreak y^+,\allowbreak u^-,\allowbreak u^+,\allowbreak v^-,\allowbreak v^+)$ and assume that these parameters are normalised. It holds that $O\diamond O = O$ iff one of the following conditions is satisfied:
\begin{itemize}
\item $O=\emptyset$;
\item $O = [x^-,x^+] \times [y^-,y^+]$ with $x^+ \geq y^-$ and $x^- \leq y^+$;
\item $O = \{(x,x) \,|\, x\in [x^-,x^+]\}$;
\item $u^- =0$, $u^+ >0$, $v^- < v^+$, $v^--x^+\leq x^-$, $x^+\leq v^+-x^-$, $v^--y^+\leq y^-$, $y^+\leq v^+-y^-$ and  $u^+ = \min(y^+ - x^-,v^+ - v^-)$;
\item $u^- < 0$, $u^+=0$, $v^- < v^+$, $v^--x^+\leq x^-$, $x^+\leq v^+-x^-$, $v^--y^+\leq y^-$, $y^+\leq v^+-y^-$ and  $u^- = \max(y^- - x^+,v^- - v^+)$.
\end{itemize}
\end{proposition}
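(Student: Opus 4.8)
The plan is to reduce the equation $O\diamond O=O$ to a finite system of equalities among the eight normalised parameters of $O$, and then to solve that system by a case analysis on the signs of $u^-$ and $u^+$. The case $O=\emptyset$ is immediate, since then $O\diamond O=\emptyset=O$, so assume from now on that $O\neq\emptyset$ with normalised parameters. Applying the composition formula established above with $O_1=O_2=O$ gives $O\diamond O=\mathsf{Octa}(x_3^-,\dots,v_3^+)$, where each of $x_3^-,\dots,v_3^+$ is an explicit $\max$ or $\min$ of three terms in the parameters of $O$; in particular this $\max$/$\min$ shape already yields $x_3^-\geq x^-$, $x_3^+\leq x^+$, $y_3^-\geq y^-$ and $y_3^+\leq y^+$ unconditionally.

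First I would show that $O\diamond O=O$ holds precisely when $x_3^-,\dots,v_3^+$ coincide with $x^-,\dots,v^+$ respectively. One inclusion, $O\diamond O\subseteq O$, is exactly the transitivity condition recalled just above, i.e.\ $u^-\leq u_3^-$, $u^+\geq u_3^+$, $v^-\leq v_3^-$, $v^+\geq v_3^+$. For the reverse inclusion $O\subseteq O\diamond O$, I would use that every finite bound of a non-empty normalised octagon is attained on it: the point realising $x=x^-$ must lie in $O\diamond O$, which together with $x_3^-\geq x^-$ forces $x_3^-=x^-$, and similarly $x_3^+=x^+$, $y_3^-=y^-$, $y_3^+=y^+$; while the points realising $y-x=u^\pm$ and $x+y=v^\pm$ force $u_3^-\leq u^-$, $u_3^+\geq u^+$, $v_3^-\leq v^-$ and $v_3^+\geq v^+$ (infinite bounds being handled with the usual conventions). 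Combining the two directions gives the eight equalities; conversely, if they hold then $O\diamond O=\mathsf{Octa}(x^-,\dots,v^+)=O$, so the reduction is an equivalence.

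It then remains to solve these eight equalities, using the normalisation inequalities derived above --- notably $v^-\geq x^-+y^-$, $v^+\leq x^++y^+$, $u^-\geq y^--x^+$, $u^+\leq y^+-x^-$ --- together with non-emptiness ($v^-\leq v^+$). The four equalities involving $x_3^\pm$ and $y_3^\pm$ collapse at once to $u^-\leq 0\leq u^+$ plus the four inequalities $v^-\leq x^-+x^+$, $v^-\leq y^-+y^+$, $v^+\geq x^-+x^+$, $v^+\geq y^-+y^+$. I would then split on whether $u^-$ and $u^+$ vanish. If $u^-=u^+=0$, then $y=x$ throughout $O$ and normalisation forces $x^\pm=y^\pm$ and $v^\pm=2x^\pm$, leaving the diagonal segment $O=\{(x,x)\mid x\in[x^-,x^+]\}$ (third case). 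If $u^-=0<u^+$, then $u_3^-=u^-$, $v_3^-=v^-$ and $v_3^+=v^+$ hold automatically (the unwanted terms in the corresponding $\max$/$\min$ being dominated via the normalisation inequalities and $v^-\le v^+$), while $u_3^+=u^+$ becomes $u^+=\min(y^+-x^-,v^+-v^-)$, which in particular forces $v^-<v^+$; combined with the four inequalities above this is exactly the fourth case, and $u^-<0=u^+$ is symmetric and gives the fifth. Finally, if $u^-<0<u^+$, then $v_3^-=v^-$ forces $v^-=x^-+y^-$ and $v_3^+=v^+$ forces $v^+=x^++y^+$, after which (using $v^--v^+=(y^--x^+)-(y^+-x^-)$ with $y^+-x^-\geq u^+>0$) $u_3^-=u^-$ forces $u^-=y^--x^+$ and $u_3^+=u^+$ forces $u^+=y^+-x^-$; hence the $u$- and $v$-constraints of $O$ become redundant, $O=[x^-,x^+]\times[y^-,y^+]$, and the four inequalities reduce to $y^-\leq x^+$ and $x^-\leq y^+$ --- the second case. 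A direct check then confirms that each of the five listed situations does satisfy $O\diamond O=O$. The cases overlap on boundaries (a box with $x^+=y^-$, a single point, etc.), which is harmless given the ``one of'' phrasing.

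The step I expect to be most delicate is this last case analysis: correctly simplifying the nested $\max$/$\min$ expressions for $u_3^\pm$ and $v_3^\pm$ in each regime requires careful bookkeeping with the normalisation inequalities, and one must verify that the four regimes exhaust all possibilities with no gap at the transitions between them.
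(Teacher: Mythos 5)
Your proof is correct in substance, but it takes a genuinely different route from the paper's. The paper proves Proposition \ref{propTransitivity} as a corollary of a study of the whole sequence of iterated self-compositions $O^{(l)}$: Lemmas \ref{lemmaSelfCompositionEmpty}--\ref{lemmaSelfCompositionVequal} describe the limiting behaviour of $O^{(l)}$ in each sign regime of $(u^-,u^+,v^--v^+)$, and the idempotent octagons are then read off as the possible limits. You instead treat $O\diamond O=O$ directly as a fixed-point equation: you reduce it to eight parameter equalities $x_3^\pm=x^\pm,\dots,v_3^\pm=v^\pm$ via the explicit composition formula, and then solve that system by a case split on the signs of $u^-$ and $u^+$. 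Your route is shorter and more self-contained for this particular statement (your simplifications of the $\max$/$\min$ expressions in each regime all check out, and the four regimes $u^-=u^+=0$, $u^-=0<u^+$, $u^-<0=u^+$, $u^-<0<u^+$ do exhaust the possibilities once the $x_3^\pm,y_3^\pm$ equalities have forced $u^-\leq 0\leq u^+$); what it does not buy you is the later proposition that $O^{(m)}\diamond O=O^{(m)}$ for some $m$, which in the paper falls out of the same lemmas for free.

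One step deserves more care than you give it: your claim that $O\diamond O\subseteq O$ yields $u^-\leq u_3^-$, $v^-\leq v_3^-$, etc.\ with the \emph{non-normalised} composed parameters. The parameters $(x_3^-,\dots,v_3^+)$ produced by the composition formula need not be normalised, so set inclusion only directly controls the normalised bounds of $O\diamond O$; a priori one could have $O\diamond O=O$ with, say, $u_3^-<u^-$ if the $u$-constraint of the composed octagon is slack. The conclusion is nevertheless true, but to see it you need the extra observation that, once $x_3^\pm=x^\pm$ and $y_3^\pm=y^\pm$ and $v_3^-\leq v^-\leq v^+\leq v_3^+$ are in hand, the redundancy terms $y^--x^+$, $v_3^--2x^+$ and $2y^--v_3^+$ in the normalisation of $u_3^-$ are all bounded by $u^-$, and can only reach $u^-$ when $y^--x^+=u^-$, in which case $u_3^-\geq y^--x^+=u^-$ anyway. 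With that patch (and its three symmetric counterparts) your reduction to the eight equalities is airtight, and the rest of the argument goes through as written.
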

Figure \ref{figOctagonTransitive} shows an octagon which satisfies the conditions from the fourth case. 
If we compose an octagon with itself a sufficient number of times, we always end up with one of the octagon types from Proposition \ref{propTransitivity}. In particular, let us define $O^{(m)}$ as $O^{(m-1)}\diamond O$ for $m\geq 2$ and $O^{(1)}=O$. We then have the following result.
\begin{proposition}
If $v^-<v^+$ then there exists some $m\geq 1$ such that $O^{(m)}\diamond O=O^{(m)}$.
\end{proposition}
\section{Expressivity of Octagon Embeddings}\label{secExpressivity}
We now study the ability of octagon embeddings to capture knowledge graphs and rules. We will denote a given knowledge graph embedding as $\gamma$. Such an embedding represents each entity $e\in \mathcal{E}$ as the vector $\gamma(e)\in\mathbb{R}^n$ and each relation $r\in\mathcal{R}$ as the region $\gamma(r)\subseteq \mathbb{R}^{2n}$. We say that $\gamma$ is an octagon embedding if it is a coordinate-wise model in which $\gamma(r)$ is defined in terms of octagons, for each $r\in\mathcal{R}$.

\subsection{Capturing Knowledge Graphs} 
With each region based embedding $\gamma$ we can associate the knowledge graph $\mathcal{G}_{\gamma}=\{(e,r,f) \,|\, \gamma(e)\oplus\gamma(f)\in \gamma(r)\}$. An important question is whether a given region based embedding model is capable of modelling any knowledge graph, i.e.\ whether for any knowledge graph $\mathcal{G}\subseteq \mathcal{E}\times\mathcal{R}\times\mathcal{E}$ there exists an embedding $\gamma$ such that $\mathcal{G}=\mathcal{G}_{\gamma}$. When using coordinate-wise embeddings with TransE bands, this is not possible \cite{DBLP:conf/aaai/WangGL18,DBLP:conf/nips/Kazemi018}. In fact, for a coordinate-wise model with hexagons of the form \eqref{eqHexagons} this is still not possible as the following proposition reveals.

\begin{proposition}\label{propHexagonLimitation}
Let $X_r$ be a hexagon region of the form \eqref{eqHexagons} and let $\mathbf{e},\mathbf{f}\in \mathbb{R}^n$. If $\mathbf{e}\oplus\mathbf{f}\in X_r$ and $\mathbf{f}\oplus\mathbf{e}\in X_r$ then we also have $\mathbf{e}\oplus\mathbf{e}\in X_r$.
\end{proposition}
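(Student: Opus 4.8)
The plan is to reduce to a single coordinate and then carry out an elementary check on a two-dimensional hexagon. Since the model is coordinate-wise, $\mathbf{e}\oplus\mathbf{f}\in X_r$ holds iff $(e_i,f_i)\in X_r^i$ for every $i$, where $X_r^i$ is the hexagon from \eqref{eqHexagons}. Hence it suffices to fix an index $i$, write $e=e_i$ and $f=f_i$, and show that $(e,f)\in X_r^i$ together with $(f,e)\in X_r^i$ implies $(e,e)\in X_r^i$.

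Next I would unpack the three pairs of inequalities defining $X_r^i$ for each of the two assumptions. From $(e,f)\in X_r^i$ we obtain $x_i^-\le e\le x_i^+$, $y_i^-\le f\le y_i^+$ and $u_i^-\le f-e\le u_i^+$; from $(f,e)\in X_r^i$ we obtain $x_i^-\le f\le x_i^+$, $y_i^-\le e\le y_i^+$ and $u_i^-\le e-f\le u_i^+$. On the other hand, the desired membership $(e,e)\in X_r^i$ amounts to $x_i^-\le e\le x_i^+$, $y_i^-\le e\le y_i^+$ and $u_i^-\le 0\le u_i^+$.

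The first of these three conditions is immediate from the first assumption and the second from the second assumption, so only the band constraint remains. Here I would observe that $u_i^-\le f-e$ and $u_i^-\le e-f$ together give $2u_i^-\le 0$, while $f-e\le u_i^+$ and $e-f\le u_i^+$ give $0\le 2u_i^+$; equivalently, the interval $[u_i^-,u_i^+]$ contains both $f-e$ and its negative and is therefore forced to contain $0$. Combining the three conditions yields $(e,e)\in X_r^i$ for every $i$, i.e.\ $\mathbf{e}\oplus\mathbf{e}\in X_r$.

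There is no genuine obstacle in this argument; it is essentially a structural observation and the calculation is routine. The point worth emphasising is that hexagons of the form \eqref{eqHexagons} impose no constraint on $x+y$, so a band $u_i^-\le y-x\le u_i^+$ that already contains a value and its negation must contain the diagonal value $0$. This is exactly what fails for the octagons of \eqref{eqOctagons}: the extra bounds $v_i^-\le x+y\le v_i^+$ break this symmetry, which is the contrast that motivates moving from hexagons to octagons in the next section.
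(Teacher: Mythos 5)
Your proof is correct and follows essentially the same route as the paper's: both reduce to a single coordinate and observe that the $x$-bounds on $e$ come from $(e,f)\in X_r^i$, the $y$-bounds on $e$ from $(f,e)\in X_r^i$, and that the band $[u_i^-,u_i^+]$ must contain $0$ because it contains both $f-e$ and $e-f$. The paper merely phrases this contrapositively (assuming $(e,e)\notin X_r^i$ and deriving a contradiction in each of the six failing cases), which is an inessential difference.
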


From this proposition, it follows that the hexagon model cannot correctly capture a knowledge graph containing the triples $(e,r,f)$ and $(f,r,e)$ but not $(e,r,e$). However, if we use octagons instead of hexagons, we can correctly capture any knowledge graph.

\begin{proposition}\label{propFullyExpressive}
Let $\mathcal{G}\subseteq \mathcal{E}\times\mathcal{R}\times\mathcal{E}$ be a knowledge graph. There exists an octagon embedding $\gamma$ such that $\mathcal{G}=\mathcal{G}_{\gamma}$.
\end{proposition}

\subsection{Capturing Rules} 
We now analyse which sets of rules can be correctly captured by octagon embeddings. Previous work \cite{DBLP:conf/nips/AbboudCLS20} has focused in particular on the following types of rules\footnote{\emph{Asymmetry} rule was called \emph{anti-symmetry} in \cite{DBLP:conf/nips/AbboudCLS20}. We use the former to avoid any confusion with the standard notion of anti-symmetric relations. Furthermore, \cite{DBLP:conf/nips/AbboudCLS20} defined \emph{inversion} as $r_1(X,Y) \equiv r_2(Y,X)$. We consider inversion as a rule, since the equivalence can be straightforwardly recovered by two inversion rules.}:
\begin{align}
&\hspace{-6pt}\textbf{Symmetry:}  && \hspace{-5pt} r(X,Y) \rightarrow r(Y,X) \label{eqPatternSymmetry}\\
&\hspace{-6pt}\textbf{Asymmetry:} &&\hspace{-5pt} r(X,Y) \rightarrow \neg r(Y,X) \label{eqPatternAntiSymmetry}\\
&\hspace{-6pt}\textbf{Inversion:} && \hspace{-5pt}r_1(X,Y) \rightarrow r_2(Y,X) \label{eqPatternInversion}\\
&\hspace{-6pt}\textbf{Hierarchy:} && \hspace{-5pt}r_1(X,Y) \rightarrow r_2(X,Z) \label{eqPatternHierarchy}\\
&\hspace{-6pt}\textbf{Intersection:} &&\hspace{-5pt} r_1(X,Y) \wedge r_2(X,Y) \rightarrow r_3(X,Y) \label{eqPatternIntersection}\\
&\hspace{-6pt}\textbf{Mutual exclusion:} && \hspace{-5pt} r_1(X,Y) \wedge r_2(X,Y) \rightarrow \bot \label{eqPatternMutualExclusion}\\
&\hspace{-6pt}\textbf{Composition:} && \hspace{-5pt}r_1(X,Y) \wedge r_2(Y,Z) \rightarrow r_3(X,Z) \label{eqPatternComposition}
\end{align}
Let us define the inverse region $X_r^{\textit{inv}}$ as follows:
$$
X_r^{\textit{inv}}{=}\{(x_1,...,x_n,y_1,...,y_n) \,|\, (y_1,...,y_n,x_1,...,x_n)\in X_r\}
$$
Note that when $X_r=[O_1,\ldots,O_n]$ then $X_r^{\textit{inv}}=[O_1^{\textit{inv}},\allowbreak \ldots,\allowbreak O_n^{\textit{inv}}]$ with $O_i^{\textit{inv}}$ as defined in \eqref{eqInvOctagon}.
Each of the aforementioned rule types can be straightforwardly modelled using regions. In particular, we say that a region based embedding $\gamma$ captures or satisfies \eqref{eqPatternSymmetry} iff $\gamma(r)\subseteq \gamma(r)^{\textit{inv}}$; \eqref{eqPatternAntiSymmetry} iff  $\gamma(r) \cap \gamma(r)^{\textit{inv}}=\emptyset$; \eqref{eqPatternInversion} iff $\gamma(r_1)\subseteq\gamma(r_2)^{\textit{inv}}$; \eqref{eqPatternHierarchy} iff $\gamma(r_1)\subseteq \gamma(r_2)$; \eqref{eqPatternIntersection} iff $\gamma(r_1)\cap \gamma(r_2)\subseteq \gamma(r_3)$; \eqref{eqPatternMutualExclusion} iff $\gamma(r_1)\cap\gamma(r_2)=\emptyset$; and \eqref{eqPatternComposition} iff $\gamma(r_1)\diamond \gamma(r_2)\subseteq \gamma(r_3)$. For a set of rules $\mathcal{K}$ and a rule $\rho$, we write $\mathcal{K}\models \rho$ to denote that $\rho$ is entailed by $\mathcal{K}$ in the usual sense. 
We also write $\gamma\models \rho$ to denote that the embedding $\gamma$ captures the rule $\rho$. 

We are interested in analysing whether a given region based embedding model can capture a set of rules $\mathcal{K}$, without capturing any other rules, i.e.\ such that for any rule $\rho$ it holds that $\gamma\models \rho$ iff $\mathcal{K}\models \rho$. Whether this is possible in general depends on the kinds of rules which $\mathcal{K}$ is permitted to contain. 
We first show a negative result. We call a region based embedding $\gamma$ convex if $\gamma(r)$ is convex for every $r\in \mathcal{R}$. 
\begin{proposition}\label{propNoMutualExclusion}
There exists a consistent set of hierarchy, intersection and mutual exclusion rules $\mathcal{K}$ such that every convex embedding of the form \eqref{eqCoordinateWise} or \eqref{eqBoxE} which satisfies $\mathcal{K}$ also satisfies some hierarchy rule which is not entailed by $\mathcal{K}$. 
\end{proposition}

We can show the same result when only hierarchy, intersection and asymmetry rules are permitted. Note that Proposition \ref{propNoMutualExclusion} also applies to BoxE, and thus contradicts the claim from \cite{DBLP:conf/nips/AbboudCLS20} that BoxE can capture arbitrary consistent sets of symmetry, asymmetry, inversion, hierarchy, intersection and mutual exclusion rules. The underlying issue relates to the fact that the considered embeddings are defined from two-dimensional regions, and when these regions are convex, Helly's theorem imposes restrictions on when intersections of such regions can be disjoint. Furthermore, note that \cite{DBLP:conf/nips/AbboudCLS20} did not consider composition rules, as such rules cannot be captured by BoxE. When we omit asymmetry, mutual exclusion and composition rules, we can capture any rule base, as the following result shows. 

\begin{proposition}
Let $\mathcal{K}$ be a set of symmetry, inversion, hierarchy and intersection rules. There exists a coordinate-wise octagon embedding $\gamma$ which satisfies $\mathcal{K}$, and which only satisfies those symmetry, inversion, hierarchy and intersection rules which are entailed by $\mathcal{K}$, and which does not satisfy any asymmetry, mutual exclusion and composition rules.
\end{proposition}

Let us now shift our focus to composition rules. First, we have the following negative result, showing that octagon embeddings cannot capture arbitrary sets of composition rules.

\begin{proposition}\label{propNoArbitraryComposition}
Let $\mathcal{K} = \{r_1(X,Y) \wedge r_1(Y,Z) \rightarrow r_2(X,Z), r_2(X,Y) \wedge r_2(Y,Z) \rightarrow r_3(X,Z)\}$. It holds that every octagon embedding $\gamma$ which satisfies $\mathcal{K}$ also satisfies some composition rule which is not entailed by $\mathcal{K}$.
\end{proposition}
The underlying issue relates to composition rules where the same relation appears more than once. If we exclude such cases, we can correctly embed sets of composition rules. 

\begin{definition}
An \emph{extended composition rule} over a set of relations $\mathcal{R}$ is an expression of the form:
$$
r_1(X_1,X_2)\wedge \ldots \wedge r_k(X_k,X_{k+1}) \rightarrow r_{k+1}(X_1,X_{k+1})
$$
where $k\geq 1$ and $r_1,\ldots,r_{k+1}\in \mathcal{R}$. Such a rule is called \emph{regular} if we have $r_i\neq r_j$ for $i\neq j$. 
\end{definition}
Regular composition rules generalise the hierarchy and composition rules from \cite{DBLP:conf/nips/AbboudCLS20}, noting that the latter work also required the three relations appearing in composition rules to be distinct.

\begin{proposition}
Let $\mathcal{K}$ be a set of regular composition rules. Assume that any extended composition rule entailed by $\mathcal{K}$ is either a trivial rule of the form $r\subseteq r$ or a regular rule. There exists an octagon embedding $\gamma$ which satisfies $\mathcal{K}$, and which only satisfies those extended composition rules which are entailed by $\mathcal{K}$.
\end{proposition}

This result is significant given the importance of composition rules for link prediction \cite{DBLP:conf/ijcai/MeilickeCRS19}, among others. Moreover, a similar result was not yet shown in previous work on region based embeddings, to the best of our knowledge. At the same time, however, the result is limited by the fact that only regular rules are considered and no inverse relations. Knowledge graph completion often relies (explicitly or implicitly) on rules such as $\textit{playsForTeam}(X,Y)\wedge \textit{playsForTeam}^{-1}(Y,Z) \wedge \textit{playsSport}(Z,U) \rightarrow \textit{playsSport}(X,U)$. Our analysis suggests that capturing sets of such rules would require an octagon model with cross-coordinate comparisons, which we leave as a topic for future work.

\section{Learning Octagon Embeddings}\label{secLearningOctagonEmbeddings}
In our theoretical analysis, determining whether a triple $(e,r,f)$ is supported by an embedding was based on a discrete criterion: either $\mathbf{e} \oplus \mathbf{f}$ lies in $X_r$ or it does not. In practice, however, we need to learn regions with fuzzy boundaries, given that link prediction is typically treated as a ranking problem, rather than a classification problem. Continuous representations are typically also easier to learn than discrete structures. Let us consider a region $X_r=[O_1^r,\ldots,O_n^r]$, where $O_i^r = \mathsf{Octa}(x_i^-,\allowbreak x_i^+,\allowbreak,y_i^-,\allowbreak y_i^+,\allowbreak,u_i^-,\allowbreak u_i^+,\allowbreak,v_i^-,\allowbreak v_i^+,\allowbreak)$. Let us write $\mathbf{x}^- = (x_1^-,\ldots,x_n^-)$ and similar for $\mathbf{x}^+$, $\mathbf{y}^-$, $\mathbf{y}^+$, $\mathbf{u}^-$, $\mathbf{u}^+$, $\mathbf{v}^-$ and $\mathbf{v}^+$. Octagon embeddings impose the following four constraints on an entity pair $(e,f)$:
\begin{align*}
\mathbf{x}^- &\leq \mathbf{e} \leq  \mathbf{x}^+ &
\mathbf{y}^- &\leq \mathbf{f} \leq  \mathbf{y}^+ \\
\mathbf{u}^- &\leq \mathbf{f}-\mathbf{e} \leq  \mathbf{u}^+ &
\mathbf{v}^- &\leq \mathbf{e}+\mathbf{f} \leq  \mathbf{v}^+
\end{align*}
We will refer to the regions defined by these four constraints as \emph{bands}.
We can straightforwardly use the sigmoid function $\sigma$ to convert the constraints into soft scores. For the $u$-constraint we can use the following ``distance'' to the $u$-band:
\begin{align}\label{eqScoreUBand}
\text{dist}_u(e,r,f) &= \sigma\big(|(\mathbf{f} - \mathbf{e}) - \mathbf{u}_c| - \mathbf{u}_w\big)
\end{align}
where we write $\mathbf{u}_c = (\mathbf{u}^+ + \mathbf{u}^-)/2$ and $\mathbf{u}_w = (\mathbf{u}^+ - \mathbf{u}^-)/2$. 
The functions $\text{dist}_x$, $\text{dist}_y$ and $\text{dist}_v$ are defined analogously. Note that $\mathbf{e}\oplus\mathbf{f}\in X_r$ iff $\text{dist}_x(e,r,f)\geq 0.5$, $\text{dist}_y(e,r,f)\geq 0.5$, $\text{dist}_u(e,r,f)\geq 0.5$ and $\text{dist}_v(e,r,f)\geq 0.5$.
Furthermore note that BoxE and ExpressivE also convert interval constraints into soft scores. However, rather than using scores of the form \eqref{eqScoreUBand}, these models use a piecewise linear function of the distance to the centre of the interval. Although a different slope is used for points inside and outside the interval, we may question to what extent the resulting embeddings still define regions. While the regions defined by scoring functions of the form \eqref{eqScoreUBand} also have fuzzy boundaries, this ``fuzziness'' is mostly limited to the immediate vicinity of the boundary. 

We also consider a variant of the distance function \eqref{eqScoreUBand} which uses learnable attention weights $\mathbf{u}_\text{att}$, as follows:
\begin{equation}\label{eqWeightedDistanceU}
    \text{dist}'_u(e,r,f) = \mathbf{u}_\text{att} \odot \text{dist}_u(e,r,f)
\end{equation}
where we write $\odot$ for component-wise multiplication. Weighted variants of $\text{dist}_x$, $\text{dist}_y$ and $\text{dist}_v$ are defined analogously. The attention weights intuitively allow the model to ``forget'' certain constraints, e.g.\ by focusing only (or primarily) on the $u$ band in some coordinates.

As in BoxE and ExpressivE, coordinate-wise scores are aggregated to a scalar by taking their negated norm. We define $\text{dist}(e,r,f) = \text{dist}_x(e,r,f) \oplus \text{dist}_y(e,r,f) \oplus \text{dist}_u(e,r,f) \oplus \text{dist}_v(e,r,f)$ and score triples as follows (and similar for the weighted variant):
\begin{equation}\label{eqLogisticScore}
    s(e,r,f) = -\|\:\text{dist}(e,r,f)\:\|_p
\end{equation}
In the above expression, $p$ is a hyperparameter.
Learning occurs by minimizing the following margin loss function with self-adversarial negative sampling~\cite{DBLP:conf/iclr/SunDNT19}:
\begin{align*}
-\log\big(\sigma(\lambda - s(e,r,f))\big) - \sum_{i = 1}^{|N|} \alpha_i \log\big(\sigma(s(e_i,r,f_i) - \lambda)\big)
\end{align*}
where $\mathbf{\alpha} = \text{softmax}(s(e_1,r,f_1) \oplus \ldots \oplus s(e_n,r,f_n)$, the margin $\lambda$ is a hyperparameter, and $N = \{ (e_1,r,f_1),\allowbreak \ldots ,\allowbreak (e_n,r,f_n) \}$ is a set of randomly sampled negative examples.



\section{Experimental Results}
We consider several variants of our model. The full model is denoted by \textit{uvxy}, whereas \textit{ux} refers to a model in which $\text{dist}_v$ and $\text{dist}_y$ are not used, and similar for the other variants. For $uv$ and $uvxy$, we have used the variant with attention weights \eqref{eqWeightedDistanceU}. For the other configurations, attention weights were not used, as they were found to make little difference\footnote{An analysis of the impact of attention weights can be found in the appendix (Section~\ref{sec:ExpResults}).}. Note that our $u$ model is almost identical to TransE, except that an explicit width parameter is used, while \textit{uv} can be seen as a variant of ExpressivE without learnable scale factors. 

Table~\ref{tab:link-prediction} compares the performance of our model with TransE~\cite{DBLP:conf/nips/BordesUGWY13}, BoxE~\cite{DBLP:conf/nips/AbboudCLS20} and ExpressivE~\cite{DBLP:conf/iclr/0002S23} for link prediction, using the two most common benchmarks~\cite{DBLP:conf/acl-cvsc/ToutanovaC15,DBLP:conf/aaai/DettmersMS018}. 
As usual, performance is measured using Hits@$k$ and Mean Reciprocal Rank (MRR).
All results are for 1000-dimensional embeddings for FB15k-237 and 500-dimensional embeddings for WN18RR, following BoxE and ExpressivE's experimental setup. Other hyperparameters are specified in the appendix.

\begin{table}[]
    \centering
    \footnotesize
    \setlength\tabcolsep{2pt}
    \begin{tabular}{lcccccccc}
        \toprule
        & \multicolumn{4}{c}{FB15k-237} & \multicolumn{4}{c}{WN18RR} \\
        \cmidrule(lr){2-5}\cmidrule(lr){6-9}
        & H@1 & H@3 & H@10 & MRR & H@1 & H@3 & H@10 & MRR \\
        \midrule
        TransE & 22.3 & 37.2 & 53.1 & 33.2 & 01.3 & 40.1 & 52.9 & 22.3 \\
        BoxE & 23.8 & \textbf{37.4} & \textbf{53.8} & \textbf{33.7} & 40.0 & 47.2 & 54.1 & 45.1 \\
        ExpressivE & \textbf{24.3} & 36.6 & 51.2 & 33.3 & \textbf{46.4} & \textbf{52.2} & \textbf{59.7} & \textbf{50.8} \\
        \midrule
        
        $u$ & 23.1 & \underline{37.3} & \underline{53.2} & 33.2 & 01.6 & 39.9 & 51.5 & 22.0 \\
        $ux$ & 23.3 & 37.1 & 52.5 & 33.1 & 01.9 & 39.0 & 51.6 & 21.8 \\
        $uxy$ & 23.2 & 37.2 & 53.1 & 33.2 & 01.8 & 40.8 & 52.8 & 22.8 \\
        $uv^*$ & \underline{24.1} & 36.9 & 52.8 & \underline{33.6} & 43.6 & 48.5 & 52.9 & 46.9 \\
        $uvxy^*$ & \underline{24.1} & 36.7 & 51.7 & 33.2 & \underline{43.6} & \underline{49.2} & \underline{56.1} & \underline{47.9} \\
        \bottomrule
    \end{tabular}
    \caption{Link prediction performances of region based embedding models. Configurations with $^*$ use the variant with attention weights.}
    \label{tab:link-prediction}
\end{table}

\begin{figure}
    \centering
    \includegraphics[width=0.49\columnwidth]{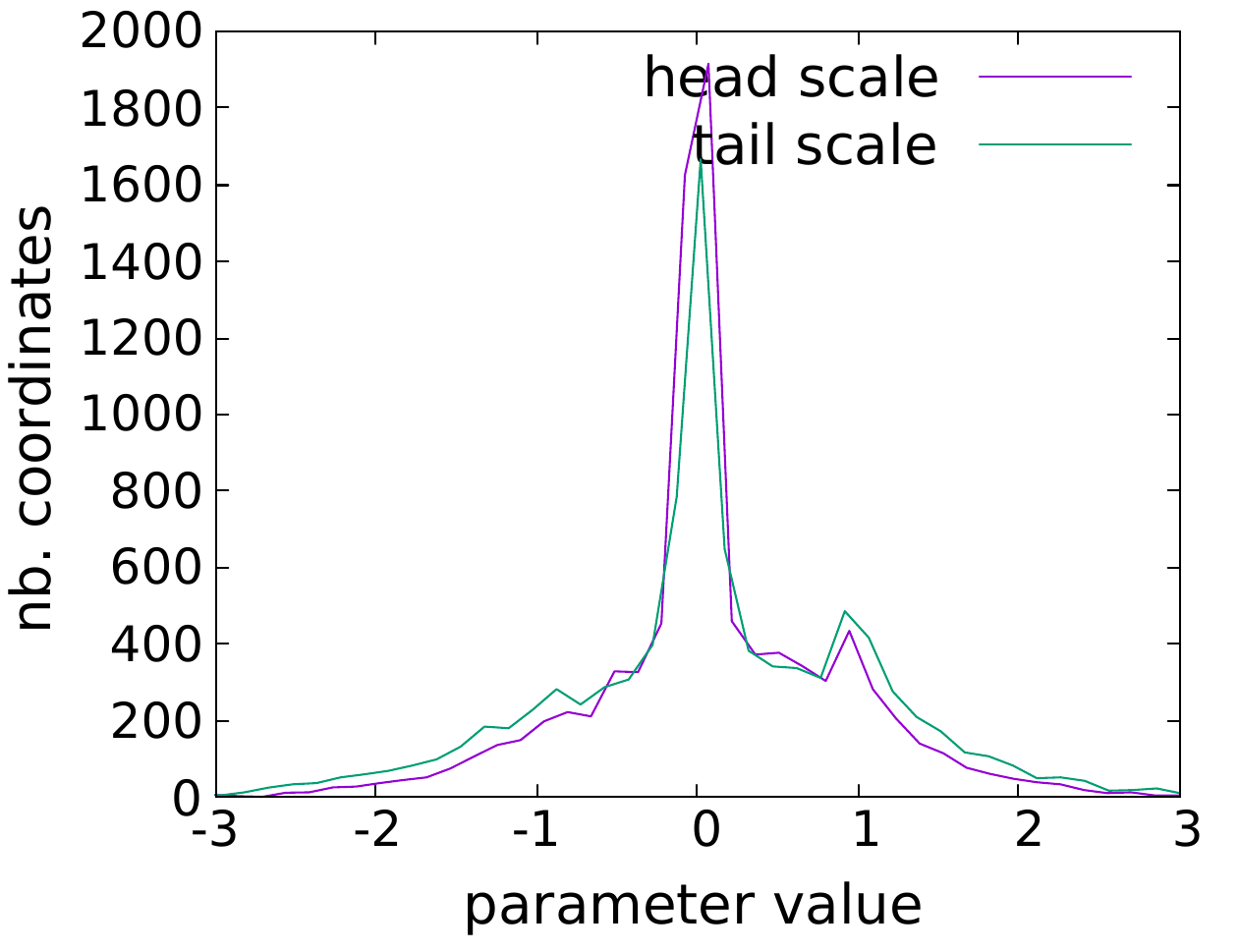}
    \includegraphics[width=0.49\columnwidth]{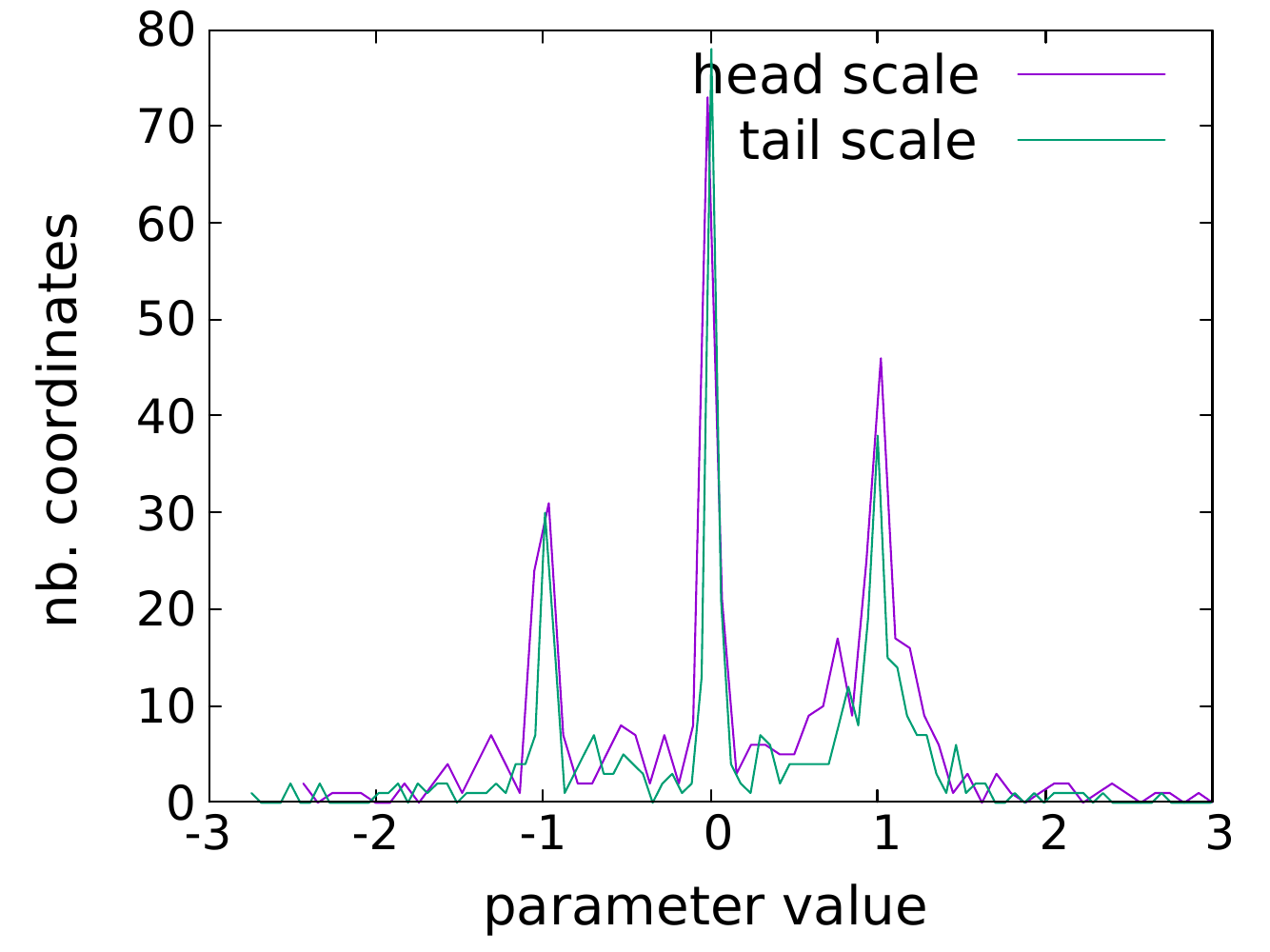}
    \caption{Distribution of scale factors in 40-dimensional ExpressivE embeddings for FB15k-237 (left) and WN18RR (right).}
    \label{fig:expressive-scale}
\end{figure}


The performance of $u$ is almost identical to TransE, which suggests that the addition of an explicit width parameter plays no significant role. Adding the $x$, $y$ and $v$ constraints does not bring any benefits on FB15k-237.
On WN18RR, however, introducing the $v$ constraint has a big impact on H@1, which goes from less than 1\% for $u$, $ux$ and $uxy$ to more than 40\% for $uv$ and $uvxy$. This result is in line with Proposition~\ref{propHexagonLimitation}, which states that hexagons cannot capture relations that are both symmetric and irreflexive: for most WN18RR relations, models without $v$ constraints incorrectly rank $(e,r,e)$ triples first\footnote{This result still holds without attention weights: on WN18RR, $uvxy$ achieves a H@1 of 39.6, H@3 of 45.9, H@10 of 50.7 and MRR of 43.8 without trainable attention weights.}. Including the $x$ and $y$ constraints also improves performances in the case of WN18RR.
Introducing attention weights, as in \eqref{eqWeightedDistanceU}, has a significant positive effect on the overall performance on WN18RR but not for FB15k-237. As Table~\ref{tab:link-prediction} shows, $uvxy$ outperforms BoxE on all metrics. Further analysis can be found in the appendix.

Octagon embeddings slightly underperform ExpressivE in most cases. We may wonder whether this reflects the inherent limitations of using fixed angles. In fact, when inspecting the scale factors in learned ExpressivE embedddings, we noticed that most are close to $-1$, $0$ or $1$ (see Figure~\ref{fig:expressive-scale}). In other words, most of the learned parallelograms are diamonds ($uv$), bands ($u$, $v$, $x$, $y$) or squares ($xy$), all of which can be represented in our octagon model. This suggests that the underperformance of the octagon model is rather  the result of overparameterisation. On the one hand, we have shown theoretically that all four constraints are needed to capture semantic dependencies, and this is also confirmed by our empirical results on WN18RR. On the other hand, not all constraints might be needed for all coordinates. The use of attention weights was inspired by this view, but the weights themselves of course also introduce further parameters.


\section{Related Work}


Knowledge graph embeddings have generally been classified in three broad categories: linear models, tensor decomposition models and neural models~\cite{DBLP:series/synthesis/2021Hogan}.
Linear models use a distance to score triples, after applying a linear transformation to entity or relation embeddings, e.g.\ a translation~\cite{DBLP:conf/nips/BordesUGWY13} or rotation~\cite{DBLP:conf/iclr/SunDNT19}. 
Region based models can essentially be viewed under the same umbrella, where the idea of computing distances between points is generalised to evaluating relationships between points and regions.
Tensor decomposition models, such as ComplEx~\cite{DBLP:journals/jmlr/TrouillonDGWRB17,DBLP:conf/icml/LacroixUO18}, TuckER~\cite{DBLP:conf/emnlp/BalazevicAH19} and QuatE~\cite{DBLP:conf/nips/0007TYL19}, view knowledge graphs as a three-dimensional tensor, which can be factorised into (low-rank) entity and relation embeddings. While these models tend to perform well, they often come with a significantly higher computational cost. Another important limitation is that they are far less interpretable in terms of how they capture semantic dependencies between relations. Tensor decomposition models also have important theoretical limitations when it comes to capturing rules~\cite{DBLP:conf/kr/Gutierrez-Basulto18}.
Neural models, 
including ConvE~\cite{DBLP:conf/aaai/DettmersMS018} and graph neural network based approaches \cite{DBLP:conf/nips/ZhuZXT21}, are even less interpretable than tensor decomposition models.



Knowledge graph embeddings have been combined with rules in various ways. For instance, it has been proposed to incorporate learned rules \cite{DBLP:conf/aaai/GuoWWWG18} or hard ontological constraints \cite{DBLP:conf/nips/AbboudCLS20} when learning knowledge graph embeddings. Conversely, learned embeddings have been exploited for implementing an efficient rule mining system \cite{DBLP:conf/ijcai/OmranWW18}. The ability of models to capture rules has also been evaluated indirectly, for instance by analysing their inductive generalisation capabilities \cite{DBLP:journals/jair/TrouillonGDB19} or their ability to capture intersections and compositions of relations for evaluating complex queries \cite{DBLP:conf/iclr/ArakelyanDMC21,DBLP:conf/aaai/WangCG23}.

Region based approaches are closely related to ontology embeddings. In particular, several models have recently been introduced for the Description Logic $\mathcal{EL}^{++}$, which features subsumption and composition of relations, as well as union and intersection of concepts. Concepts are typically embedded as hyperballs~\cite{DBLP:conf/aaaiss/MondalBM21} or hypercubes~\cite{DBLP:conf/semweb/XiongPTNS22}. The Box$^2$EL model~\cite{DBLP:journals/corr/abs-2301-11118} also represents relations as regions, based on BoxE, which was found to improve performance. Given the limitations of BoxE on modeling composition, an evaluation of more expressive region based models is likely to yield even better results, although
such an evaluation is yet to be performed.

\section{Conclusions}
The study of region based knowledge graph embedding essentially aims to identify the simplest models that are capable of capturing particular types of reasoning patterns. Despite the extensive work on knowledge graph embedding, there are still surprisingly many open questions on this front.
Octagons where shown to be capable of capturing arbitrary knowledge graphs, while a simpler model based on hexagons is not.
Octagons also have the desired property of being closed under intersection and composition, which is not the case for closely related models such as BoxE and ExpressivE.
We have furthermore shown that the octagon model is capable of modelling (particular) sets of composition rules, which is an important property that was not yet shown for existing region based models.
Our empirical results show that octagon embeddings slightly underperform ExpressivE 
and further work is needed to better understand whether 
this performance gap can be closed by changing how the octagons are learned.

\appendix

\section{Modelling Relations with Octagons}

\subsection{Parameterisation}
In general, we can strengthen the bounds of an octagon $\mathsf{Octa}(x^-,x^+,y^-,y^+,u^-,u^+,v^-,v^+)$ as follows:
\begin{align}
x^- &\leftarrow \max\left(x^-, v^- - y^+, y^- - u^+, \frac{v^- - u^+}{2}\right) \label{eqUpdateOctaParams1}\\
x^+ &\leftarrow \min\left(x^+, v^+ - y^-, y^+ - u^-, \frac{v^+ - u^-}{2}\right)\\
y^- &\leftarrow \max\left(y^-, u^- + x^-, v^- -x^+, \frac{u^- + v^-}{2} \right)\\
y^+ &\leftarrow \min\left(y^+, u^+ + x^+, v^+ -x^-, \frac{u^+ + v^+}{2} \right)\\
u^- &\leftarrow \max\left(u^-, y^- - x^+,v^- - 2x^+, 2y^- - v^+ \right)\\
u^+ &\leftarrow \min\left(u^+, y^+ - x^-,v^+ - 2x^-, 2y^+ - v^- \right)\\
v^- &\leftarrow \max\left(v^-, x^- + y^-, u^- + 2x^-, 2y^- - u^+ \right)\\
v^+ &\leftarrow \min\left(v^+, x^+ + y^+, u^+ + 2x^+, 2y^+ - u^- \right)\label{eqUpdateOctaParams8}
\end{align}
To see why these update rules are sound, let us consider for instance the first rule. The reason why we can strengthen $x^-$ to $v^-y^+$, in case $x^+< v^- - y^+$ is because $v^-$ is a lower bound on $x+y$ and $y^+$ is an upper bound on $y$, hence $v^- - y^+$ is a lower bound on $(x+y)-y = x$. The other cases are all analogous. It turns out that applying the update rules \eqref{eqUpdateOctaParams1}--\eqref{eqUpdateOctaParams8} once is sufficient for obtaining a normalised set of parameters, as captured by the following proposition.

\begin{proposition}\label{propNormalisation}
Consider the following set of parameters:
\begin{align*}
x_1^- =&\max\left(x^-, v^- - y^+, y^- - u^+, \frac{v^- - u^+}{2}\right)\\
x_1^+ =& \min\left(x^+, v^+ - y^-, y^+ - u^-, \frac{v^+ - u^-}{2}\right)\\
y_1^- =&\max\left(y^-, u^- + x^-, v^- -x^+, \frac{u^- + v^-}{2} \right)\\
y_1^+ =&\min\left(y^+, u^+ + x^+, v^+ -x^-, \frac{u^+ + v^+}{2} \right)\\
u_1^- =&\max\left(u^-, y^- - x^+,v^- - 2x^+, 2y^- - v^+ \right)\\
u_1^+ =&\min\left(u^+, y^+ - x^-,v^+ - 2x^-, 2y^+ - v^- \right)\\
v_1^- =&\max\left(v^-, x^- + y^-, u^- + 2x^-, 2y^- - u^+ \right)\\
v_1^+ =&\min\left(v^+, x^+ + y^+, u^+ + 2x^+, 2y^+ - u^- \right)
\end{align*}
Then $\mathsf{Octa}(x^-,x^+,y^-,y^+,u^-,u^+,v^-,v^+)=\mathsf{Octa}(x_1^-,\allowbreak x_1^+,\allowbreak y_1^-,\allowbreak y_1^+,\allowbreak u_1^-,\allowbreak u_1^+,\allowbreak v_1^-,\allowbreak v_1^+)$. Furthermore, if we have $x_1^->x_1^+$, $y_1^->y_1^+$, $u_1^- > u_1^+$ or $v_1^- > v_1^+$ then $\mathsf{Octa}(x^-,\allowbreak x^+,\allowbreak y^-,\allowbreak y^+,\allowbreak u^-,\allowbreak u^+,\allowbreak v^-,\allowbreak v^+)=\emptyset$. Otherwise, the set of parameters $(x_1^-,\allowbreak x_1^+,\allowbreak y_1^-,\allowbreak y_1^+,\allowbreak u_1^-,\allowbreak u_1^+,\allowbreak v_1^-,\allowbreak v_1^+)$ is normalised.
\end{proposition}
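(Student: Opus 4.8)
The statement splits into three parts: the identity $\mathsf{Octa}(x^-,\ldots,v^+)=\mathsf{Octa}(x_1^-,\ldots,v_1^+)$; the emptiness criterion; and the claim that $(x_1^-,\ldots,v_1^+)$ is normalised whenever none of the four intervals $[x_1^-,x_1^+]$, $[y_1^-,y_1^+]$, $[u_1^-,u_1^+]$, $[v_1^-,v_1^+]$ is empty. The first part is the easy one, and I would dispatch it exactly along the lines sketched in the text preceding the proposition: every expression occurring in the eight maxima and minima is a bound implied by the four defining constraints of $\mathsf{Octa}(x^-,\ldots,v^+)$. These implied bounds come in two flavours, those that combine one single-coordinate bound with one of the two diagonal bounds (e.g.\ $x=(x+y)-y$ gives $x\ge v^--y^+$, and $x=y-(y-x)$ gives $x\ge y^--u^+$) and those that average the two diagonal bounds (e.g.\ $2x=(x+y)+(x-y)$ gives $2x\ge v^--u^+$ using $y-x\le u^+$). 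Hence every point of $\mathsf{Octa}(x^-,\ldots,v^+)$ satisfies all eight strengthened constraints, so $\mathsf{Octa}(x^-,\ldots,v^+)\subseteq\mathsf{Octa}(x_1^-,\ldots,v_1^+)$, and the reverse inclusion is immediate because each new bound is at least as tight as the corresponding old one. The emptiness criterion then follows at once: if, say, $x_1^->x_1^+$, the constraint $x_1^-\le x\le x_1^+$ occurring in $\mathsf{Octa}(x_1^-,\ldots,v_1^+)$ is unsatisfiable, so this octagon, and therefore $\mathsf{Octa}(x^-,\ldots,v^+)$, is empty.

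The substance of the proof is the normalisation claim, and here I would first make two reductions. The updates are monotone (lower bounds only grow, upper bounds only shrink), so non-emptiness of the four intervals yields the chains $x^-\le x_1^-\le x_1^+\le x^+$ and likewise for $y$, $u$, $v$; combining these with bounds read directly off the definitions, such as $x^-+y^-\le v_1^-\le v_1^+\le v^+$, gives the working inequalities used throughout. Second, the affine maps $(x,y)\mapsto(y,x)$, $(x,y)\mapsto(-x,-y)$ and $(x,y)\mapsto(x,-y)$ send octagons to octagons, permute the eight bounding lines, and — a short computation shows — commute with the update procedure in the sense that the eight maxima/minima are merely relabelled. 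Under the group they generate, the eight bounds form two orbits, $\{x^\pm,y^\pm\}$ and $\{u^\pm,v^\pm\}$, so it suffices to prove that the bounds $x_1^-$ and $v_1^-$ are tight, i.e.\ cannot be increased without changing the octagon.

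To show $x_1^-$ is tight I would exhibit a point of the octagon whose first coordinate equals $x_1^-$; such a point simultaneously certifies that the octagon is non-empty. Fixing $x=x_1^-$, the admissible values of $y$ form the interval with endpoints $\max(y_1^-,\,x_1^-+u_1^-,\,v_1^--x_1^-)$ and $\min(y_1^+,\,x_1^-+u_1^+,\,v_1^+-x_1^-)$, so it remains to verify the nine ``lower $\le$ upper'' inequalities. Each of these reduces to bounding a difference $a-b$ of two of the defining expressions above by a fixed quantity (such as $2x_1^-$ or $x_1^-+u_1^+$); and since $2x_1^-\ge t_1+t_2$ for any two of the four expressions $t_1,t_2$ defining $x_1^-$, one checks $a-b\le t_1+t_2$ for an appropriate choice of $t_1,t_2$ depending on which terms realise the relevant maxima and minima, chaining through the working inequalities. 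The diagonal bound $v_1^-$ is treated symmetrically by fixing $x+y=v_1^-$ and solving for $x$. The main obstacle is precisely this finite but lengthy bookkeeping: each individual inequality is elementary, but there are up to sixteen sub-cases per inequality (one for each pair of realising terms), and the argument only stays tractable because the monotonicity chains and the symmetry reduction cut the eight bounds down to two representatives.
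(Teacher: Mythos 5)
Your proposal is correct and follows essentially the same route as the paper's proof: the identity and the emptiness criterion are dispatched via the soundness of the implied bounds, and normalisation is established by exhibiting a witness point on each bounding line, verifying one or two representatives by case analysis on which terms realise the maxima/minima, and obtaining the rest by symmetry. The only cosmetic difference is that the paper lists all eight candidate vertices and fully verifies the single point $(x_1^-,v_1^--x_1^-)$ (which happens to witness both orbit representatives at once), whereas you show non-emptiness of the slice $x=x_1^-$ and make the symmetry group of affine maps explicit.
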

\begin{proof}
The fact that $\mathsf{Octa}(x^-,x^+,y^-,y^+,u^-,u^+,v^-,v^+)=\emptyset$ as soon as $x_1^- > x_1^+$, $y_1^- > y_1^+$, $u_1^- > u_1^+$ or $v_1^- > v_1^+$ follows immediately from the soundness of the update rules \eqref{eqUpdateOctaParams1}--\eqref{eqUpdateOctaParams8}. Let us therefore assume $x_1^- \leq x_1^+$, $y_1^- \leq y_1^+$, $u_1^- \leq u_1^+$ or $v_1^- \leq v_1^+$. Note that this also entails $x^-\leq x^+, y^-\leq y^+, u^-\leq u^+, v^-\leq v^+$.
We show that the following points belong to $\mathsf{Octa}(x^-,x^+,y^-,y^+,u^-,u^+,v^-,v^+)$:
\begin{align*}
&(x_1^-,v_1^- - x_1^-)\\
&(x_1^-,u_1^+ + x_1^-)\\
&(y_1^+ - u_1^+,y_1^+)\\
&(v_1^+ - y_1^+,y_1^+)\\
&(x_1^+,v_1^+-x_1^+)\\
&(x_1^+,u_1^- + x_1^+)\\
&(y_1^- - u_1^-,y_1^-)\\
&(v_1^- - y_1^-,y_1^-)
\end{align*}
This is sufficient to show the proposition, because each of the lower and upper bounds is witnessed for one of these points. For instance, the fact that $(x_1^-,v_1^- - x_1^-)$ belongs to the octagon implies that the lower bound $x_1^-$ is tight, while the fact that $(y_1^+ - u_1^+,y_1^+)$ belongs to the octagon implies that the upper bound $u_1^+$ is tight.

To show that $(x_1^-,v_1^- - x_1^-)$ belongs to the octagon, we have to verify that the following inequalities are satisfied:
\begin{align*}
&x^- \leq x_1^- \leq x^+\\
&y^- \leq v_1^- - x_1^- \leq y^+\\
&u^- \leq v_1^- - 2 x_1^- \leq u^+\\
&v^- \leq v_1^- \leq v^+
\end{align*}
We find:
\begin{itemize}
\item We trivially have $x^- \leq x_1^-$.
\item We obtain $x_1^-\leq x^+$ from the assumption that $x_1^-\leq x_1^+$ and the fact that $x_1^+\leq x^+$.
\item  We now show $y^- \leq v_1^- - x_1^-$:
\begin{itemize}
\item If $x_1^-=x^-$ we have $v_1^- - x_1^- = v_1^- - x^- \geq x^-+y^- - x^- = y^-$.
\item If $x_1= v^- - y^+$ we have $v_1^- - x_1^- \geq v^- - (v^- - y^+) = y^+ \geq y^-$.
\item If $x_1 = y^--u^+$ we have $v_1^- - x_1^- \geq (2y^--u^+)- (y^--u^+) = y^-$.
\item If $x_1 = \frac{v^--u^+}{2}$ we know that $\frac{v^--u^+}{2}\geq y^--u^+$ and thus $\frac{v^-+u^+}{2}\geq y^-$. Using this inequality, we find $v_1^- - x_1^- \geq v^- - \frac{v^--u^+}{2} = \frac{v^-+u^+}{2}\geq y^-$.
\end{itemize}
\item We show $v_1^- - x_1^- \leq y^+$:
\begin{itemize}
\item If $v_1^-=v^-$ we have $v_1^- - x_1^- \leq v^- - (v^--y^+)\leq y^+$.
\item If $v_1^-=x^-+y^-$ we have  $v_1^- - x_1^- \leq x^-+y^- - x^- = y^- \leq y^+$
\item If $v_1^-=u^-+2x^-$ we have $v_1^- - x_1^- \leq u^-+2x^--x^- \leq u^-+x^-$, where $u^-+x^-\leq y^+$ follows from the assumption that $y_1-\leq y_1^+$.
\item If $v_1^- = 2y^--u^+$ we have $v_1^- - x_1^- \leq 2y^--u^+ - (y^--u^+)=y^- \leq y^+$.
\end{itemize}
\item We show $v_1^- - 2x_1^- \geq u^-$:
\begin{itemize}
\item If $x_1^-=x^-$ we have $v_1^- - 2x_1^- \geq (u^- + 2x^-) - 2x^- = u^-$. 
\item If $x_1= v^- - y^+$ we have $v_1^- - 2x_1^- \geq v^- - 2(v^- -y^+) = 2y^+-v^- \geq u_1^+ \geq u_1^-\geq u^-$.
\item If $x_1 = y^--u^+$ we have $v_1^- - 2x_1^- \geq 2y^- - u^+ - 2(y^- - u^+) = u^+ \geq u^-$. 
\item If $x_1 = \frac{v^--u^+}{2}$ we have $v_1^- - 2x_1^- \geq v^- - (v^--u^+) = u^+ \geq u^-$.
\end{itemize}
\item We show $v_1^- - 2x_1^- \leq u^+$:
\begin{itemize}
\item If $v_1^-=v^-$ we have $v_1^- - 2x_1^- \leq v^- 2(\frac{v^--u^+}{2}) = u^+$.
\item If $v_1^-=x^-+y^-$ we know that $x^-+y^- \geq 2y^- -u^+$ and thus $u^+\geq y^--x^-$. Using this inequality we find $v_1^- - 2x_1^- \leq =x^-+y^- -2x^- = y^--x^- \leq u^+$.
\item If $v_1^-=u^-+2x^-$ we have $v_1^- - 2x_1^- \leq u^-+2x^- - 2x^- = u^- \leq u^+$.
\item If $v_1^- = 2y^--u^+$ we have $v_1^- - 2x_1^- \leq 2y^--u^+ - 2(y^- -u^+) = u^+$.
\end{itemize}
\item We trivially have $v^-\leq v_1^-$.
\item We have $v_1^-\leq v+$ because of the assumption that $v_1^-\leq v_1^+$.
\end{itemize}
The result for the other seven points follows by symmetry.
\end{proof}

\subsection{Composing Octagons}
\begin{lemma}\label{lemmaVerticesOctagon}
Let $O=\mathsf{Octa}(x^-,x^+,y^-,y^+,u^-,u^+,v^-,v^+)$ be an octagon with normalised parameters. The vertices of this octagon are as follows:
\begin{align}
&(x^-,v^- - x^-)\label{eqCoordVertices1}\\
&(x^-,u^+ + x^-)\\
&(y^+ - u^+,y^+)\\
&(v^+ - y^+,y^+)\\
&(x^+,v^+-x^+)\\
&(x^+,u^- + x^+)\\
&(y^- - u^-,y^-)\\
&(v^- - y^-,y^-)\label{eqCoordVertices8}
\end{align}
\end{lemma}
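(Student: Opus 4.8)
The plan is to treat $O$ as the intersection of the eight half-planes defining \eqref{eqOctagons} (assuming the parameters are finite, so that $O$ is bounded) and to pin down its extreme points. The first step requires no new work: the eight points \eqref{eqCoordVertices1}--\eqref{eqCoordVertices8} are exactly the eight points already shown to lie in $\mathsf{Octa}(x^-,x^+,y^-,y^+,u^-,u^+,v^-,v^+)$ in the proof of Proposition \ref{propNormalisation}, because for already-normalised parameters the quantities $x_1^-,\ldots,v_1^+$ there coincide with $x^-,\ldots,v^+$. So all eight listed points belong to $O$.

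Second, I would show each listed point is a vertex. Each one lies on the boundary lines of exactly two of the eight defining inequalities, and those two lines have linearly independent normals: $(x^-,v^--x^-)$ lies on $x=x^-$ and $x+y=v^-$; $(x^-,u^++x^-)$ lies on $x=x^-$ and $y-x=u^+$; and so on cyclically through the list. A point of a convex set that is the unique point of the set on two lines with independent normals cannot be a proper convex combination of two distinct points of the set (each would satisfy both linear equations with equality and hence equal the point), so every listed point is an extreme point of $O$. Reading the points in the listed order, consecutive points $P_i,P_{i+1}$ share one of these lines, and over the eight consecutive (cyclic) pairs the lines $x=x^-,\ y-x=u^+,\ y=y^+,\ x+y=v^+,\ x=x^+,\ y-x=u^-,\ y=y^-,\ x+y=v^-$ occur each exactly once; since their outward normals appear in monotone angular order ($45^\circ$ apart), $P_1,\ldots,P_8$ are in convex position and in convex-cyclic order.

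Third, I would rule out other vertices by proving $O=\operatorname{conv}\{P_1,\ldots,P_8\}$. The inclusion ``$\supseteq$'' is immediate from convexity of $O$. For ``$\subseteq$'', let $Q$ be the convex polygon with vertices $P_1,\ldots,P_8$ in the listed order; by the second step its edges lie on the boundary lines of the eight defining half-planes of $O$, one per half-plane, and $Q\subseteq O$ lies on the correct side of each, so the half-plane cut out by each edge of $Q$ is the corresponding defining half-plane of $O$. Since a bounded convex polygon equals the intersection of the half-planes of its edges, $Q=O$, so every vertex of $O$ is one of the $P_i$; together with the second step this gives the lemma. The main obstacle I anticipate is the degenerate octagons --- those with fewer than eight genuine vertices, where some $P_i$ coincide, $Q$ degenerates, and $O$ may even be a segment or a point. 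There the ``polygon $=$ intersection of edge half-planes'' identity needs care; the cleanest remedy is to argue directly that the update rules \eqref{eqUpdateOctaParams1}--\eqref{eqUpdateOctaParams8} force any point of $O$ lying on two non-parallel bounding lines to coincide with one of the $P_i$ --- for instance, if $(x^-,y^+)\in O$ then the defining inequality $y^+-x^-\le u^+$ together with the normalisation bound $u^+\le y^+-x^-$ (from the update rule for $u^+$) forces $u^+=y^+-x^-$, so $(x^-,y^+)=P_2=P_3$.
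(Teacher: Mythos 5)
Your proposal is correct; note that the paper itself offers no argument here (its proof reads ``Trivial''), so there is nothing to diverge from, and your write-up is a legitimate filling-in of the details. Two remarks. First, your step 1 is exactly right: for normalised parameters the update rules \eqref{eqUpdateOctaParams1}--\eqref{eqUpdateOctaParams8} act as the identity, so the eight membership checks carried out in the proof of Proposition \ref{propNormalisation} apply verbatim and put the points \eqref{eqCoordVertices1}--\eqref{eqCoordVertices8} inside $O$; combined with your observation that each listed point saturates two constraints with independent normals, this gives extremality cleanly. Second, the ``remedy'' you sketch at the end for degenerate octagons is in fact the cleanest complete proof of the converse inclusion, and it makes the convex-position and edge--half-plane machinery of your third step unnecessary: since $O$ is a planar polyhedron cut out by eight half-planes, every vertex of $O$ has at least two active constraints with linearly independent normals, i.e.\ lies on two non-parallel bounding lines; the eight constraints fall into four parallel classes, so there are finitely many candidate intersection points, and for each of the sixteen candidates not already in your list, membership in $O$ supplies one inequality while normalisation supplies its reverse (e.g.\ $(x^-,y^-)\in O$ gives $x^-+y^-\geq v^-$ while normalisation gives $v^-\geq x^-+y^-$, forcing the point to equal \eqref{eqCoordVertices1}), collapsing the candidate onto one of the listed points. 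Running that short case check uniformly over all pairs handles bounded, degenerate and coincident-vertex cases at once, so I would promote it from a fallback to the main argument.
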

\begin{proof}
Trivial.
\end{proof}

\begin{proposition}\label{propCharacterisationComposition}
Let $O_1 = \mathsf{Octa}(x_1^-,\allowbreak x_1^+,\allowbreak y_1^-,\allowbreak y_1^+,\allowbreak u_1^-,\allowbreak u_1^+,\allowbreak v_1^-,\allowbreak v_1^+)$ and $O_2 = \mathsf{Octa}(x_2^-,\allowbreak x_2^+,\allowbreak y_2^-,\allowbreak y_2^+,\allowbreak u_2^-,\allowbreak u_2^+,\allowbreak v_2^-,\allowbreak v_2^+)$ be non-empty octagons with normalised parameters. Then $O_1\diamond O_2 = \mathsf{Octa}(x_3^-,\allowbreak x_3^+,\allowbreak y_3^-,\allowbreak y_3^+,\allowbreak u_3^-,\allowbreak u_3^+,\allowbreak v_3^-,\allowbreak v_3^+)$, where:
\begin{align}
x_3^- &= \max(x_1^-, x_2^- - u_1^+, v_1^- - x_2^+)\label{eqComposition1}\\
x_3^+ &= \min(x_1^+, x_2^+ - u_1^-,v_1^+ - x_2^-)\label{eqComposition2}\\
y_3^- &= \max(y_2^-, u_2^-+y_1^-, v_2^- - y_1^+)\\
y_3^+ &= \min(y_2^+,u_2^+ + y_1^+,v_2^+-y_1^-)\\
u_3^- &= \max(y_2^- - x_1^+, u_2^- + u_1^-, v_2^- - v_1^+)\\
u_3^+ &= \min(y_2^+ - x_1^-,u_2^+ + u_1^+, v_2^+ - v_1^-)\\
v_3^- &= \max(x_1^- + y_2^-, u_2^-+v_1^-,v_2^- -u_1^+)\\
v_3^+ &= \min(x_1^+ + y_2^+, u_2^+ + v_1^+,v_2^+ - u_1^-)\label{eqComposition8}
\end{align}
\end{proposition}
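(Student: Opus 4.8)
The plan is to eliminate the intermediate coordinate by Fourier--Motzkin. Writing out the eight defining inequalities of $O_1$ and of $O_2$, a pair $(x,z)$ lies in $O_1\diamond O_2$ iff $x_1^-\le x\le x_1^+$, $y_2^-\le z\le y_2^+$, and there is a $y$ that is at least every element of $L=\{y_1^-,\,x+u_1^-,\,v_1^- - x,\,x_2^-,\,z-u_2^+,\,v_2^- - z\}$ and at most every element of $U=\{y_1^+,\,x+u_1^+,\,v_1^+ - x,\,x_2^+,\,z-u_2^-,\,v_2^+ - z\}$; such a $y$ exists iff $\ell\le u$ for all $\ell\in L,u\in U$, a system of $36$ scalar inequalities. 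Each of these, like the two outer bounds, is either a constant condition (no occurrence of $x$ or $z$) or bounds one of $x$, $z$, $z-x$, $x+z$ from a single side, so the feasible region is automatically of the form $\mathsf{Octa}(\cdot)$; the point is to show its parameters are exactly $x_3^-,\ldots,v_3^+$, which I would do by proving the two inclusions.

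The inclusion $O_1\diamond O_2\subseteq\mathsf{Octa}(x_3^-,\ldots,v_3^+)$ is routine: fixing $(x,z)$ with a witness $y$, each of the eight bounds follows from a one- or two-step chain --- e.g.\ $x\ge x_2^- - u_1^+$ because $y\ge x_2^-$ and $x\ge y-u_1^+$, $x\ge v_1^- - x_2^+$ because $x\ge v_1^- - y$ and $y\le x_2^+$, and the $u_3$- and $v_3$-bounds by combining one bound from $O_1$ with one from $O_2$ --- so this direction is bookkeeping over eight cases.

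The substance is the converse, $\mathsf{Octa}(x_3^-,\ldots,v_3^+)\subseteq O_1\diamond O_2$: given $(x,z)$ satisfying the eight octagon constraints, one must exhibit $y$, i.e.\ verify all $36$ pairs. Sixteen of them are just rearrangements of the octagon's defining constraints (for instance the pair $x+u_1^-\le x_2^+$ follows from $x\le x_3^+$, and $v_2^- - z\le x+u_1^+$ follows from $x+z\ge v_3^-$) and hence hold by assumption. Eighteen further pairs are ``internal'' to $O_1$ or to $O_2$ ---such as $v_1^- - x\le x+u_1^+$, i.e.\ $x\ge (v_1^- - u_1^+)/2$, or $y_1^-\le x+u_1^+$, i.e.\ $x\ge y_1^- - u_1^+$--- and follow from the normalisation identities \eqref{eqUpdateOctaParams1}--\eqref{eqUpdateOctaParams8} for $O_1$ (resp.\ $O_2$) together with $x\in[x_1^-,x_1^+]$ and $z\in[y_2^-,y_2^+]$, which are implied by the inclusions $[x_3^-,x_3^+]\subseteq[x_1^-,x_1^+]$ and $[y_3^-,y_3^+]\subseteq[y_2^-,y_2^+]$.

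The hard part is the remaining two pairs, $y_1^-\le x_2^+$ and $x_2^-\le y_1^+$: they contain neither $x$ nor $z$, so they are not among the octagon's defining inequalities, and they say precisely that the second-coordinate projection of $O_1$ meets the first-coordinate projection of $O_2$ --- the condition for a witness $y$ to exist at all. I would handle them by a case split on which term attains each of $x_3^-$, $x_3^+$, $u_3^\pm$ and $v_3^\pm$, feeding in the tight relations that hold for any non-empty normalised octagon, read off from the vertex list of Lemma~\ref{lemmaVerticesOctagon} (e.g.\ $v_1^-\le x_1^+ + y_1^-$ and $y_1^-\le v_1^- - x_1^-$); this is the step where the non-emptiness hypotheses on $O_1$ and $O_2$ are genuinely used, and the degenerate configurations in which these two inequalities fail, so that $O_1\diamond O_2=\emptyset$, have to be treated apart. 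Once $\max L\le\min U$ is established, any $y$ in that interval witnesses $(x,z)\in O_1\diamond O_2$, completing the inclusion.
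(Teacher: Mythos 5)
Your decomposition is correct and, for the reverse inclusion, takes a genuinely different route from the paper's: the paper normalises the target parameters $(x_3^-,\ldots,v_3^+)$, uses Lemma~\ref{lemmaVerticesOctagon} (together with the convexity of $O_1\diamond O_2$, a linear projection of a polytope) to reduce the inclusion to eight point-membership checks, and then verifies the resulting feasibility conditions \eqref{eqCompositionGurobi1}--\eqref{eqCompositionGurobi3} with an LP solver rather than by hand. Your Fourier--Motzkin bookkeeping --- four outer bounds plus $36$ pairwise conditions, of which $18$ are internal to one octagon and discharged by normalisation, $16$ reproduce the non-redundant terms of $x_3^{\pm},y_3^{\pm},u_3^{\pm},v_3^{\pm}$, and $2$ are the constant conditions $y_1^-\leq x_2^+$ and $x_2^-\leq y_1^+$ --- is accurate and considerably more transparent than the paper's computer-assisted step.

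However, the step you flag as ``the hard part'' cannot be completed, and this is a genuine gap: the two constant conditions do \emph{not} follow from non-emptiness and normalisation of $O_1,O_2$ together with consistency of the eight target bounds. Take $O_1=[0,10]\times[0,1]=\mathsf{Octa}(0,10,0,1,-10,1,0,11)$ and $O_2=[2,3]\times[0,10]=\mathsf{Octa}(2,3,0,10,-3,8,2,13)$: both are non-empty with normalised parameters, and $O_1\diamond O_2=\emptyset$ because a witness $y$ would need $y\leq y_1^+=1$ and $y\geq x_2^-=2$. Yet the stated formulas give $\mathsf{Octa}(1,9,1,9,-9,9,1,19)$, which contains $(5,5)$. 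So no case split on which terms attain the extrema can rescue the argument --- the proposition itself fails in this configuration, and the correct statement needs the extra hypothesis $[y_1^-,y_1^+]\cap[x_2^-,x_2^+]\neq\emptyset$ (equivalently $O_1\diamond O_2\neq\emptyset$), under which your plan does go through, since $\max L\leq\min U$ then reduces to the $34$ pairs you have already discharged. Your analysis in fact exposes a flaw in the paper's own proof: in the example above, condition \eqref{eqCompositionGurobi3} is violated at the vertex $(1,1)$ of the normalised target octagon, so the reported Gurobi verification must have implicitly assumed the overlap condition.
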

\begin{proof}
It is straightforward to verify that $O_1\diamond O_2 \subseteq O_3$ holds. For instance, if $(x,z)\in O_1\diamond O_2$, by definition there exists some $y$ such that $(x,y)\in O_1$ and $(y,z)\in O_2$. We thus have:
\begin{itemize}
\item $x\geq x_1^-$, since $(x,y)\in O_1$;
\item $y\geq x_2^-$, since $(y,z)\in O_2$, and $y-x\leq u_1^+$, since $(x,y)\in O_1$, and thus also $x\geq x_2^- - u_1^+$;
\item $x+y\geq v_1^-$, since $(x,y)\in O_1$, and $y\leq x_2^+$, since $(y,z)\in O_2$, and thus also  $x\geq v_1^- - x_2^+$
\end{itemize}
Together, we have thus found that $x \geq \max(x_1^-,x_1^-, x_2^- - u_1^+, v_1^- - x_2^+)$, which corresponds to the bound \eqref{eqComposition1}. We can similarly show that the  bounds \eqref{eqComposition2}--\eqref{eqComposition8} are satisfied for any $(x,z)\in O_1\diamond O_2$, from which it follows that $O_1\diamond O_2 \subseteq O_3$.

Now we show that the equality holds. Let $(x_4^-,x_4^+,y_4^-,y_4^+,u_4^-,u_4^+,v_4^-,v_4^+)$ be the normalisation of the parameters $(x_3^-,x_3^+,y_3^-,y_3^+,u_3^-,u_3^+,v_3^-,v_3^+)$, computed as in Proposition \ref{propNormalisation}. If $x_4^->x_4^+$, $y_4^->y_4^+$, $u_4^->u_4^+$ or $v_4^->v_4^+$ we have that $O_3=\mathsf{Octa}(x_4^-,x_4^+,y_4^-,y_4^+,u_4^-,u_4^+,v_4^-,v_4^+)=\emptyset$. Since we have already established that $O_1\diamond O_2 \subseteq O_3$, we then also have $O_1\diamond O_2=\emptyset$ and thus the stated result holds. Let us therefore assume throughout the remainder of this proof that $x_4^- \leq x_4^+$, $y_4^- \leq y_4^+$, $u_4^- \leq u_4^+$ and $v_4^- \leq v_4^+$. Because of Lemma \ref{lemmaVerticesOctagon}, to complete the proof, it is sufficient to show that the following points belong to $O_1\diamond O_2$:
\begin{align*}
&(x_4^-,v_4^- - x_4^-)\\
&(x_4^-,u_4^+ + x_4^-)\\
&(y_4^+ - u_4^+,y_4^+)\\
&(v_4^+ - y_4^+,y_4^+)\\
&(x_4^+,v_4^+-x_4^+)\\
&(x_4^+,u_4^- + x_4^+)\\
&(y_4^- - u_4^-,y_4^-)\\
&(v_4^- - y_4^-,y_4^-)
\end{align*}
For each such a point $(x^*,z^*)$ we need to show that there exists some $y^*\in\mathbb{R}$ such that $(x^*,y^*)\in O_1$ and $(y^*,z^*)\in O_2$. This is the case iff the following conditions are satisfied:
\begin{align}
x_1^- &\leq x^* \leq x_1^+\label{eqCompositionGurobi1}\\
y_2^- &\leq z^* \leq y_2^+\label{eqCompositionGurobi2}
\end{align}
and
\begin{align}
&\max(y_1^-, u_1^- + x^*, v_1^- - x^*, x_2^-, z^*-u_2^+, v_2^- - z^*)\notag\\
&\leq \min(y_1^+, u_1^+ + x^*, v_1^+-x^*, x_2^+, z^*-u_2^-, v_2^+-z^* ) \label{eqCompositionGurobi3}
\end{align}
We used Gurobi\footnote{\url{https://www.gurobi.com}} to verify that the conditions \eqref{eqCompositionGurobi1}--\eqref{eqCompositionGurobi3} are indeed necessarily satisfied for each of the vertices $(x^*,z^*)$.
\end{proof}

\subsection{Characterising Self-Compositions}\label{secSelfComposition}
Rather than characterising transitivity directly, we first study a more general problem. For an octagon $O$, we define $O^{(k)} = O^{k-1}\diamond O$, for $k\geq 2$, and $O^{(1)}=O=\mathsf{Octa}(x^-,\allowbreak x^+,\allowbreak y^-,\allowbreak y^+,\allowbreak u^-,\allowbreak u^+,\allowbreak v^-,\allowbreak v^+)$. Assume that the parameters of $O$ are normalised. In other words, $O^{(k)}$ is the octagon we obtain if compose $O$ with itself $k-1$ times. We now analyse the nature of the resulting octagons. Among others, this will allow us to characterise under which conditions the relation captured by $O$ is \emph{transitive}. Let us use subscripts to denote the parameters of $O^{(l)}$, i.e.\ we write $O^{(l)} = \mathsf{Octa}(x_l^-,\allowbreak x_l^+,\allowbreak y_l^-,\allowbreak y_l^+,\allowbreak u_l^-,\allowbreak u_l^+,\allowbreak v_l^-,\allowbreak v_l^+)$. Note that:
\begin{align}
x^-_{l+1} &= \max(x_l^-, x^- - u_l^+, v_l^- - x^+)\label{eqSelfComposition1}\\
x^+_{l+1} &= \min(x_l^+, x^+ - u_l^-, v_l^+ - x^-)\\
y^-_{l+1} &= \max(y^-, u^- + y_l^-, v^- - y_l^+)\\
y^+_{l+1} &= \min(y^+, u^+ + y_l^+, v^+ - y_l^-)\\
u^-_{l+1} &= \max(y^- - x_l^+, u^- + u_l^-,v^- - v_l^+)\\
u^+_{l+1} &= \min(y^+ - x_l^-,u^+ + u_l^+, v^+-v_l^-)\\
v^-_{l+1} &= \max(x_l^- + y^-, u^- + v_l^-, v^- - u_l^+)\\
v^+_{l+1} &= \min(x_l^+ + y^+, u^+ + v_l^+, v^+ - u_l^-)\label{eqSelfComposition8}
\end{align}
We now have the following results.
\begin{lemma}\label{lemmaSelfCompositionEmpty}
If $u^+<0$ or $u^->0$ then there exists some $k\in\mathbb{N}$ such that $O^{(k)}=\emptyset$.
\end{lemma}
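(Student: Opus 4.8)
The plan is to follow the evolution of just the two ``band'' parameters $u^-_l$ and $u^+_l$ of $O^{(l)}$ as $l$ grows, and to show that the band constraint $u^-_l\leq y-x\leq u^+_l$ eventually becomes infeasible, so that $O^{(l)}$ collapses to $\emptyset$. The case $O=\emptyset$ is trivial (take $k=1$), so throughout we may assume $O$ is non-empty, and likewise that every $O^{(l)}$ encountered along the way is non-empty, since the first time some $O^{(l)}$ is empty we are already done (composing with $\emptyset$ yields $\emptyset$).

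First I would record uniform bounds on the parameters that do \emph{not} blow up. From \eqref{eqSelfComposition1} and \eqref{eqSelfComposition2} one reads off $x^-_{l+1}\geq x^-_l$ and $x^+_{l+1}\leq x^+_l$; moreover, if one renormalises the parameters between successive self-compositions (as Proposition~\ref{propCharacterisationComposition} requires), the update rules \eqref{eqUpdateOctaParams1}--\eqref{eqUpdateOctaParams8} only raise lower bounds and lower upper bounds, so these monotonicities are preserved. Hence $x^-_l\geq x^-$ and $x^+_l\leq x^+$ for all $l$. Substituting these into \eqref{eqSelfComposition5} and \eqref{eqSelfComposition6} then gives exactly the two estimates I need: $u^-_{l+1}\geq \max(y^--x^+,\, u^-+u^-_l)$ and $u^+_{l+1}\leq \min(y^+-x^-,\, u^++u^+_l)$, valid for every $l$ for which $O^{(l)}$ is non-empty.

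The two hypotheses are then handled symmetrically. If $u^->0$, iterating $u^-_{l+1}\geq u^-+u^-_l$ from $u^-_1=u^-$ gives $u^-_l\geq l\,u^-\to\infty$, while $u^+_l\leq y^+-x^-$ is a fixed constant; picking any $k$ with $k\,u^->y^+-x^-$ then forces $u^-_k>u^+_k$. If $u^+<0$, iterating $u^+_{l+1}\leq u^++u^+_l$ gives $u^+_l\leq l\,u^+\to-\infty$, while $u^-_l\geq y^--x^+$ is a fixed constant, so again $u^-_k>u^+_k$ for $k$ large enough. Either way, an octagon whose parameters satisfy $u^-_k>u^+_k$ has an unsatisfiable defining band constraint and is therefore empty, so $O^{(k)}=\emptyset$, as required.

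The only genuinely fiddly point — and the one I would take most care over — is the interplay with renormalisation: I have to check that inserting a normalisation step (via Proposition~\ref{propNormalisation}) after each composition destroys neither the geometric growth of $u^-_l$ (resp.\ the decay of $u^+_l$) nor the boundedness of $x^-_l$ and $x^+_l$. Since normalisation only ever tightens bounds, all of these properties survive, so this amounts to routine bookkeeping; I expect the rest of the argument to be entirely mechanical once the two estimates above are in place.
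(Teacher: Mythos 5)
Your proof is correct and follows essentially the same route as the paper's: both extract from the self-composition recurrences a linearly diverging parameter driven by the sign of $u^{\pm}$ and play it off against a bound that stays fixed, and your handling of renormalisation is sound since normalisation only tightens bounds. The only difference is cosmetic: the paper tracks $x^-_l$, which grows like $x^- + (l-1)\,|u^+|$ against the fixed upper bound $x^+$, whereas you track the $u$-band itself via $u^-_k > u^+_k$; either way the defining constraints become infeasible and $O^{(k)}=\emptyset$.
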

\begin{proof}
Assume that $u^+<0$; the case for $u^->0$ is entirely analogous.
From the characterisation of composition \eqref{eqSelfComposition1}--\eqref{eqSelfComposition8}, we know that for each $l\geq 1$ we have:
\begin{align*}
x^-_{l+1} &\geq x^- - u_l^+\\
x^+_{l+1} &\leq x^+_l\\
u_{l+1}^+ & \leq u^+ + u_{l}^+
\end{align*}
If $u^+<0$, it follows that $x^-_{l+1} \geq x^- + (l-1) |u^+|$. For sufficiently large $l$ we thus have $x^-_l > x^+$ meaning that $O^{(l)}=\emptyset$.
\end{proof}

\begin{lemma}\label{lemmaSelfComposition1}
Suppose that the parameters of the octagon are normalised.
If $u^- \leq 0$ and $u^+ \geq 0$ then $x_l^+ = x_2^+$, $x_l^- = x_2^-$, $y_l^+ = y_2^+$ and $y_l^- = y_2^-$ for all $l\geq 2$. Furthermore, for all $l\geq 1$ we have $u_l^-\leq 0$, $u_l^+\geq 0$, $v_l^-\leq v^-$ and $v_l^+ \geq v^+$.
\end{lemma}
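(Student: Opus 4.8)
The plan is an induction on $l$ that bundles all four claims into a single invariant; everything then reduces to checking that, under $u^-\le 0\le u^+$, each of the eight three-way $\max$/$\min$ expressions in \eqref{eqSelfComposition1}--\eqref{eqSelfComposition8} collapses to the expected value. Since the empty case is vacuous, assume $O\neq\emptyset$ (automatic for normalised parameters). First I would record the consequences of the normalisation of $O$ that get reused: from the update rules of Proposition~\ref{propNormalisation} together with $u^-\le 0$ and $u^+\ge 0$ one obtains $y^-\le x^+$, $x^-\le y^+$, $x^-+y^-\le v^-$, $v^+\le x^++y^+$, and $v^-\le v^+$.

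Second, I would instantiate \eqref{eqSelfComposition1}--\eqref{eqSelfComposition8} at $l=1$ (valid since $O$ is normalised) to describe $O^{(2)}$ and read off, directly from the $\max$/$\min$ formulas for $x_2^\pm,y_2^\pm$ and the inequalities above, all the facts that will drive the induction: the sandwich bounds $x^-\le x_2^-$, $x_2^+\le x^+$, $y^-\le y_2^-$, $y_2^+\le y^+$; the sign/spread bounds $u_2^-\le 0\le u_2^+$ and $v_2^-\le v^-\le v^+\le v_2^+$; and the auxiliary inequalities $v^--x^+\le x_2^-$, $x_2^+\le v^+-x^-$, $y^-\le x_2^+$, $x_2^-\le y^+$, $x_2^-+y^-\le v^-$, $v^+\le x_2^++y^+$, together with $v^-\le y_2^-+y_2^+\le v^+$ (this last one being the only slightly fiddly point, proved by a small two-case split on which argument realizes the $\max$ defining $y_2^-$ and the $\min$ defining $y_2^+$).

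Third, the core: for $l\ge 2$ I would carry the invariant $J(l)$: \emph{$x_l^\pm=x_2^\pm$, $y_l^\pm=y_2^\pm$, $u_l^-\le 0\le u_l^+$, and $v_l^-\le v^-\le v^+\le v_l^+$}, with base case $J(2)$ given by the previous step. For the step $J(l)\Rightarrow J(l+1)$, plug $J(l)$ into \eqref{eqSelfComposition1}--\eqref{eqSelfComposition8} (with $O_1=O^{(l)}$, $O_2=O$) and verify the eight formulas line by line; each time two of the three arguments are dominated via one sign condition from $J(l)$ and one of the recorded inequalities. For instance $x_{l+1}^-=\max(x_l^-,x^--u_l^+,v_l^--x^+)=x_2^-$, since $u_l^+\ge 0$ gives $x^--u_l^+\le x^-\le x_2^-$ and $v_l^-\le v^-$ with $v^--x^+\le x_2^-$ gives $v_l^--x^+\le x_2^-$; likewise $u_{l+1}^-=\max(y^--x_l^+,u^-+u_l^-,v^--v_l^+)\le 0$ because $y^--x_2^+\le 0$, $u^-+u_l^-\le 0$, and $v^--v_l^+\le v^--v^+\le 0$; and $y_{l+1}^-=\max(y^-,u^-+y_2^-,v^--y_2^+)=y_2^-$ because all three arguments are $\le y_2^-$ (using $v^-\le y_2^-+y_2^+$), while the arguments $y^-$ and $v^--y_2^+\ge v^--y^+$ force the $\max$ to be at least $\max(y^-,v^--y^+)=y_2^-$. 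The remaining five lines have exactly the same shape. Hence $J(l)$ holds for all $l\ge 2$, which gives $x_l^\pm=x_2^\pm$, $y_l^\pm=y_2^\pm$ for $l\ge 2$ and the $u$/$v$ part of the lemma for $l\ge 2$; for $l=1$ that part is immediate since $u_1^\pm=u^\pm$ and $v_1^\pm=v^\pm$.

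I expect the main obstacle to be organisational rather than conceptual: keeping the eight-way case list of the inductive step honest, and --- more subtly --- ensuring that each invocation of \eqref{eqSelfComposition1}--\eqref{eqSelfComposition8} is legitimate, i.e.\ that the parameter tuple produced for $O^{(l)}$ is normalised so that Proposition~\ref{propCharacterisationComposition} actually applies with $O_1=O^{(l)}$. The cleanest remedy is to append ``the parameters of $O^{(l)}$ are normalised'' to the invariant $J(l)$ and verify it with the update rules of Proposition~\ref{propNormalisation}, which in the regime $u^-\le 0\le u^+$ reduce (for the $x$ and $y$ bounds) precisely to the sandwich and auxiliary inequalities already in hand; alternatively one may quote preservation of normalisation under composition, if that is available.
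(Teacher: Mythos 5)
Your proposal is correct and follows essentially the same route as the paper: an induction on $l$ through the recursion \eqref{eqSelfComposition1}--\eqref{eqSelfComposition8}, using $u^-\le 0\le u^+$ and the normalisation inequalities to collapse each three-way $\max$/$\min$; the only cosmetic difference is that you carry the equalities $x_l^\pm=x_2^\pm$, $y_l^\pm=y_2^\pm$ directly in the invariant, whereas the paper carries one-sided bounds and recovers the equalities at the end from the monotonicity of $x_l^-$ and $x_l^+$. Your remark about needing the intermediate parameters of $O^{(l)}$ to be normalised before reapplying Proposition~\ref{propCharacterisationComposition} is a legitimate point of care that the paper's proof glosses over, and your proposed fix (adding normalisation to the invariant) is the right one.
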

\begin{proof}
Suppose $u^- \leq 0$, $u^+ \geq 0$ and $v^- < v^+$. 
We first show by induction that the following are true for $l\geq 1$:
\begin{align}
x_l^- &\leq \max(x^-,v^- - x^+) \label{eqProofOctagonSelfCompositionMonotonicA}\\
x_l^+ &\geq \min(x^+,v^+ - x^-) \\
y_l^- &\leq \max(y^-,v^- - y^+) \\
y_l^+ &\geq \min(y^+,v^+ - y^-) \\
u_l^- &\leq 0\\
u_l^+ &\geq 0\\
v_l^- &\leq v^- \label{eqProofOctagonSelfCompositionMonotonicB1}\\
v_l^+ &\geq v^+ \label{eqProofOctagonSelfCompositionMonotonicB}
\end{align}
Note that these inequalities are trivially satisfied for $l=1$. Now suppose the induction hypothesis is satisfied for $l=k$; we show that the hypothesis is also satisfied for $l=k+1$.
\begin{itemize}
\item We have $x_{k+1}^- \leq \max(x_k^-,v_k^- - x^+)$ since $u_k^+\geq 0$ by the induction hypothesis. From the induction hypothesis we furthermore have $x_k^-\leq \max(x^-,v^- - x^+)$ and $v_k^-\leq v^-$. We thus find $x_{k+1}^- \leq \max(x^-,v^- - x^+)$.
\item Entirely analogously, we find $x_{k+1}^+ \geq \min(x^+,v^+ - x^-)$
\item We now show that $u_{k+1}^-\leq 0$. First, we have $y^- - x_k^+ \leq y^- - \min(x^+,v^+ - x^-)$. Since the parameters are normalised, we have $y^- - x^+ \leq u^- \leq 0$ and $y^- - v^+ + x^- \leq v^- -v^+$ and by assumption we have $v^- - v^+ <0$. We thus have $y^- - x_k^+ \leq 0$. By the induction hypothesis we also immediately find $u^- + u_k^- \leq 0$. Finally, we also find $v^- - v_k^+ \leq v^- - v^+ \leq 0$. We thus have found $u_{k+1}^-\leq 0$.
\item In the same way we show $u_{k+1}^+\geq 0$.
\item We now show $v_{k+1}^- \leq v^-$. First, we have $x_k^- + y^- \leq \max(x^-,v^- - x^+) + y^-$.  Since the parameters are normalised, we have $x^- + y^- \leq v^-$ and $v^- -x^+ +y^- \leq v^- + u^- \leq v^-$. From the induction hypothesis we also trivially find $u^- + v_k^- \leq u^- + v^- \leq v^-$ and $v^- - u_k^+\leq v^-$. We have thus found $v_{k+1}^- \leq v^-$.
\item In the same way, we find $v_{k+1}^+ \geq v^+$.
\end{itemize}
As a consequence of \eqref{eqProofOctagonSelfCompositionMonotonicA}--\eqref{eqProofOctagonSelfCompositionMonotonicB}, we now immediately find that for all $l\geq 2$ we have
\begin{align*}
x_l^- &= \max(x^-,v^- - x^+)\\
x_l^+ &= \min(x^+,v^+ - x^-)
\end{align*}
\end{proof}

\begin{lemma}\label{lemmaSelfCompositionSquare}
Suppose $u^- < 0$, $u^+ > 0$ and $v^- < v^+$. Then there exists some $m\in\mathbb{N}$ such that $O^{(l)}= [x_2^-,x_2^+]\times [y_2^-,y_2^+]$ for all $l\geq m$. 
\end{lemma}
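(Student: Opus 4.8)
The plan is to reduce the claim to the behaviour of the $u$- and $v$-bounds only, to show via a monotone potential that those bounds reach their ``box'' values after finitely many compositions, and then to check that the resulting box is absorbing under further composition. First the reduction: since $u^{-}\le 0\le u^{+}$, Lemma~\ref{lemmaSelfComposition1} applies, so for every $l\ge 2$ the bounds $x_l^{\pm},y_l^{\pm}$ are constant and equal to $x_2^{\pm},y_2^{\pm}$, and moreover $u_l^{-}\le 0\le u_l^{+}$ and $v_l^{-}\le v^{-}<v^{+}\le v_l^{+}$ for all $l$. Putting $B=[x_2^{-},x_2^{+}]\times[y_2^{-},y_2^{+}]$, we get $O^{(l)}\subseteq B$ for $l\ge 2$, with $O^{(l)}=B$ exactly when $O^{(l)}$ contains the four corners of $B$ (equivalently, when the $u_l^{\pm},v_l^{\pm}$ constraints impose nothing on $B$). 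Using $v^{-}<v^{+}$ together with the normalisation of $O$ one also checks that $y^{-}<x_2^{+}$ and $x_2^{-}<y^{+}$, so the ``floor'' and ``ceiling'' terms in the self-composition recursion~\eqref{eqSelfComposition1}--\eqref{eqSelfComposition8} keep $u_l^{-}<0<u_l^{+}$ for all $l$; this will be needed to exclude the degenerate fixed points of Proposition~\ref{propTransitivity}.

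Next, let $\Phi(l)$ denote the width of $O^{(l)}$ in the direction $(1,-1)$ plus its width in the direction $(1,1)$. Elementary octagon geometry gives $\Phi(l)\le 2W^{*}$ for $l\ge 2$, where $W^{*}=(x_2^{+}-x_2^{-})+(y_2^{+}-y_2^{-})$, with equality $\Phi(l)=2W^{*}$ iff both widths are maximal iff $O^{(l)}=B$. The crucial estimate is: \emph{for $l\ge 2$, either $O^{(l)}=B$, or $\Phi(l+1)\ge\Phi(l)+\delta$, where $\delta=\min(u^{+}-u^{-},\,v^{+}-v^{-})>0$.} To prove it one expands~\eqref{eqSelfComposition1}--\eqref{eqSelfComposition8} for $l\ge 2$ (the $x,y$ terms being frozen): each of $u_{l+1}^{\pm},v_{l+1}^{\pm}$ is a $\min$/$\max$ of three candidates, one ``additive'' (such as $u^{\pm}+u_l^{\pm}$) and one a cross term coupling $u$ to $v$ (such as $v^{-}-v_l^{+}$ in $u_{l+1}^{-}$); combining these with the ranges that $u_l^{\pm},v_l^{\pm}$ inherit from $O^{(l)}\subseteq B$ shows that composing with $O$ never shrinks a width and, whenever some constraint of $O^{(l)}$ is not yet slack, enlarges a width in the under-filled direction by at least $\delta$ — geometrically, $O$ genuinely straddles the diagonal ($u^{-}<0<u^{+}$) and so ``spreads'' $O^{(l)}$ in that direction, while $v^{-}<v^{+}$ prevents the spreading from stalling. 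Since $\Phi$ is bounded above by $2W^{*}$ it cannot increase forever, so $O^{(m)}=B$ for some $m\ge 2$.

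It remains to show that $O^{(l)}=B$ for all $l\ge m$. The octagon $B=O^{(m)}$ contains a diagonal point $(t,t)$: there is $t$ with $(t,t)\in O$ because the convex set $O$ attains both $y-x=u^{-}<0$ and $y-x=u^{+}>0$, and then $(t,t)\in O^{(l)}$ for all $l$ by an immediate induction. Hence $B$ falls under the second case of Proposition~\ref{propTransitivity} and $B\diamond B=B$, so $O^{(km)}=B$ for every $k\ge 1$. Now for any $b\ge 1$ and $k$ with $km-b\ge 2$, monotonicity of $\diamond$ together with $O^{(km-b)}\subseteq B$ gives $B=O^{(km)}=O^{(km-b)}\diamond O^{(b)}\subseteq B\diamond O^{(b)}=O^{(m)}\diamond O^{(b)}=O^{(m+b)}\subseteq B$, forcing $O^{(m+b)}=B$; letting $b$ and $k$ range over all admissible values shows $O^{(l)}=B$ for every $l\ge m$, which is the statement.

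The difficulty is concentrated entirely in the estimate $\Phi(l+1)\ge\Phi(l)+\delta$ when $O^{(l)}\ne B$: the individual $u$- and $v$-widths need not be monotone (the cross terms make them oscillate, as one sees already on small examples), so one cannot argue coordinate by coordinate and must analyse their interaction, keeping track for each of the four bounds of $O^{(l+1)}$ of which of the three candidates is actually attained. Extra care is needed at configurations where $O^{(l)}$ degenerates to a lower-dimensional region, where the argument has to be run through the normalisation of Proposition~\ref{propNormalisation} underlying Lemma~\ref{lemmaSelfComposition1} before the widths are read off.
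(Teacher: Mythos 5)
Your skeleton matches the paper's: freeze the $x,y$ bounds via Lemma~\ref{lemmaSelfComposition1}, show the $u$- and $v$-bounds eventually become trivial so that $O^{(m)}$ equals the box $B$, and then propagate $O^{(l)}=B$ to all $l\ge m$ (your endgame via the diagonal point $(t,t)$, Proposition~\ref{propTransitivity} and $O^{(km)}=O^{(km-b)}\diamond O^{(b)}$ is fine). The gap is that the entire content of the lemma sits in your ``crucial estimate'' $\Phi(l+1)\ge\Phi(l)+\delta$, and you do not prove it: you describe how one would prove it (``one expands \eqref{eqSelfComposition1}--\eqref{eqSelfComposition8} \ldots combining these with the ranges \ldots shows \ldots''), and the one concrete intermediate claim you do offer --- that composing with $O$ never shrinks a width --- is false. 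Take $O=\mathsf{Octa}(0,10,0,10,-10,10,9,11)$, a narrow band around the anti-diagonal: its $u$-width is the maximal $20$, but $O^{(2)}=\mathsf{Octa}(0,10,0,10,-2,2,0,20)$ has $u$-width $4$, and the same collapse recurs at later steps ($u$-width $20$ at $l=3$, $8$ at $l=4$); you yourself note in the last paragraph that the individual widths oscillate, contradicting the earlier assertion. What is left is only the claim about the \emph{sum} of the two widths, and that claim requires exactly the case analysis you defer: for each of $u_{l+1}^{\pm},v_{l+1}^{\pm}$ one must determine which of the three candidates (box term, additive term such as $u^{+}+u_l^{+}$, cross term such as $v^{-}-v_l^{+}$) is attained. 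The pure cases are harmless (all-additive gives $\Delta\Phi=2(u^+-u^-)$, all-cross gives $\Delta\Phi=2(v^+-v^-)$), but in mixed configurations such as $u_{l+1}^-=v^--v_l^+$ together with $v_{l+1}^+=u^++v_l^+$ the ``mass'' stored in a very negative $u_l^-$ is inherited by neither $u_{l+1}^-$ nor $v_{l+1}^+$, and the $(u^-,v^+)$-pair contributes a negative amount to $\Phi(l+1)-\Phi(l)$; showing that the $(u^+,v^-)$-pair always compensates by at least $\delta$ (and that this survives the normalisation issues you flag) is precisely the missing work, not a routine verification.

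It is also worth noting that the paper does not establish any one-step potential inequality. Its proof derives, for each bound separately, disjunctions of the form ``either the bound has reached its trivial box-determined value or it strictly improves'', with the two disjuncts interleaved between consecutive steps and between the $u$- and $v$-bounds and handled separately for even and odd $l$; the branch in which a bound improves forever without becoming trivial is then excluded by exhibiting a divergent arithmetic progression ($v^-_{2k+1}=(k{+}1)v^--kv^+\to-\infty$, impossible since $v_l^-$ is bounded below). That the paper resorts to this parity-sensitive, per-bound argument is a strong hint that your uniform per-step estimate is at best delicate and possibly false as stated. As written, your proposal restates the difficulty rather than resolving it, so the proof is incomplete at its central step.
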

\begin{proof}
Since $u^+>0$ and $u^- = u_1^- <0$, we have:
\begin{align}\label{eqlemmaSelfCompositionSquare1}
(v_2^+ = x_1^+ + y^+) \vee (v_2^+ > v_1^+)
\end{align}
and in the same way, we also find
\begin{align}\label{eqlemmaSelfCompositionSquare2}
(v_2^- = x_1^- + y^-) \vee (v_2^- < v_1^-)
\end{align}
Since $u^-<0$ we have $u^- + u_{2}^- < u_{2}^-$. Furthermore,  from $v_{2}^+ > v_{1}^+$ it follows that  $v^- - v_{2}^+ < v^- - v_{1}^+ \leq u_2^-$. Together, this implies:
\begin{align*}
(v_{2}^+ > v_{1}^+) \Rightarrow (u_{3}^- = y^- - x_2^+) \vee (u_{3}^- < u_{2}^-)
\end{align*}
More generally, in the same way we can show:
\begin{align*}
&(v_{l+1}^+ >  v_l^+) \Rightarrow (u_{l+2}^- = y^- -x_2^+) \vee (u_{l+2}^- < u_{l+1}^-)\\
&(v_{l+1}^- <  v_l^-) \Rightarrow (u_{l+2}^+ = y^+ -x_2^-) \vee (u_{l+2}^+ > u_{l+1}^+)\\
&(u_{l+1}^+ >  u_l^+) \Rightarrow (v_{l+2}^- = x_2^- + y^-) \vee (v_{l+2}^- < v_{l+1}^-)\\
&(u_{l+1}^- <  u_l^-) \Rightarrow (v_{l+2}^+ = x_2^+ + y^+) \vee (v_{l+2}^+ > v_{l+1}^+)
\end{align*}
Next, we show:
\begin{align*}
(v_{l+1}^+ \geq x_2^+ + y_2^+) &\Rightarrow (u_{l+2}^- \leq  y_2^- - x_2^+) \vee (u_{l+2}^- < u_{l+1}^-) \\
(v_{l+1}^- \leq x_2^- + y_2^-) &\Rightarrow (u_{l+2}^+ \geq  y_2^+ - x_2^-) \vee (u_{l+2}^+ > u_{l+1}^+)\\
(u_{l+1}^+ \geq  y_2^+ - x_2^-) &\Rightarrow (v_{l+2}^- \leq x_2^- + y_2^-) \vee (v_{l+2}^- < v_{l+1}^-)\\
(u_{l+1}^- \leq  y_2^- - x_2^+) &\Rightarrow (v_{l+2}^+ \geq x_2^+ + y_2^+) \vee (v_{l+2}^+ > v_{l+1}^+)
\end{align*}
Assume $v_{l+1}^+ \geq x_2^+ + y_2^+$. First suppose $u_{l+2}^-=y^+-x_{l+1}^- = y^+ - x_2^-$. Noting that $y^+\geq y_2^+$ we clearly have $u_{l+2}^- \leq  y_2^- - x_2^+$. Now suppose $u_{l+2}^-=u^- + u_{l+1}^-$. Since $u^-< 0$ we clearly have $u_{l+2}^-< u_{l+1}^-$. Finally, suppose $u_{l+2}^-=v^- -v_{l+1}^+$. Then we find:
\begin{align*}
u_{l+2}^- &= v^- -v_{l+1}^+
\leq v^- - x_2^+ - y_2^+
\leq y_3^- -x_2^+
= y_2^- -x_2^+
\end{align*}
We thus have that in any case, either $(u_{l+2}^- \leq  y_2^- - x_2^+)$ or $(u_{l+2}^- < u_{l+1}^-)$ holds. The other three assertions are shown in entirely the same way. Together with \eqref{eqlemmaSelfCompositionSquare1}--\eqref{eqlemmaSelfCompositionSquare2}, this means that for even values of $l$, we have
\begin{align}
&(u_{l+1}^- \leq  y_2^- - x_2^+) \vee (u_{l+1}^- < u_{l}^-) \label{eqlemmaSelfCompositionSquareB1}\\
&(u_{l+1}^+ \geq  y_2^+ - x_2^-) \vee (u_{l+1}^+ > u_{l}^+)\label{eqlemmaSelfCompositionSquareB2}\\
&(v_{l}^- \leq x_2^- + y_2^-) \vee (v_{l}^- < v_{l-1}^-)\label{eqlemmaSelfCompositionSquareB3}\\
&(v_{l}^+ \geq x_2^+ + y_2^+) \vee (v_{l}^+ > v_{l-1}^+)\label{eqlemmaSelfCompositionSquareB4}
\end{align}
Now we consider odd values of $l$. If $u_2^+ = y^+ - x_1^-$ or $u_2^+ > u_1^+$ holds, then we find that \eqref{eqlemmaSelfCompositionSquareB2} holds, as well as \eqref{eqlemmaSelfCompositionSquareB3}  for $l\geq 3$. Now suppose $u_2^+ < y^+ - x_1^-$ and $u_2^+ \leq u_1^+$. This is only possible if $u_2^+ = v^+-v_1^-$. If $v_3^- = x_2^- + y^-$ or $v_3^- < v_2^-$ holds, then we find that \eqref{eqlemmaSelfCompositionSquareB3} \eqref{eqlemmaSelfCompositionSquareB2} hold for odd $l$ with $l\geq 3$. Now suppose we have $v_3^- > x_2^- + y^-$ and $v_3^- \geq v_2^-$. This is only possible if $v_3^- = v^- - u_2^+ = v^- - v^+ + v_1^-$. Continuing in this way, we find that there must be an odd $l$ for which \eqref{eqlemmaSelfCompositionSquareB2} and \eqref{eqlemmaSelfCompositionSquareB3} are satisfied. Indeed, suppose this were not the case. We then find:
\begin{align*}
u_4^+ &= v^+ - ( v^- - v^+ + v_1^-) = 2(v^+ - v^-) \\
v_5^- &= v^- - 2(v^+ - v^-) = 3v^- - 2v^+\\
u_6^+ &= v^+ - (3v^- - 2v^+) = 3(v^+ - v^-)\\
v_7^- &= v^- - 3(v^+ - v^-) = 4v^- - 3v^+\\
\ldots & 
\end{align*}
In particular, since $v^-<v^+$, we find for odd $l$ that $v_{l+2}^- < v_{l}^-$ and $u_{l+3}^+ > u_{l+1}^+$. This means that there must be some odd $l$ for which $v_l^- = x_2^- + y^-$. This implies that \eqref{eqlemmaSelfCompositionSquareB2} and  \eqref{eqlemmaSelfCompositionSquareB3} are satisfied for sufficiently large odd $l$. In the same way, we find that this is the case for \eqref{eqlemmaSelfCompositionSquareB1} and  \eqref{eqlemmaSelfCompositionSquareB4}. Putting everything together, we have that \eqref{eqlemmaSelfCompositionSquareB1}--\eqref{eqlemmaSelfCompositionSquareB4} are satisfied for all sufficiently large $l$. This, in turn, implies that for sufficiently large $l$, all of the bounds $u_l^-,u_l^+,v_l^-,v_l^+$ become trivial. This shows that for sufficiently large $l$, we have $O^{(l)}=[x_l^-,x_l^+]\times [y_l^-,y_l^+]$, and thus by Lemma \ref{lemmaSelfComposition1}, we also find $O^{(l)}=[x_2^-,x_2^+]\times [y_2^-,y_2^+]$.
\end{proof}

\begin{lemma}\label{lemmaSelfCompositionUL0}
Suppose $u^- = 0$, $u^+ > 0$ and $v^- < v^+$. For each $l\geq 1$ we have $u_l^-=0$, $v_l^-=v^-$ and $v_l^+=v^+$. Moreover, there exists some $m$ such that either $u_l^+=y^+-x_2^-$ for every $l\geq m$ or $u_l^+ = v^+-v^-$ for every $l\geq m$. In particular this also means that $O^{(m)}=O^{(l)}$ for all $l\geq m$. 
\end{lemma}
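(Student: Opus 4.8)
The plan is to substitute the structural information from Lemma~\ref{lemmaSelfComposition1} into the self-composition recurrences \eqref{eqSelfComposition1}--\eqref{eqSelfComposition8}. Since $u^-=0\le0$ and $u^+>0\ge0$, Lemma~\ref{lemmaSelfComposition1} applies, so for all $l\ge1$ we have $u_l^-\le0$, $u_l^+\ge0$, $v_l^-\le v^-$, $v_l^+\ge v^+$, and $x^-\le x_l^-\le\max(x^-,v^--x^+)$, $\min(x^+,v^+-x^-)\le x_l^+\le x^+$, with $x_l^-$ and $x_l^+$ equal to their $l=2$ values $x_2^-=\max(x^-,v^--x^+)$ and $x_2^+=\min(x^+,v^+-x^-)$ once $l\ge2$ (and likewise $y_l^\pm$ constant for $l\ge2$). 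I will also use the normalisation inequalities for $O$, notably $y^--x^+\le u^-=0$, $x^-+y^-\le v^-$, $v^+\le x^++y^+$ and $u^+\le y^+-x^-$; these already give $v^--v^+<0$, $y^+-x^->0$, $y^+\ge x^-$ and $\min(x^+,v^+-x^-)\ge y^-$.

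\textbf{Step 1: the parameters $u^-$, $v^-$, $v^+$.} I would show by induction on $l$ that $u_l^-=0$, $v_l^-=v^-$ and $v_l^+=v^+$ for all $l\ge1$; the base case $l=1$ is immediate. For the step, substitute the induction hypothesis, $u_l^+\ge0$, and the above bounds on $x_l^\pm$ into the lines of \eqref{eqSelfComposition1}--\eqref{eqSelfComposition8} defining $u^-_{l+1}$, $v^-_{l+1}$, $v^+_{l+1}$. One then gets $u^-_{l+1}=\max(y^--x_l^+,0,v^--v^+)=0$ (using $v^--v^+<0$ and $y^--x_l^+\le y^--\min(x^+,v^+-x^-)\le0$); $v^-_{l+1}=\max(x_l^-+y^-,v^-,v^--u_l^+)=v^-$ (using $u_l^+\ge0$ and $x_l^-+y^-\le\max(x^-,v^--x^+)+y^-\le v^-$, the last step splitting on which term in the $\max$ dominates and invoking $x^-+y^-\le v^-$ resp.\ $y^-\le x^+$); and $v^+_{l+1}=\min(x_l^++y^+,u^++v^+,v^+)=v^+$ (using $u^+>0$ and $x_l^++y^+\ge\min(x^+,v^+-x^-)+y^+\ge v^+$, invoking $x^++y^+\ge v^+$ resp.\ $y^+\ge x^-$). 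Combined with Lemma~\ref{lemmaSelfComposition1}, this fixes every parameter of $O^{(l)}$ except $u_l^+$, and the remaining ones are already constant from $l=2$ on.

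\textbf{Step 2: the parameter $u^+$.} Feeding $v_l^-=v^-$ and, for $l\ge2$, $x_l^-=x_2^-$ into the line of \eqref{eqSelfComposition1}--\eqref{eqSelfComposition8} defining $u^+_{l+1}$ turns it, for $l\ge2$, into the scalar recurrence
\[
u^+_{l+1}=\min\!\big(c,\, u^+ + u_l^+\big),\qquad c:=\min\!\big(y^+-x_2^-,\, v^+-v^-\big),
\]
and $c>0$ because $v^+-v^->0$ and $y^+-x_2^-=\min(y^+-x^-,\,x^++y^+-v^-)>0$. Since $u_l^+\ge0$ for all $l$ and $u_l^+\le c$ for all $l\ge3$ (the recurrence caps it at $c$), I would then analyse this recurrence directly: if $u_l^+<c$ then $u^+_{l+1}$ equals either $c$ or $u_l^++u^+$; in the first case the value $c$ is reached and, since $u^+_{l+1}=\min(c,u^++c)=c$ once $u_l^+=c$, it is retained forever; in the second case $u_l^+$ strictly increases by the fixed amount $u^+>0$, which can only happen finitely often because $u_l^+$ is bounded above by $c$. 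Hence there is an $m$ with $u_l^+=c$ for all $l\ge m$. Reading off the minimum defining $c$, either $u_l^+=y^+-x_2^-$ for all $l\ge m$ or $u_l^+=v^+-v^-$ for all $l\ge m$ (both, if the two values agree). Enlarging $m$ to be at least $2$, all eight parameters of $O^{(l)}$ are then constant for $l\ge m$, so $O^{(l)}=O^{(m)}$ for all such $l$.

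\textbf{Main obstacle.} The delicate part is Step~1: in each of the three affected recurrences, every term inside the $\max$/$\min$ must be pinned down using precisely the correct normalisation inequality for $O$ together with the $x_l^\pm$ bounds from Lemma~\ref{lemmaSelfComposition1}, and one must keep track of the fact that the exact values $x_l^\pm=x_2^\pm$ are only available for $l\ge2$ while the one-sided bounds suffice (and hold) for all $l\ge1$, so that a single induction covers every $l$. Step~2 is comparatively routine; its only genuine content is that increments of $u_l^+$ that do not already hit $c$ have a fixed positive size $u^+$, which forces $c$ to be reached after finitely many compositions.
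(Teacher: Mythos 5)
Your proposal is correct and follows essentially the same route as the paper's proof: it pins down $u_l^-$, $v_l^-$ and $v_l^+$ by induction using the bounds from Lemma~\ref{lemmaSelfComposition1} together with the normalisation inequalities, and then reduces $u_l^+$ to the scalar recurrence $u_{l+1}^+=\min(c,u^++u_l^+)$ whose bounded, fixed-increment growth forces eventual constancy at $c=\min(y^+-x_2^-,v^+-v^-)$. The only cosmetic difference is that you verify $y^--x_l^+\le 0$ directly, whereas the paper instead combines $u_{l+1}^-=\max(y^--x_l^+,0)\ge 0$ with the bound $u_{l+1}^-\le 0$ from Lemma~\ref{lemmaSelfComposition1}; both are sound.
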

\begin{proof}
We first show that $u_l^-=0$ for all $l$. Recall that:
$$
u_{l+1}^- = \max(y^--x_l^+,u^-+u_l^-,v^--v_l^+)
$$
From Lemma \ref{lemmaSelfComposition1} we know that $v_l^+\geq v^+$, and since we assumed $v^-<v^+$ we thus have $v^--v_l^+ \leq 0$. We thus straightforwardly find by induction that:
$$
u_{l+1}^- = \max(y^--x_l^+,0)
$$
Moreover, from Lemma \ref{lemmaSelfComposition1} we know that $u_{l+1}^-\leq 0$. Together, we can thus conclude $u_l^-=0$ for all $l\geq 1$.

Next we show that $v_l^+=v^+$ for all $l$. First recall that:
$$
v_{l+1}^+ = \min(x_l^+ + y^+, u^+ + v_l^+, v^+-u_l^-)
$$
Since we already established that $u_l^-=0$, this simplifies to
$$
v_{l+1}^+ = \min(x_l^+ + y^+, u^+ + v_l^+, v^+)
$$
Moreover, from Lemma \ref{lemmaSelfComposition1} we know that $v_l^+\geq v^+$ and by assumption we have $u^+>0$. Thus we obtain:
$$
v_{l+1}^+ = \min(x_l^+ + y^+, v^+)
$$
Finally, since we know from \ref{lemmaSelfComposition1} that $v_{l+1}^+\geq v^+$ we find $v_{l+1}=v^+$.

Now we show $v_l^-=v^-$ for all $l$. First recall that
$$
v_{l+1}^- = \max(x_l^- + y^-, u^- + v_l^-, v^--u_l^+)
$$
which simplifies to
$$
v_{l+1}^- = \max(x_l^- + y^-, v_l^-, v^- -u_l^+)
$$
Since $u_l^+\geq 0$ by Lemma \ref{lemmaSelfComposition1}, by induction we find
$$
v_{l+1}^- = \max(x_l^- + y^-, v^-)
$$
Finally, since $v_{l+1}^-\leq v^-$ by Lemma \ref{lemmaSelfComposition1}, we obtain $v_{l+1}^-=v^-$.

Finally, we consider $u_l^+$. Recall that:
$$
u_{l+1}^+ = \min(y^+ - x_l^-,u^+ + u_l^+,v^+ - v_l^-)
$$
which simplifies to
$$
u_{l+1}^+ = \min(y^+ - x_l^-,u^+ + u_l^+,v^+ - v^-)
$$
If $u_{l+1}^+=u^+ + u_l^+$ then clearly we have $u_{l+1}^+ > u_l^+$. For sufficiently large $l$, we must thus have:
$$
u_{l}^+ = \min(y^+ - x_2^-,v^+ - v^-)
$$
In other words, either $u_l^+$ is trivial or $u_l^+ = v^+-v^-$.
\end{proof}

In entirely the same way, we find:
\begin{lemma}\label{lemmaSelfCompositionUL0sym}
Suppose $u^- < 0$, $u^+ = 0$ and $v^- < v^+$. Then for each $l\geq 1$ we have $u_l^+=0$, $v_l^-=v^-$ and $v_l^+=v^+$. Moreover, there exists some $m$ such that either $u_l^-=y^--x_2^+$ for every $l\geq m$ or $u_l^- = v^--v^+$ for every $l\geq m$. In particular this also means that $O^{(m)}=O^{(l)}$ for all $l\geq m$. 
\end{lemma}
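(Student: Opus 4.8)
The plan is to reduce the statement to Lemma~\ref{lemmaSelfCompositionUL0} by means of the central-symmetry map $\phi\colon (x,y)\mapsto(-x,-y)$, rather than the reflection $(x,y)\mapsto(y,x)$ underlying \eqref{eqInvOctagon}. First I would set $O' = \phi(O) = \{(-x,-y)\mid (x,y)\in O\}$. Rewriting each of the four pairs of bounds of $O$ in the new coordinates gives $O' = \mathsf{Octa}(-x^+,-x^-,-y^+,-y^-,-u^+,-u^-,-v^+,-v^-)$. Under the hypotheses $u^-<0$, $u^+=0$, $v^-<v^+$, the octagon $O'$ therefore has lower $(y{-}x)$-bound $-u^+=0$, upper $(y{-}x)$-bound $-u^->0$, and $(x{+}y)$-bounds $-v^+<-v^-$; that is, $O'$ satisfies exactly the hypotheses of Lemma~\ref{lemmaSelfCompositionUL0}. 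Since $\phi$ is an affine bijection of the plane that permutes the four constraint directions $x,\,y,\,y-x,\,x+y$ (up to sign), it maps octagons to octagons and preserves tightness of bounds, so the displayed parameters of $O'$ are again normalised (by the vertex description of Lemma~\ref{lemmaVerticesOctagon}); non-emptiness of $O^{(l)}$ for all $l$ is handled exactly as in Lemma~\ref{lemmaSelfCompositionUL0}, using $u^-\le 0\le u^+$ together with Lemma~\ref{lemmaSelfComposition1}.

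The key step is that $\phi$ commutes with composition: $(-x,-z)\in \phi(O_1)\diamond\phi(O_2)$ iff there is $y$ with $(-x,-y)\in\phi(O_1)$ and $(-y,-z)\in\phi(O_2)$, iff $(x,y)\in O_1$ and $(y,z)\in O_2$, iff $(x,z)\in O_1\diamond O_2$. Hence $(O')^{(l)}=\phi(O^{(l)})$ for every $l$, and since both sides are non-empty octagons with normalised parameters (which are unique), their parameters are related by $\widehat{x}_l^{\pm}=-x_l^{\mp}$, $\widehat{y}_l^{\pm}=-y_l^{\mp}$, $\widehat{u}_l^{\pm}=-u_l^{\mp}$ and $\widehat{v}_l^{\pm}=-v_l^{\mp}$, where hats denote the parameters of $(O')^{(l)}$. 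Applying Lemma~\ref{lemmaSelfCompositionUL0} to $O'$ then yields $\widehat{u}_l^-=0$, $\widehat{v}_l^-=-v^+$ and $\widehat{v}_l^+=-v^-$ for all $l$, which translate back to $u_l^+=0$, $v_l^+=v^+$ and $v_l^-=v^-$; and it yields some $m$ such that, for all $l\ge m$, either $\widehat{u}_l^+=\widehat{y}^+-\widehat{x}_2^-$ or $\widehat{u}_l^+=\widehat{v}^+-\widehat{v}^-$. Using $\widehat{u}_l^+=-u_l^-$, $\widehat{y}^+=-y^-$, $\widehat{x}_2^-=-x_2^+$ and $\widehat{v}^+-\widehat{v}^-=v^+-v^-$, these become $u_l^-=y^--x_2^+$, respectively $u_l^-=v^--v^+$, which is the claimed dichotomy; note how the change of coordinates turns the $x_2^-$ of Lemma~\ref{lemmaSelfCompositionUL0} into $x_2^+$ here. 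Finally, $O^{(m)}=O^{(l)}$ for $l\ge m$ is immediate: Lemma~\ref{lemmaSelfCompositionUL0} gives $(O')^{(m)}=(O')^{(l)}$, and applying $\phi$ back gives $O^{(m)}=O^{(l)}$ (equivalently, all defining parameters of $O^{(l)}$ are now constant for $l\ge m$, taking $m\ge 2$ so that Lemma~\ref{lemmaSelfComposition1} also freezes $x_l^{\pm}$ and $y_l^{\pm}$).

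I expect the only real pitfall to be choosing the right symmetry and carrying the parameter dictionary through consistently: the natural-looking reflection $(x,y)\mapsto(y,x)$ does \emph{not} send $u^-<0,\,u^+=0$ to the hypothesis of Lemma~\ref{lemmaSelfCompositionUL0} in the way the statement requires, whereas central symmetry does, and it is exactly what produces the $x_2^+$ where Lemma~\ref{lemmaSelfCompositionUL0} has $x_2^-$. Everything else—preservation of normalisation, commutation with $\diamond$, and the back-translation—is routine bookkeeping. A fully self-contained alternative, matching the phrase ``in entirely the same way'', is to substitute $x_l^{\pm}\mapsto -x_l^{\mp}$, $y_l^{\pm}\mapsto -y_l^{\mp}$, $u_l^{\pm}\mapsto -u_l^{\mp}$, $v_l^{\pm}\mapsto -v_l^{\mp}$ directly into the recurrences \eqref{eqSelfComposition1}--\eqref{eqSelfComposition8} and check that they turn into precisely the recurrences analysed in the proof of Lemma~\ref{lemmaSelfCompositionUL0}.
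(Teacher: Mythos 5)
Your proposal is correct and matches the paper's intent exactly: the paper gives no proof beyond the phrase ``In entirely the same way,'' and your central-symmetry reduction $\phi(x,y)=(-x,-y)$ is precisely the right way to make that rigorous, since $\phi$ (unlike the reflection $(x,y)\mapsto(y,x)$, which reverses the order of $\diamond$) commutes with composition and sends the hypotheses $u^-<0$, $u^+=0$, $v^-<v^+$ to those of Lemma~\ref{lemmaSelfCompositionUL0}. The parameter dictionary you give translates the conclusions correctly (including the swap of $x_2^-$ into $x_2^+$), and your fallback of substituting that dictionary directly into the recurrences \eqref{eqSelfComposition1}--\eqref{eqSelfComposition8} is literally what the paper's ``entirely the same way'' amounts to.
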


\begin{lemma}\label{lemmaSelfCompositionUU0}
Suppose $u^- = u^+ = 0$. Then for each $l\geq 1$ we have $O^{(l)}=O^{(1)}$.
\end{lemma}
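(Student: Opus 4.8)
The plan is to observe that the hypothesis $u^- = u^+ = 0$ collapses the band $u^- \leq y - x \leq u^+$ to the line $y = x$, so that $O$ is nothing but a segment of the main diagonal; once this is known, showing $O^{(l)} = O^{(1)}$ becomes immediate. First I would exploit the assumption that the parameters are normalised in order to pin down the remaining six parameters. Specialising the normalisation rules of Proposition \ref{propNormalisation} to $u^- = u^+ = 0$, the rule for $v^-$ forces $v^- \geq 2x^-$, whereas the rule for $x^-$ contains the term $(v^- - u^+)/2 = v^-/2$ and hence forces $2x^- \geq v^-$; together these give $v^- = 2x^-$, and then combining the rules for $x^-$ and $y^-$ forces $y^- = x^-$. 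By the symmetric argument, $v^+ = 2x^+$ and $y^+ = x^+$, so $O = \{(x,x) \mid x \in [x^-,x^+]\}$.

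With $O$ identified as a diagonal segment, I would conclude in one of two equivalent ways. The conceptual route: $(x,z) \in O\diamond O$ holds iff there is some $y$ with $(x,y) \in O$ and $(y,z) \in O$, which forces $x = y = z \in [x^-,x^+]$, hence $O\diamond O = O$; a one-line induction on $l$, using $O^{(l+1)} = O^{(l)}\diamond O$, then gives $O^{(l)} = O^{(1)}$ for all $l \geq 1$. The computational route, which matches the style of the neighbouring lemmas: substitute $u^\pm = 0$, $y^\pm = x^\pm$, $v^- = 2x^-$ and $v^+ = 2x^+$ into the self-composition formulas \eqref{eqSelfComposition1}--\eqref{eqSelfComposition8} with $l = 1$ and check that each of the eight expressions for the parameters of $O^{(2)}$ reduces to the corresponding parameter of $O$, using only $x^- \leq x^+$ (e.g.\ $x_2^- = \max(x^-, x^-, 2x^- - x^+) = x^-$ and $v_2^- = \max(2x^-, 2x^-, 2x^-) = 2x^- = v^-$); then induct as before.

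I do not expect any real obstacle. The only step requiring attention is the normalisation argument in the first paragraph, namely verifying that when $u^- = u^+ = 0$ the fixed-point conditions defining a normalised parameter tuple simultaneously yield $v^\pm = 2x^\pm$ and $y^\pm = x^\pm$ rather than leaving these parameters free. Everything after that is a routine check that could, if desired, be delegated to the same kind of automated verification used for Proposition \ref{propCharacterisationComposition}.
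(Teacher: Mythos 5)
Your proposal is correct and follows essentially the same route as the paper: the paper's proof likewise observes that normalisation together with $u^-=u^+=0$ forces $O=\{(x,x)\mid x\in[x^-,x^+]\}$ and then notes that $O\diamond O=O$ is immediate. You simply make explicit the normalisation bookkeeping (deriving $y^\pm=x^\pm$ and $v^\pm=2x^\pm$) and the trivial induction on $l$ that the paper leaves implicit.
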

\begin{proof}
If $u^-=u^+-0$, and assuming the initial parameters are normalised, we have $O^{(1)} = \{(x,x) \,|\, x\in[x^-,x^+]\}$. From this characterisation, it immediately follows that $O^{(1)}\circ O^{(1)}=O^{(1)}$.
\end{proof}

\begin{lemma}\label{lemmaSelfCompositionVequal}
Suppose $u^-\leq 0$, $u^+\geq 0$ and $v^- = v^+$. For even $l$ we have:
$$
O^{(l)} = \{(x,x) \,|\, x\in[\max(x^-,v-x^+),\min(x^+,v-x^-)]\}
$$
while for odd $l\geq 3$ we have:
$$
O^{(l)} = \{(x,v-x) \,|\, x\in[\max(x^-,v-x^+),\min(x^+,v-x^-)]\}
$$
\end{lemma}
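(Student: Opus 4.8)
The plan is to exploit the fact that when $v^-=v^+$ the octagon degenerates to a line segment, and then simply track what repeated composition does to that segment. Writing $v:=v^-=v^+$, the constraint $v^-\le x+y\le v^+$ forces $x+y=v$, so every point of $O^{(1)}$ has the form $(x,v-x)$. On this line the remaining constraints $x^-\le x\le x^+$, $y^-\le v-x\le y^+$ and $u^-\le v-2x\le u^+$ are each of the form $\alpha\le x\le\beta$; using that the parameters are normalised (Proposition~\ref{propNormalisation}, which in particular gives $x^-\ge\max(v-y^+,(v-u^+)/2)$ and $x^+\le\min(v-y^-,(v-u^-)/2)$), these all collapse to exactly $x^-\le x\le x^+$. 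Hence $O^{(1)}=\{(x,v-x)\,|\,x\in[x^-,x^+]\}$, with the convention that this is $\emptyset$ when $x^->x^+$.

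Next I would set $a:=\max(x^-,v-x^+)$ and $b:=\min(x^+,v-x^-)$ and record two elementary facts, both immediate from these definitions, that are used throughout: (i) $[a,b]\subseteq[x^-,x^+]$, and (ii) $v-a=b$ and $v-b=a$, so that $x\in[a,b]$ iff $v-x\in[a,b]$. Then I would compute the first two compositions directly from \eqref{eqDefComposition}. For $O^{(2)}=O^{(1)}\diamond O^{(1)}$: a witness $y$ must satisfy $y=v-x$ with $x\in[x^-,x^+]$ (so that $(x,y)\in O^{(1)}$) and $z=v-y$ with $y\in[x^-,x^+]$ (so that $(y,z)\in O^{(1)}$); this forces $z=x$ together with $x,v-x\in[x^-,x^+]$, i.e.\ $x\in[a,b]$, giving $O^{(2)}=\{(x,x)\,|\,x\in[a,b]\}$. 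For $O^{(3)}=O^{(2)}\diamond O^{(1)}$ the witness is $y=x\in[a,b]$ and $z=v-x$, with $y\in[x^-,x^+]$ automatic by (i), so $O^{(3)}=\{(x,v-x)\,|\,x\in[a,b]\}$.

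Finally I would run an induction on $l\ge2$ showing that $O^{(l)}=\{(x,x)\,|\,x\in[a,b]\}$ for even $l$ and $O^{(l)}=\{(x,v-x)\,|\,x\in[a,b]\}$ for odd $l$, the cases $l=2,3$ being the base. In the step one composes $O^{(l)}$ with $O^{(1)}$: if $O^{(l)}$ is the diagonal segment on $[a,b]$, the witness $y=x\in[a,b]\subseteq[x^-,x^+]$ gives $z=v-x$ and hence $O^{(l+1)}=\{(x,v-x)\,|\,x\in[a,b]\}$; if $O^{(l)}$ is the anti-diagonal segment on $[a,b]$, the witness $y=v-x$ with $x\in[a,b]$ satisfies $y\in[a,b]\subseteq[x^-,x^+]$ by (i)--(ii) and yields $z=x$, so $O^{(l+1)}=\{(x,x)\,|\,x\in[a,b]\}$; in both cases the surviving range for $x$ is exactly $[a,b]$. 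The only genuinely delicate point is the bookkeeping in the first paragraph: one must check that the normalisation identities really do make the $y$- and $u$-bounds on the segment redundant, so that the interval is precisely $[x^-,x^+]$ rather than something strictly smaller (which would propagate incorrectly into all later $O^{(l)}$). Everything after that is a short direct calculation, and no case split on the signs of $u^\pm$ or on whether $[a,b]$ is empty is needed, since the segment formulas degenerate correctly to $\emptyset$ when $a>b$.
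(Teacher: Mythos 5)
Your proposal is correct and follows essentially the same route as the paper's proof: reduce $O^{(1)}$ to the anti-diagonal segment $\{(x,v-x)\,|\,x\in[x^-,x^+]\}$ via normalisation, compute $O^{(2)}$ set-theoretically from \eqref{eqDefComposition}, and then iterate. The only difference is that you make the paper's ``continuing in this way'' explicit, via the identities $v-a=b$, $v-b=a$ and a short induction, which is a welcome tightening rather than a departure.
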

\begin{proof}
Let us write $v$ for $v^-=v^+$. Assuming the initial parameters are normalised, we have:
$$
O^{(1)} = \{(x,v-x) \,|\, x\in[x^-,x^+]\}
$$
and thus
$$
O^{(1)}\diamond O^{(1)} = \{(x,x) \,|\, x\in[x^-,x^+] \text{ and } v-x\in[x^-,x^+]\}
$$
In other words:
$$
O^{(2)} = \{(x,x) \,|\, x\in[\max(x^-,v-x^+),\min(x^+,v-x^-)]\}
$$
Continuing in this way, we find
$$
O^{(l)} = \{(x,v-x) \,|\, x\in[\max(x^-,v-x^+),\min(x^+,v-x^-)]\}
$$
for odd $l\geq 3$ and
$$
O^{(l)} = \{(x,x) \,|\, x\in[\max(x^-,v-x^+),\min(x^+,v-x^-)]\}
$$
for even $l\geq 2$.
\end{proof}

Let us now show the main result about octagons capturing transitive relations.

\begin{proposition}
Let $O=\mathsf{Octa}(x^-,\allowbreak x^+,\allowbreak y^-,\allowbreak y^+,\allowbreak u^-,\allowbreak u^+,\allowbreak v^-,\allowbreak v^+)$ and assume that these parameters are normalised. It holds that $O\diamond O = O$ iff one of the following conditions is satisfied:
\begin{itemize}
\item $O=\emptyset$;
\item $O = [x^-,x^+] \times [y^-,y^+]$, $x^+ \geq y^-$ and $x^- \leq y^+$;
\item $O = \{(x,x) \,|\, x\in [x^-,x^+]\}$;
\item $u^- =0$, $u^+ >0$, $v^- < v^+$, $v^--x^+\leq x^-$, $x^+\leq v^+-x^-$, $v^--y^+\leq y^-$, $y^+\leq v^+-y^-$ and  $u^+ = \min(y^+ - x^-,v^+ - v^-)$;
\item $u^- < 0$, $u^+=0$, $v^- < v^+$, $v^--x^+\leq x^-$, $x^+\leq v^+-x^-$, $v^--y^+\leq y^-$, $y^+\leq v^+-y^-$ and  $u^- = \max(y^- - x^+,v^- - v^+)$.
\end{itemize}
\end{proposition}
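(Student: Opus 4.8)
The plan is to derive both directions of the equivalence from the self-composition analysis already carried out, namely Lemmas~\ref{lemmaSelfCompositionEmpty}--\ref{lemmaSelfCompositionVequal} together with the recurrences \eqref{eqSelfComposition1}--\eqref{eqSelfComposition8}, which compute $O^{(l+1)}$ from $O^{(l)}$ and $O$. The guiding observation is that $O\diamond O=O$ holds if and only if $O^{(l)}=O$ for every $l\geq 1$. Hence for the ``only if'' direction it suffices to run through the possible sign patterns of $u^-$ and $u^+$, invoke the relevant lemma to learn the eventual shape of $O^{(l)}$, and force $O$ to coincide with it; and for the ``if'' direction it suffices to substitute the conditions of each listed case into \eqref{eqSelfComposition1}--\eqref{eqSelfComposition8} with $l=1$ and check, parameter by parameter, that $O^{(2)}$ has the same normalised parameters as $O$.

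For the ``only if'' direction, assume $O\diamond O=O$. If $O=\emptyset$ we are in the first case, so suppose $O\neq\emptyset$; normalisation then gives $x^-\leq x^+$, $y^-\leq y^+$, $u^-\leq u^+$, $v^-\leq v^+$. By Lemma~\ref{lemmaSelfCompositionEmpty} we cannot have $u^+<0$ or $u^->0$, since then some $O^{(k)}$ would be empty while $O^{(k)}=O\neq\emptyset$; hence $u^-\leq 0\leq u^+$, and we split into four sub-cases. If $u^-<0$ and $u^+>0$: when $v^-=v^+$, Lemma~\ref{lemmaSelfCompositionVequal} places $O$ on the line $x+y=v^-$ and $O^{(2)}$ on the line $x=y$, so $O=O^{(2)}$ forces $O=\{(v^-/2,v^-/2)\}$, whose normalised $u$-bounds are both zero, contradicting $u^-<0$; when $v^-<v^+$, Lemma~\ref{lemmaSelfCompositionSquare} gives $O=O^{(l)}=[x_2^-,x_2^+]\times[y_2^-,y_2^+]$, so $O$ is a box, and evaluating $x_2^\pm,y_2^\pm$ from \eqref{eqSelfComposition1}--\eqref{eqSelfComposition8} while using that a normalised box has $v^-=x^-+y^-$ and $v^+=x^++y^+$, the requirement $O=O^{(2)}$ becomes exactly $x^+\geq y^-$ and $x^-\leq y^+$, the second case. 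If $u^-=0$ and $u^+>0$: the case $v^-=v^+$ is excluded as above (it would force $u^+=0$), and when $v^-<v^+$, Lemma~\ref{lemmaSelfCompositionUL0} gives $u_l^-=0$, $v_l^\pm=v^\pm$ for all $l$ and $u_l^+=\min(y^+-x_2^-,v^+-v^-)$ for all large $l$; computing $x_2^-=\max(x^-,v^--x^+)$, $x_2^+=\min(x^+,v^+-x^-)$, $y_2^-=\max(y^-,v^--y^+)$, $y_2^+=\min(y^+,v^+-y^-)$ from \eqref{eqSelfComposition1}--\eqref{eqSelfComposition8}, the requirement $O=O^{(2)}$ becomes precisely $v^--x^+\leq x^-$, $x^+\leq v^+-x^-$, $v^--y^+\leq y^-$, $y^+\leq v^+-y^-$ and $u^+=\min(y^+-x^-,v^+-v^-)$, the fourth case. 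The sub-case $u^-<0$, $u^+=0$ is entirely symmetric (using Lemma~\ref{lemmaSelfCompositionUL0sym}) and gives the fifth case. Finally, if $u^-=u^+=0$, then Lemma~\ref{lemmaSelfCompositionUU0} together with normalisation gives $O=\{(x,x)\mid x\in[x^-,x^+]\}$, the third case.

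For the ``if'' direction, the first three cases are immediate: $\emptyset\diamond\emptyset=\emptyset$; for a box with $x^+\geq y^-$ and $x^-\leq y^+$ the set $[x^-,x^+]\cap[y^-,y^+]$ is non-empty, so every $(x,z)$ with $x\in[x^-,x^+]$ and $z\in[y^-,y^+]$ admits an intermediate point and $O\diamond O=O$; and for a diagonal segment, $(x,z)\in O\diamond O$ forces $x=z\in[x^-,x^+]$, so again $O\diamond O=O$. For the fourth and fifth cases, substituting the stated conditions into \eqref{eqSelfComposition1}--\eqref{eqSelfComposition8} with $l=1$ and repeatedly invoking $u^+>0$ and normalisation (in particular $y^--x^+\leq u^-=0$, $x^-+y^-\leq v^-$ and $v^+\leq x^++y^+$ in the fourth case, and the mirrored inequalities in the fifth) shows $x_2^\pm=x^\pm$, $y_2^\pm=y^\pm$, $u_2^-=0=u^-$, $v_2^\pm=v^\pm$ and $u_2^+=\min(y^+-x^-,2u^+,v^+-v^-)=u^+$, so $O^{(2)}=O$. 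The main obstacle is not conceptual but bookkeeping: one must use the normalisation hypothesis carefully at each degenerate boundary in order to turn the raw $\max$/$\min$ expressions produced by the recurrences into the clean inequalities of the statement, and one must check in each sporadic $v^-=v^+$ configuration that $O$ collapses to the single-point instance already covered by the third case, so that no solution is missed and none is wrongly admitted.
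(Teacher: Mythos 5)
Your proposal is correct and follows essentially the same route as the paper's proof: both reduce the problem to the sign pattern of $u^-,u^+$ and the dichotomy $v^-=v^+$ versus $v^-<v^+$, invoke Lemmas~\ref{lemmaSelfCompositionEmpty}, \ref{lemmaSelfCompositionSquare}, \ref{lemmaSelfCompositionUL0}, \ref{lemmaSelfCompositionUL0sym}, \ref{lemmaSelfCompositionUU0} and \ref{lemmaSelfCompositionVequal} to pin down the eventual shape of $O^{(l)}$ for necessity, and verify sufficiency in the last two cases by substituting into the composition formulas \eqref{eqSelfComposition1}--\eqref{eqSelfComposition8}. The remaining work you defer is exactly the bookkeeping the paper carries out, so nothing essential is missing.
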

\begin{proof}
If $u^+<0$ or $u^->0$, we know from Lemma \ref{lemmaSelfCompositionEmpty} that $O\diamond O=O$ can only hold if $O=\emptyset$. Furthermore, $O=\emptyset$ is clearly sufficient for $O\diamond O=O$ to hold.

If $u^-<0$, $u^+>0$ and $v^-<v^+$, we know from Lemma \ref{lemmaSelfCompositionSquare} that $O\diamond O=O$ can only hold if $O=[x^-,x^+]\times [y^-,y^+]$ and moreover $x^-=x^-_2$, $x^+=x^+_2$,  $y^-=y^-_2$, $y^+=y^+_2$. These latter conditions are satisfied if $x^-\geq v^--x^+$, $x^+\leq v^+-x^-$, $y^-\geq v^--y^+$ and $y^+\leq v^+-y^-$. From $O=[x^-,x^+]\times [y^-,y^+]$, together with the assumptions that the parameters are normalised, we furthermore know that $v^-=x^- + y^-$ and $v^+=x^+ +y^+$. We thus arrive at the conditions $x^- \geq x^-+y^- -x^+$, $x^+\leq x^++y^+-x^-$,  $y^- \geq x^-+y^- -y^+$ and $y^+\leq x^++y^+-y^-$. These conditions are equivalent with $x^+\geq y^-$ and $x^-\leq y^+$. Finally, note that we can also trivially verify that the conditions stated in the second bullet point are sufficient for $O\diamond O=O$ to hold.  

If $u^-=u^+=0$ we have $O=\{(x,x)\,|\, x\in[x^-,x^+]\}$ and we clearly have $O\diamond O=O$.

If $u^-=0$, $u^+>0$ and $v^-<v^+$, then we know from Lemma \ref{lemmaSelfCompositionUL0} that $O\diamond O=O$ can only hold if either $u^+=y^+-x^-$ or $u^+=v^+-v^-$. Furthermore, we also need to ensure that $x^-=x_2^-$, $x^+=x_2^+$, $y^-=y_2^-$ and $y^+=y_2^+$, which is the case iff $v^--x^+\leq x^-$, $x^+\leq v^+-x^-$, $v^--y^+\leq y^-$ and $y^+\leq v^+-y^-$. We thus clearly have that the conditions stated in the fourth bullet point are necessary. To see why they are also sufficient, we can use \eqref{eqSelfComposition1}--\eqref{eqSelfComposition8} to verify that $O^{(2)}=O$. We trivially find $x_2^-=x^-$, $x_2^+=x^+$, $y_2^-=y^-$ and $y_2^+=y^+$. To see why $u_2^-=u^-=0$, note that $u^-+u_1^-=0$ and $v^--v^+<0$. Furthermore, $y^--x_1^+ \leq 0$ follows from the assumption that $x^- \geq v^- - x^+$. Indeed, together with the observation that $v^-\geq x^-+y^-$ we find $x^-\geq x^- + y^- -x^+$ which is equivalent with $x^+\geq y^-$. We trivially have $u_2^+=u^+$, since we assumed $u^+=\min(y^+-x^-,v^+-v^-)$ and $u^++u^+ > u^+$ since $u^+>0$. Next, we also have $v^-_2=v^-$ because $u^-+v_1^-=v^-$, $v^--u^+<v^-$, and we furthermore have $v^-\leq x^-+y^-$ since the parameters are normalised. In the same way we also find $v_2^+=v^+$.

The case where $u^-<0$, $u^+-0$ and $v^-<v^+$ is entirely analogous.

If $v^-=v^+$, it follows from Lemma \ref{lemmaSelfCompositionVequal} that $O$ can $O\diamond O=0$ can only hold in the trivial case where $O=\{(x,x)\}$ for some $x\in\mathbb{R}$, which was already covered since we then have $u^-=u^+=0$.
\end{proof}

\begin{proposition}
If $v^-<v^+$ then there exists some $m\geq 1$ such that $O^{(m)}\diamond O=O^{(m)}$.
\end{proposition}
\begin{proof}
This immediately follows from Lemmas \ref{lemmaSelfCompositionEmpty}, 
\ref{lemmaSelfCompositionSquare}, 
\ref{lemmaSelfCompositionUL0},
\ref{lemmaSelfCompositionUL0sym}
and \ref{lemmaSelfCompositionUU0}.
\end{proof}

\section{Expressivity of Octagon Embeddings}
\subsection{Capturing Knowledge Graphs}
The following result concerns hexagon of the following form:
\begin{align}\label{eqHexagons}
X^i_r = \{(x,y) \,|\, &u_i^- \leq y-x \leq u_i^+, \\ 
&x_i^- \leq x \leq x_i^+, y_i^- \leq y \leq y_i^+\} \notag
\end{align}
\begin{proposition}
Let $X_r$ be a hexagon region of the form \eqref{eqHexagons} and let $\mathbf{e},\mathbf{f}\in \mathbb{R}^n$. If $\mathbf{e}\oplus\mathbf{f}\in X_r$ and $\mathbf{f}\oplus\mathbf{e}\in X_r$ then we also have $\mathbf{e}\oplus\mathbf{e}\in X_r$.
\end{proposition}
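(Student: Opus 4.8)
The plan is to reduce the statement to a single coordinate and then verify three inequalities directly. Since $X_r$ is a coordinate-wise region of the form \eqref{eqCoordinateWise}, we have $\mathbf{e}\oplus\mathbf{f}\in X_r$ iff $(e_i,f_i)\in X_r^i$ for every $i$, and likewise for $\mathbf{f}\oplus\mathbf{e}$ and $\mathbf{e}\oplus\mathbf{e}$. Hence it suffices to fix a coordinate, drop the index, and show: if $(e,f)$ and $(f,e)$ both lie in a two-dimensional hexagon $H$ of the form \eqref{eqHexagons}, then $(e,e)\in H$.

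Next I would write out the membership conditions. From $(e,f)\in H$ we get $u^-\le f-e\le u^+$, $x^-\le e\le x^+$ and $y^-\le f\le y^+$; from $(f,e)\in H$ we get $u^-\le e-f\le u^+$, $x^-\le f\le x^+$ and $y^-\le e\le y^+$. To conclude $(e,e)\in H$ we must check $u^-\le 0\le u^+$, $x^-\le e\le x^+$ and $y^-\le e\le y^+$. The last two are immediate: $x^-\le e\le x^+$ is already part of $(e,f)\in H$, and $y^-\le e\le y^+$ is part of $(f,e)\in H$ (its second coordinate being $e$). For the band constraint, adding $u^-\le f-e$ and $u^-\le e-f$ yields $2u^-\le 0$, and adding $f-e\le u^+$ and $e-f\le u^+$ yields $0\le 2u^+$, so $u^-\le 0\le u^+$, i.e.\ the diagonal point $(e,e)$ lies in the band. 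Reassembling the coordinates then gives $\mathbf{e}\oplus\mathbf{e}\in X_r$.

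There is no real obstacle here; the only point worth stressing is \emph{why} this argument is specific to hexagons. The interval constraint $y^-\le e\le y^+$ needed for $(e,e)$ is supplied precisely by the swapped membership $(f,e)\in H$. An octagon of the form \eqref{eqOctagons} additionally requires $v^-\le x+y\le v^+$, and for the point $(e,e)$ this would mean $v^-\le 2e\le v^+$; but both $(e,f)\in X_r^i$ and $(f,e)\in X_r^i$ only give $v^-\le e+f\le v^+$, which cannot be transferred to $2e$ in the same averaging fashion. This is exactly the loophole that lets octagons (Proposition~\ref{propFullyExpressive}) escape the limitation expressed here, and it is the observation I would flag at the end of the proof.
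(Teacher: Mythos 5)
Your proof is correct and follows essentially the same route as the paper's: the paper argues the contrapositive (assuming $\mathbf{e}\oplus\mathbf{e}\notin X_r$, it shows any violated bound in some coordinate contradicts $\mathbf{e}\oplus\mathbf{f}\in X_r$ or $\mathbf{f}\oplus\mathbf{e}\in X_r$), while you verify the three membership conditions for $(e,e)$ directly, using exactly the same coordinate-wise reduction and the same inequalities. Your closing remark about why the $v$-constraint of octagons escapes this argument is accurate and matches the paper's motivation, though it is not needed for the proof itself.
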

\begin{proof}
Let $\mathbf{e}=(e_1,\ldots,e_n)$ and $\mathbf{f}=(f_1,\ldots,f_n)$ be the embeddings of entities $e$ and $f$. Since $(e,r,e)\notin\mathcal{G}$ there must exist some coordinate $i$ such that one of the following holds:
\begin{align*}
e_i &< x_i^- &
e_i &> x_i^+ &
e_i &< y_i^- &
e_i &> y_i^+ &
0 &< u_i^- &
0 &> u_i^+
\end{align*}
where $(x_i^-,x_i^+,y_i^-,y_i^+,u_i^-,u_i^+)$ are the parameters of the hexagon representing relation $r$ in coordinate $i$.
If $e_i < x_i^-$ or $e_i > x_i^+$ were to hold, then the triple $(e,r,f)$ would not be satisfied in coordinate $i$ either.
Similarly, if $e_i < y_i^-$ or $e_i> y_i^+$ were the case, then $(f,r,e)$ would not be satisfied in coordinate $i$. Suppose that $0< u_i^-$. Since $(e,r,f)$ holds, we know that $f_i-e_i\geq u_i^->0$, or in other words $f_i>e_i$. But because $(f,r,e)$ holds, we also know that $e_i-f_i\geq u_i^->0$, or in other words $e_i>f_i$, a contradiction. We similarly arrive at a contradiction if $0 > u_i^+$ were to hold.
\end{proof}

\begin{proposition}\label{propFullyExpressive}
Let $\mathcal{G}\subseteq \mathcal{E}\times\mathcal{R}\times\mathcal{E}$ be a knowledge graph. There exists an octagon embedding $\gamma$ such that $\mathcal{G}=\mathcal{G}_{\gamma}$.
\end{proposition}
\begin{proof}
We represent each entity $e$ as a vector $\gamma(e)=\mathbf{e}$ of dimension $n=|\mathcal{R}|\cdot |\mathcal{E}|$. In particular, each coordinate in $\mathbf{e}$ is associated with an entity $f\in\mathcal{E}$ and relation $r\in\mathcal{R}$; let us write $e_{r,f}$ for that coordinate. We show that there exists an octagon based embedding, which satisfies a triple $(e,r,f)$ iff it belongs to $\mathcal{G}$. In particular, we choose the coordinates of $\mathbf{e}$ as follows:
\begin{align*}
e_{r,f} = 
\begin{cases}
0 & \text{if $e=f$}\\
1 & \text{if $e\neq f$ and $(f,r,e)\in \mathcal{G}$}\\
2 & \text{otherwise}
\end{cases}
\end{align*}
Let us write $O^s_{r,e}$ for the octagon representing relation $r$ in coordinate associated with entity $e$ and relation $s$. For all coordinates where $s\neq r$, we choose $O^s_{r,e} = [0,2]\times [0,2]$. The octagons $O^r_{r,e}$ are chosen as follows:
\begin{itemize}
\item If $(e,r,e)\in \mathcal{G}$ we choose $O^r_{r,e}$ as the pentagon with vertices $(0,0), (0,1), (1,2), (2,2), (2,0)$. We will refer to this degenerate octagon as $O^+$.
\item Otherwise, we choose $O^r_{r,e}$ as the pentagon with vertices $(0,1), (1,2),(2,2), (2,0), (1,0)$. We will refer to this degenerate octagon as $O^-$.
\end{itemize}
These two octagons are depicted in Figure \ref{figFullyExpressive}. 

Suppose $(e,r,f)\in \mathcal{G}$. We need to show that $(e_{s,g},f_{s,g})\in O^r_{s,g}$ for every $s\in\mathcal{R}$ and $g\in\mathcal{E}$. If $s\neq r$ then this is trivially satisfied. Let us now consider the cases where $s=r$.
\begin{itemize}
\item If $g\neq e$, we have $e_{r,g}\in \{1,2\}$ and thus $(e_{r,g},f_{r,g})\in O^r_{r,g}$, regardless of the value of $f_{r,g}\in \{0,1,2\}$ and whether $O^r_{r,g}=O^+$ or $O^r_{r,g}=O^-$.
\item Now suppose $g=e$ with $e\neq f$. Then $e_{r,g}=e_{r,e}=0$. Since $(e,r,f)\in \mathcal{G}$, we have $f_{r,g}=1$. We thus have $(e_{r,g},f_{r,g})\in O^r_{r,g}$, regardless of whether $O^r_{r,g}=O^+$ or $O^r_{r,g}=O^-$.
\item Finally, suppose $g=e=f$. Since we assumed that $(e,r,f)=(e,r,e)\in \mathcal{G}$ we have that $O^r_{r,g}=O^+$. We also have $e_{r,g}=f_{r,g}=e_{r,e}=0$ and thus $(e_{r,g},f_{r,g})\in O^+$.
\end{itemize}
Now suppose $(e,r,f)\notin \mathcal{G}$.
We need to show that there exists some $g\in \mathcal{E}$ such that $(e_{r,g},f_{r,g})\notin O^r_{r,g}$. 
If $e=f$ we know that $O^r_{r,e}=O^-$ since we assumed $(e,r,e)\notin \mathcal{G}$ . Moreover, we have $e_{r,e}=0$ and thus $(e_{r,e},e_{r,e})\notin  O^-$. Now assume $e\neq f$. We show that $(e_{r,g},f_{r,g})\notin O^r_{r,g}$ for $g=e$. In other words, we show that $(e_{r,e},f_{r,e})\notin O^r_{r,e}$. We have $e_{r,e}=0$ and $f_{r,e}=2$. Regardless of whether  $O^r_{r,g}=O^+$ or $O^r_{r,g}=O^-$ we thus find $(e_{r,g},f_{r,g})\notin O^r_{r,g}$.
\end{proof}

\begin{figure}
\centering
\begin{subfigure}{0.49\columnwidth}
\centering
\includegraphics[width=110pt]{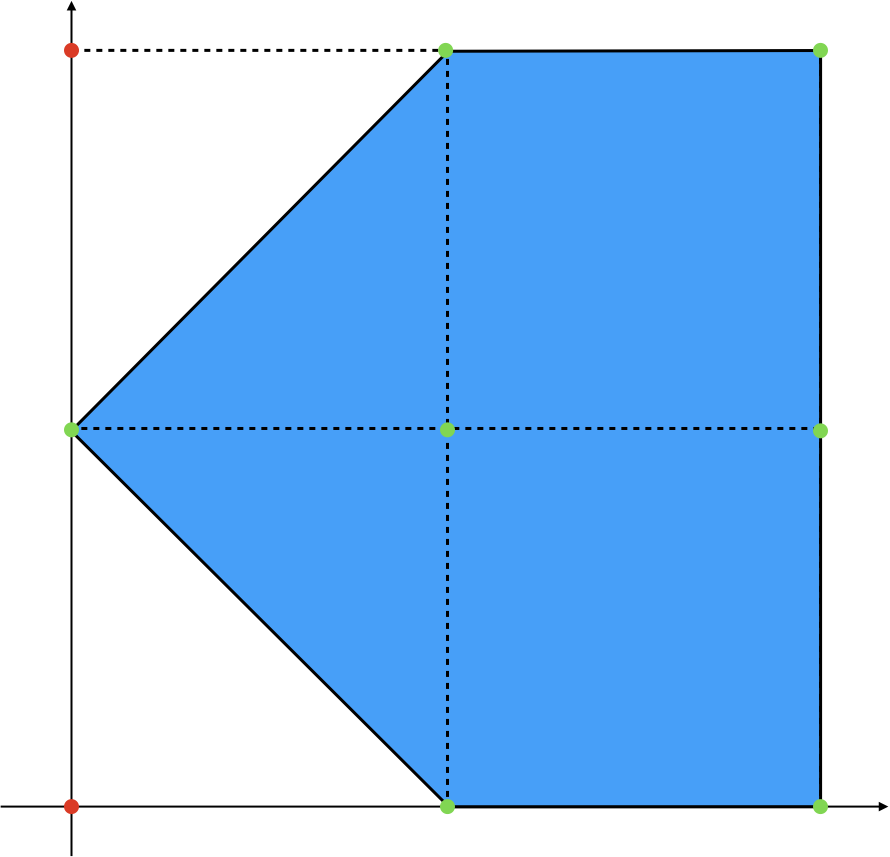}
\caption{$O^-$}
\end{subfigure}
\begin{subfigure}{0.49\columnwidth}
\centering
\includegraphics[width=110pt]{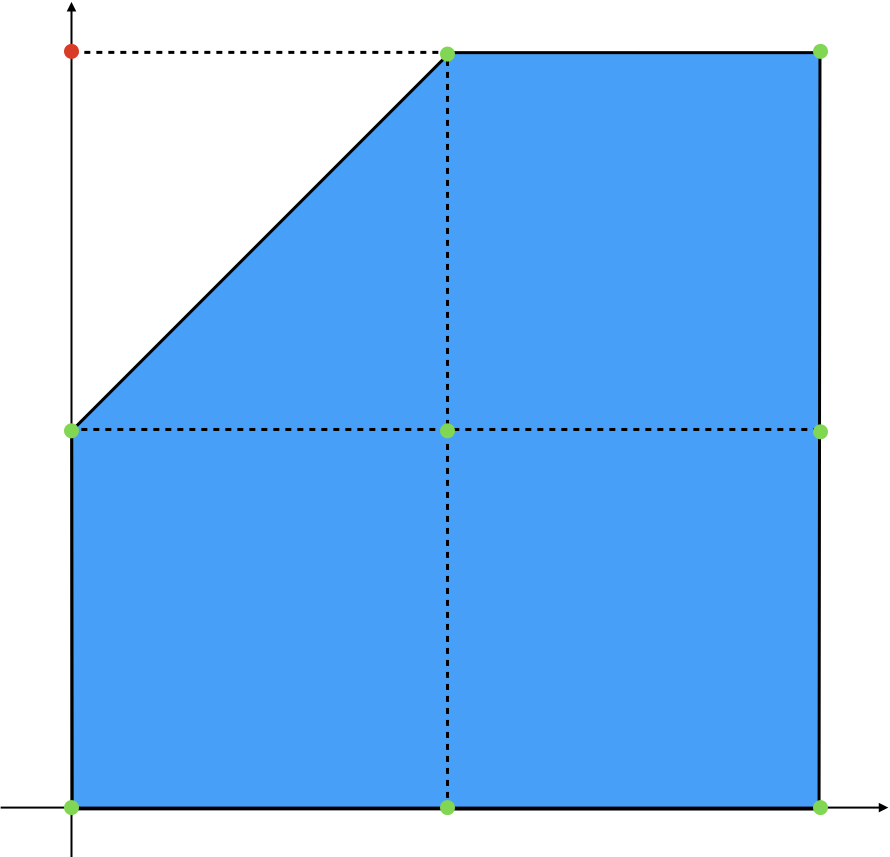}
\caption{$O^+$}
\end{subfigure}
\caption{Illustration of the octagons constructed in the proof of Proposition \ref{propFullyExpressive}.\label{figFullyExpressive}}
\end{figure}

\subsection{Capturing Rules: Negative Results}
The following result concerns region based embeddings of the form
\begin{align}\label{eqCoordinateWise}
X_r = \{(x_1{,}...,x_{n},y_1{,}...,y_n) \,|\, \forall i \,{\in}\,\{1{,}...,n\}.(x_i,y_{i})\in X^i_r\}
\end{align}
and
\begin{align}\label{eqBoxE}
X_r = \{&(x_1{,}...,x_{2n},y_1{,}...,y_{2n}) \,|\,\\
&\forall i\in\{1{,}...,n\}\,.\,(x_i,y_{n+i})\,{\in}\, A^i_r \wedge (y_i,x_{n+i})\,{\in}\, B^i_r\}\notag
\end{align}

\begin{proposition}\label{propNoMutualExclusion}
There exists a consistent set of hierarchy, intersection and mutual exclusion rules $\mathcal{K}$ such that every convex embedding of the form \eqref{eqCoordinateWise} or \eqref{eqBoxE} which satisfies $\mathcal{K}$ also satisfies some hierarchy rule which is not entailed by $\mathcal{K}$. 
\end{proposition}
\begin{proof}
Let $\mathcal{K}$ contain the following rules: 
\begin{align*}
&\{s_i(X,Y) \rightarrow r_j(X,Y) \,|\, i\,{\in}\, \{1,2,3,4\}, j\,{\in}\, \{1,2,3,4\}{\setminus} \{i\}\}
\end{align*}
as well as the following rules:
\begin{align*}
r_2(X,Y) \wedge r_3(X,Y) \wedge r_4(X,Y) &\rightarrow s_1(X,Y)\\
s_1(X,Y) \wedge r_1(X,Y) & \rightarrow \bot
\end{align*}
Note that $\mathcal{K}$ is indeed consistent. In fact, we can capture the rules in $\mathcal{K}$ using a region-based embedding $\gamma$ in $\mathbb{R}^4$ as follows:
\begin{align*}
\gamma(s_1) &= \{(0,0,0,0)\}\\
\gamma(s_2) &= \{(1,0,0,0)\}\\
\gamma(s_3) &= \{(0,1,0,0)\}\\
\gamma(s_4) &= \{(0,0,1,0)\}\\
\gamma(r_1) &= \text{CH}(\{(1,0,0,0),(0,1,0,0),(0,0,1,0)\})\\
\gamma(r_2) &= \text{CH}(\{(0,0,0,0),(0,1,0,0),(0,0,1,0)\})\\
\gamma(r_3) &= \text{CH}(\{(0,0,0,0),(1,0,0,0),(0,0,1,0)\})\\
\gamma(r_4) &= \text{CH}(\{(0,0,0,0),(1,0,0,0),(0,1,0,0)\})
\end{align*}
where we write CH to denote the convex hull.

Now let us consider convex embeddings $\gamma$ of the form \eqref{eqCoordinateWise}. Suppose such an embedding satisfies the rules in $\mathcal{K}$. Suppose furthermore that $\gamma(r)\neq \emptyset$ for every relation $r\in\{s_1,s_2,s_3,s_4,r_1,r_2,r_3,r_4\}$. Since $\gamma$ satisfies $s_1(X,Y) \wedge r_1(X,Y) \rightarrow \bot$, there must be some $\ell \in \{1,\ldots,n\}$ such that $X_{s_1}^{\ell}\cap X_{r_1}^{\ell}=\emptyset$, where we write $X^i_r$ for the two-dimensional convex region modelling relation $r$ in coordinate $i$. 
Since $s_1(X,Y) \rightarrow r_2(X,Y)$, $s_1(X,Y) \rightarrow r_3(X,Y)$ and $s_1(X,Y) \rightarrow r_4(X,Y)$ are satisfied, with $\gamma(s_1)\neq \emptyset$, we know that there exists some point $p_1\in X_{r_2}^{\ell}\cap X_{r_3}^{\ell} \cap X_{r_4}^{\ell}$. In the same way, we find that there exist points $p_2\in X_{r_1}^{\ell}\cap X_{r_3}^{\ell} \cap X_{r_4}^{\ell}$, $p_3\in X_{r_1}^{\ell}\cap X_{r_2}^{\ell} \cap X_{r_4}^{\ell}$ and $p_4\in X_{r_1}^{\ell}\cap X_{r_2}^{\ell} \cap X_{r_3}^{\ell}$. From Helly's theorem, it then follows that there exists a point $q\in X_{r_1}^{\ell}\cap X_{r_2}^{\ell} \cap X_{r_3}^{\ell} \cap X_{r_4}^{\ell}$. Because $\gamma$ satisfies the rule $r_2(X,Y) \wedge r_3(X,Y) \wedge r_4(X,Y) \rightarrow s_1(X,Y)$, we must then also have that $q\in X_{s_1}^{\ell}$. This is a contradiction since we assumed $X_{s_1}^{\ell}\cap X_{r_1}^{\ell}=\emptyset$ and we have $q\in X_{r_1}^{\ell}$. It follows that some of the regions $\gamma(s_1),\gamma(s_2),\gamma(s_3),\gamma(s_4)$ must be empty. But then $\gamma$ always satisfies rules which are not entailed by $\mathcal{K}$. For instance, if $\gamma(s_1)=\emptyset$ then, among others we have that $\gamma\models s_1\rightarrow r_1$.

For convex embeddings of the form \eqref{eqBoxE}, we find from the fact that $\gamma$ satisfies $s_1(X,Y) \wedge r_1(X,Y) \rightarrow \bot$ that there must be some $\ell \in \{1,\ldots,n\}$ such that $A_{s_1}^{\ell}\cap A_{r_1}^{\ell}=\emptyset$ or $B_{s_1}^{\ell}\cap B_{r_1}^{\ell}=\emptyset$. The remainder of the proof is then entirely analogous, focusing on the regions $A_r^{\ell}$ or $B_r^{\ell}$ rather than $X_r^{\ell}$.
\end{proof}

Let the octagons $O^{(l)}$ be defined as in Section \ref{secSelfComposition}.
\begin{lemma}\label{lemmaMonotoneOctagonCompositions}
Suppose $u^- \leq 0$, $u^+ \geq 0$ and $v^- \leq v^+$. For every $l\geq 2$ it holds that $O^{(l)}\subseteq O^{(l+2)}$. 
\end{lemma}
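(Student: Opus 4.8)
The plan is to split on whether $v^-=v^+$ and, in the main case $v^-<v^+$, to prove the stronger statement that the octagons $O^{(l)}$ are \emph{nested} from $l=2$ on.

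\medskip
\noindent\textbf{Setup.} If $v^-=v^+$, Lemma~\ref{lemmaSelfCompositionVequal} already gives the result: for $l\ge 2$ the octagon $O^{(l)}$ is a fixed diagonal segment when $l$ is even and a fixed anti-diagonal segment when $l$ is odd, so $O^{(l)}$ depends only on the parity of $l$ and hence $O^{(l)}=O^{(l+2)}$. Assume therefore $v^-<v^+$. Then Lemma~\ref{lemmaSelfComposition1} applies: all octagons $O^{(l)}$ with $l\ge 2$ share the same $x$-bounds $[x_2^-,x_2^+]$ and $y$-bounds $[y_2^-,y_2^+]$, where $x_2^+=\min(x^+,v^+-x^-)$, $x_2^-=\max(x^-,v^--x^+)$ (and analogously for $y$), and for every $l\ge 1$ we have $u_l^-\le 0\le u_l^+$ and $v_l^-\le v^-<v^+\le v_l^+$. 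Since two octagons sharing the same box satisfy $O\subseteq O'$ whenever the four ``diagonal'' bounds of $O'$ are no tighter than those of $O$, it suffices to prove that for every $l\ge 2$
\[
u_{l+1}^-\le u_l^-,\qquad u_{l+1}^+\ge u_l^+,\qquad v_{l+1}^-\le v_l^-,\qquad v_{l+1}^+\ge v_l^+ ,
\]
which gives $O^{(l)}\subseteq O^{(l+1)}\subseteq O^{(l+2)}$, even more than what is asked.

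\medskip
\noindent\textbf{The inductive step.} Substituting the frozen bounds $x_l^\pm=x_2^\pm$, $y_l^\pm=y_2^\pm$ into the self-composition recurrences \eqref{eqSelfComposition1}--\eqref{eqSelfComposition8}, for $l\ge 2$ they become
\begin{align*}
u_{l+1}^- &= \max(y^--x_2^+,\,u^-+u_l^-,\,v^--v_l^+), & u_{l+1}^+ &= \min(y^+-x_2^-,\,u^++u_l^+,\,v^+-v_l^-),\\
v_{l+1}^- &= \max(x_2^-+y^-,\,u^-+v_l^-,\,v^--u_l^+), & v_{l+1}^+ &= \min(x_2^++y^+,\,u^++v_l^+,\,v^+-u_l^-).
\end{align*}
Note that $(u_l^-,v_l^+)$ and $(u_l^+,v_l^-)$ each evolve among themselves. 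Proving the four monotonicity statements by a simultaneous induction on $l$, the step $l\to l+1$ (for $l\ge 3$) is routine: in the $\max$ defining $u_{l+1}^-$ the first argument is constant, the second satisfies $u^-+u_l^-\le u^-+u_{l-1}^-$ by the induction hypothesis, and the third is non-increasing because $v_l^+$ is non-decreasing; since a maximum of non-increasing sequences is non-increasing, $u_{l+1}^-\le u_l^-$. The other three are identical (using $u^+\ge 0$ for the $u^+$-recurrence).

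\medskip
\noindent\textbf{The base step, and the main obstacle.} What remains — and this is the part I expect to require real work — is the base case $l=2$, i.e.\ $u_3^-\le u_2^-$, $u_3^+\ge u_2^+$, $v_3^-\le v_2^-$, $v_3^+\ge v_2^+$; here one cannot invoke ``frozen bounds'', since $x_1^\pm=x^\pm$, $y_1^\pm=y^\pm$ differ from $x_2^\pm,y_2^\pm$. I would handle this by expanding each side as a $\max$/$\min$ of affine expressions in $x^\pm,y^\pm,u^\pm,v^\pm$ and checking it term by term, using the normalisation inequalities for $O$ from Proposition~\ref{propNormalisation} (notably $v^-\ge x^-+y^-$, $u^-\ge y^--x^+$, $u^+\le y^+-x^-$) together with the bounds $v_2^-\le v^-<v^+\le v_2^+$ and the explicit forms of $x_2^\pm,y_2^\pm$ recalled above. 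For example, for $u_3^-\le u_2^-$ one verifies that each argument of $u_3^-=\max(y^--x_2^+,\,u^-+u_2^-,\,v^--v_2^+)$ is $\le u_2^-$: the middle one because $u^-\le 0$; the last because $v_2^+\ge v^+$ and $u_2^-\ge v^--v^+$; and the first by a case split, either $x_2^+=x^+$ (then $y^--x_2^+=y^--x^+\le u_2^-$) or $x_2^+=v^+-x^-$ (then $y^--x_2^+=(x^-+y^-)-v^+\le v^--v^+\le u_2^-$, using $x^-+y^-\le v^-$). The remaining three base inequalities are treated similarly; the delicate point throughout is keeping track of which argument of the relevant $\max$/$\min$ is active, in particular in the sub-cases $u_2^+<u^+$ or $v_2^->x^-+y^-$, where a different argument of $v_2^-$ (resp.\ $v_2^+$) must be used. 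Once the base step is established, the simultaneous induction closes and the lemma follows.
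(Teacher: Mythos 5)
Your reduction of the $v^-=v^+$ case to Lemma~\ref{lemmaSelfCompositionVequal} and your use of Lemma~\ref{lemmaSelfComposition1} to freeze the $x$- and $y$-bounds from $l=2$ on are both fine, but the core of your plan---proving the stronger single-step nesting $O^{(l)}\subseteq O^{(l+1)}$ for $l\geq 2$ via separate monotonicity of the four diagonal bounds---cannot work, because that stronger statement is false. Take the normalised octagon $O=\mathsf{Octa}(-1,\,1.5,\,-0.5,\,1.5,\,-2,\,2,\,0,\,1)$ (a quadrilateral with vertices $(-1,1)$, $(-0.5,1.5)$, $(1.5,-0.5)$, $(0.5,-0.5)$). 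The recurrences \eqref{eqSelfComposition1}--\eqref{eqSelfComposition8} give $u_2^-=-1$, $u_2^+=1$, $v_2^-=-1.5$, $v_2^+=3$, and then $u_3^+=2.5$, $v_3^-=-1$, $v_3^+=2$. So $v_3^->v_2^-$ and $v_3^+<v_2^+$: your base case fails, and indeed $(-1,-0.5)\in O^{(2)}\setminus O^{(3)}$ (it is reached in $O^{(2)}$ via the midpoint $y=1$, whereas any midpoint witnessing it for $O^{(3)}$ would need $y\leq 0$ from the constraint $u_2^+=1$ and $y\geq 0.5$ from $v^-=0$). The failure is not confined to $l=2$ either: in the same example $u_4^+=2<u_3^+=2.5$, so you cannot rescue the argument by starting the induction later. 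The bounds genuinely oscillate with period two---an increase of $v_l^+$ at one step is precisely what permits a decrease of $u_{l+1}^-$ at the next, and vice versa---which is why the lemma asserts $O^{(l)}\subseteq O^{(l+2)}$ rather than $O^{(l)}\subseteq O^{(l+1)}$.

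The paper's proof works with exactly this two-step structure: it establishes the cyclic implications $(v_{l+1}^+\geq v_l^+)\Rightarrow(u_{l+2}^-\leq u_{l+1}^-)$, $(u_{l+1}^-\leq u_l^-)\Rightarrow(v_{l+2}^+\geq v_{l+1}^+)$, and their two mirror images, seeds them with $v_2^+\geq v_1^+$ and $v_2^-\leq v_1^-$ to obtain monotonicity along one parity class of indices, and then handles the other parity class by a case analysis on which argument of the relevant $\max$/$\min$ is active (using, for instance, that $u_{l+2}^-=v^--v_{l+1}^+$ together with $v_{l+1}^+=v^+-u_l^-$ forces $u_{l+2}^-=v^--v^++u_l^-\leq u_l^-$). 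If you want to salvage your approach, the invariant carried through the induction must be the coupled two-step one, $u_{l+2}^-\leq u_l^-$, $u_{l+2}^+\geq u_l^+$, $v_{l+2}^-\leq v_l^-$, $v_{l+2}^+\geq v_l^+$, with the $u$- and $v$-families treated jointly as in the paper; the four sequences are not individually monotone in $l$.
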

\begin{proof}
From Lemma \ref{lemmaSelfComposition1} we already know that
\begin{align*}
x_{l+2}^- &= x_l^- &
x_{l+2}^+ &= x_l^+ &
y_{l+2}^- &= y_l^- &
y_{l+2}^+ &= y_l^+
\end{align*}
For the remaining bounds, entirely similar as in the proof of Lemma \ref{lemmaSelfCompositionSquare}, we find:
\begin{align*}
&(v_{l+1}^+  \geq  v_l^+) \Rightarrow (u_{l+2}^- = y^- -x_2^+) \vee (u_{l+2}^- \leq u_{l+1}^-)\\
&(v_{l+1}^- \leq  v_l^-) \Rightarrow (u_{l+2}^+ = y^+ -x_2^-) \vee (u_{l+2}^+ \geq u_{l+1}^+)\\
&(u_{l+1}^+ \geq  u_l^+) \Rightarrow (v_{l+2}^- = x_2^- + y^-) \vee (v_{l+2}^- \leq v_{l+1}^-)\\
&(u_{l+1}^- \leq  u_l^-) \Rightarrow (v_{l+2}^+ = x_2^+ + y^+) \vee (v_{l+2}^+ \geq v_{l+1}^+)
\end{align*}
Note that $u_{l+2}^- = y^- -x_2^+$ implies $u_{l+2}^- \leq u_{l+1}^-$, and similar for the other cases. These implications can thus be simplified to:
\begin{align}
&(v_{l+1}^+  \geq  v_l^+) \Rightarrow  (u_{l+2}^- \leq u_{l+1}^-)\label{eqProofSelfComposition2IncreaseImplA1}\\
&(v_{l+1}^- \leq  v_l^-) \Rightarrow (u_{l+2}^+ \geq u_{l+1}^+)\\
&(u_{l+1}^+ \geq  u_l^+) \Rightarrow (v_{l+2}^- \leq v_{l+1}^-)\\
&(u_{l+1}^- \leq  u_l^-) \Rightarrow  (v_{l+2}^+ \geq v_{l+1}^+)
\end{align}
From Lemma \ref{lemmaSelfComposition1} we also know that $v_2^+ \geq v^+$ and $v_2^- \leq v^-$.
Putting everything together, we find that for even values of $l$, the following hold:
\begin{align}
&(u_{l+1}^- \leq u_{l}^-) \label{eqPropSelfCompositionSquareB1weak}\\
&(u_{l+1}^+ \geq u_{l}^+)\label{eqPropSelfCompositionSquareB2weak}\\
&(v_{l}^- \leq v_{l-1}^-)\label{eqPropSelfCompositionSquareB3weak}\\
&(v_{l}^+ \geq v_{l-1}^+)\label{eqPropSelfCompositionSquareB4weak}
\end{align}
Let $l\geq 2$. We show that $u_{l+2}^- \leq u_l^-$. 
\begin{itemize}
\item First assume that $l$ is even.  Then we already know that $u_{l+1}^-\leq u_l^-$. 
We can only have $u_{l+2}^- > u_{l+1}^-$ if $u_{l+2}^- = v^- - v_{l+1}^+$. From $u_{l+2}^- > u_{l+1}^-$, we also find that $v^- - v_{l+1}^+ > v^- - v_{l}^+$ has to hold, or in other words $v_l^+ > v_{l+1}^+$, which is only possible if $v_{l+1}^- = v^+ - u_l^-$. But then we find $u_{l+2}^- = v^- - v_{l+1}^+ = v^- - v^+ + u_l^- \leq u_l^-$. In any case, we thus find that $u_{l+2}^-\leq u_l^-$.
\item Now assume that $l$ is odd. From \eqref{eqPropSelfCompositionSquareB1weak} we know $u_{l+2}^-\leq u_{l+1}^-$. Suppose $u_{l+1}^- > u_l^-$. Note that $u_{l+2}^- \leq u_l^-$ is trivially satisfied if $u_{l+2}^- = y^- - x_{l+1}^+ = y^- - x_{2}^+ = y^- - x_{l-1}^+$. If $u_{l+2}^-=v^- - v_{l+1}^+$ we find $u_{l+2}^- = v^- - v_{l+1}^+ = v^- - v^+ + u_l^- \leq u_l^-$ as in the previous case. Now assume $u_{l+2}^-=u^-+u_l^-$. From $u_{l+1}^-> u_{l}^-$, using \eqref{eqProofSelfComposition2IncreaseImplA1}, we find $v_{l}^+ < v_{l-1}^+$. This is only possible if $u_{l+1}^- = v^--v_l^+$ and $v_l^+ = v^+ - u_{l-1}^-$. We find
$u_{l+2}^- = u^- + u_{l+1}^- = u^- + v^- - v^+ + u_{l-1}^-$. If $u_{l+2}^- > u_l^-$ were to hold, we would thus have $u^- + v^- - v^+ + u_{l-1}^- > u_l^- \geq u^- + u_{l-1}^-$, which implies $v^- > v^+$, a contradiction.
\end{itemize}
In entirely the same way, we can show that $u_{l+2}^+ \geq u_l^+$, $v_{l+2}^- \leq v_l^-$ and $v_{l+2}^+ \geq v_l^+$.
\end{proof}

\begin{proposition}\label{propNoArbitraryComposition}
Let $\mathcal{K} = \{r_1(X,Y) \wedge r_1(Y,Z) \rightarrow r_2(X,Z), r_2(X,Y) \wedge r_2(Y,Z) \rightarrow r_3(X,Z)\}$. It holds that every octagon embedding $\gamma$ which satisfies $\mathcal{K}$ also satisfies some composition rule which is not entailed by $\mathcal{K}$.
\end{proposition}
\begin{proof}
Let $\gamma$ be a coordinate-wise octagon embedding satisfying the rules in $\mathcal{K}$ and let $\gamma(r_1) = X_{r_1} = [O_1,\ldots,O_n]$. First assume that there is some $i$ such that $O_i = \mathsf{Octa}(x^-,\allowbreak x^+,\allowbreak y^-,\allowbreak y^+,\allowbreak u^-,\allowbreak u^+,\allowbreak v^-,\allowbreak v^+)$ with $u^+<0$ or $u^->0$. Then we know from Lemma \ref{lemmaSelfCompositionEmpty} that there exists some $k\in\mathbb{N}$ such that $O_i^{(k)}=\emptyset$. This also means that $X_{r_1}^{(k)}=\emptyset$, where we define $X_{r_1}^{(l)}=X_{r_1}^{(l-1)}\diamond X_{r_1}$ for $l\geq 2$ and $X_{r_1}^{(1)}=X_{r_1}$. Among others, we find that $\gamma$ satisfies the following composition rule, which is not entailed by $\mathcal{K}$:
\begin{align*}
r_1(X_1,X_2) \wedge \ldots \wedge r_1(X_k,X_{k+1}) \rightarrow r_2(X_1,X_{k+1})
\end{align*}
Now assume that none of the octagons $O_1,\ldots,O_n$ has a strictly negative $u^+$ bound or a strictly positive $u^-$ bound. From Lemma \ref{lemmaMonotoneOctagonCompositions} it then follows that:
$$
X_{r_1}\diamond X_{r_1} \subseteq X_{r_1}\diamond X_{r_1}\diamond X_{r_1}\diamond X_{r_1}
$$
It follows that $\gamma$ satisfies the following composition rule, which is not entailed by $\mathcal{K}$:
$$
r_1(X,Y) \wedge r_1(Y,Z) \rightarrow r_3(X,Z)
$$
\end{proof}

\subsection{Capturing Non-Composition Rules}
Let $\mathcal{K}$ be a rule base consisting of symmetry, inversion, hierarchy and intersection rules. We construct an octagon embedding $\gamma_{\mathcal{K}}$ satisfying these rules, without satisfying any rules not entailed by $\mathcal{K}$. Let $\mathcal{R}$ be the set of relations appearing in $\mathcal{K}$. We consider assignments $\alpha:\mathcal{R} \rightarrow \{-2,-1,0,1,2\}$ and write $\mathcal{A}$ for the set of all such assignments. We will construct embeddings with one coordinate for each assignment, writing $\alpha_i$ for the assignment associated with coordinate $i$. The following octagons will play a central role:
\begin{align*}
X_j &= \text{CH}\{(0,0),(j,j),(j,0)\}\\
Y_j &= \text{CH}\{(0,0),(j,j),(0,j)\}\\
Z &= \text{CH}\{(0,0),(0,1),(1,2),(2,2),(2,1),(1,0)\}
\end{align*}
Note that $X_j^{\textit{inv}}=Y_j$. Let $O_{r,i}$ be the octagon representing relation $r$ in coordinate $i$. We initialise these octagons as follows:
\begin{align*}
O^{(0)}_{r,i} = 
\begin{cases}
X_{\alpha_i(r)} & \text{if $\alpha_i(r)>0$}\\
Y_{|\alpha_i(r)|} & \text{if $\alpha_i(r)<0$}\\
Z & \text{otherwise}
\end{cases}
\end{align*}
We then consider the following update rules ($j\geq 1$):
\begin{align*}
O^{(j)}_{r,i} = \textit{cl}\{&O^{(j-1)}_{r,i}\\
&\cup \{O^{(j)}_{s,i}\,|\, \mathcal{K}\models s\subseteq r\}\\
&\cup \{(O^{(j)}_{s,i})^{\textit{inv}}\,|\, \mathcal{K}\models s\subseteq r^{-1}\}\\
&\cup \{O^{(j)}_{s,i} \cap O^{(j)}_{t,i} \,|\, \mathcal{K}\models s\cap t \subseteq r\}\}
\end{align*}
Clearly, in each iteration, we have $O^{(j)}_{r,i}\supseteq O^{(j-1)}_{r,i}$. Furthermore, there are only finitely many values $O^{(j)}_{r,i}$ can take (i.e.\ the closures of subsets of $\{X_1,X_2,Z,Y_1,Y_2\}$). This iterative process will thus reach a fix point after a finite number of steps. Let $O_{r,i}$ be the resulting octagons, and $\gamma_{\mathcal{K}}$ the associated octagon embedding. The following result immediately follows from the proposed construction:

\begin{lemma}
It holds that $\gamma_{\mathcal{K}}$ captures every rule in $\mathcal{K}$.
\end{lemma}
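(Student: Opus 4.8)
The plan is to read off the conclusion directly from the fixpoint characterisation of the octagons $O_{r,i}$. I would view the update rules as a single monotone operator $\Phi$ acting on the tuple $(O_{r,i})_{r\in\mathcal{R},\,i\in\mathcal{A}}$; since each $O_{r,i}$ ranges over the finite lattice of octagon closures of subsets of $\{X_1,X_2,Z,Y_1,Y_2\}$ and $\Phi$ is monotone, the least fixpoint above the initial tuple $(O^{(0)}_{r,i})$ exists and is reached after finitely many steps, and this least fixpoint is exactly the tuple $(O_{r,i})$ defining $\gamma_{\mathcal{K}}$. Consequently, for every coordinate $i$ and relation $r$,
\[
O_{r,i} = \textit{cl}\!\left(O_{r,i}\cup\{O_{s,i}\mid\mathcal{K}\models s\subseteq r\}\cup\{O_{s,i}^{\textit{inv}}\mid\mathcal{K}\models s\subseteq r^{-1}\}\cup\{O_{s,i}\cap O_{t,i}\mid\mathcal{K}\models s\cap t\subseteq r\}\right).
\]
Because the octagon closure $\textit{cl}(S)$ contains every member of $S$, this yields three properties, uniform over all $i\in\mathcal{A}$: (i) $\mathcal{K}\models s\subseteq r$ implies $O_{s,i}\subseteq O_{r,i}$; (ii) $\mathcal{K}\models s\subseteq r^{-1}$ implies $O_{s,i}^{\textit{inv}}\subseteq O_{r,i}$; (iii) $\mathcal{K}\models s\cap t\subseteq r$ implies $O_{s,i}\cap O_{t,i}\subseteq O_{r,i}$.

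Next I would dispatch the four rule types, using that $\gamma_{\mathcal{K}}$ is coordinate-wise, so that inclusion, intersection and inversion of the relation regions are all computed coordinatewise. For a hierarchy rule $r_1(X,Y)\rightarrow r_2(X,Y)$ in $\mathcal{K}$ we have $\mathcal{K}\models r_1\subseteq r_2$, so (i) gives $O_{r_1,i}\subseteq O_{r_2,i}$ for all $i$, hence $\gamma_{\mathcal{K}}(r_1)\subseteq\gamma_{\mathcal{K}}(r_2)$. For an inversion rule $r_1(X,Y)\rightarrow r_2(Y,X)$ we have $\mathcal{K}\models r_1\subseteq r_2^{-1}$, so (ii) gives $O_{r_1,i}^{\textit{inv}}\subseteq O_{r_2,i}$; applying the order-preserving involution $(\cdot)^{\textit{inv}}$ (see \eqref{eqInvOctagon}) gives $O_{r_1,i}\subseteq O_{r_2,i}^{\textit{inv}}$, i.e.\ $\gamma_{\mathcal{K}}(r_1)\subseteq\gamma_{\mathcal{K}}(r_2)^{\textit{inv}}$. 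A symmetry rule $r(X,Y)\rightarrow r(Y,X)$ is the special case $r_1=r_2=r$, giving $O_{r,i}^{\textit{inv}}\subseteq O_{r,i}$ and hence $O_{r,i}\subseteq O_{r,i}^{\textit{inv}}$, so $\gamma_{\mathcal{K}}(r)\subseteq\gamma_{\mathcal{K}}(r)^{\textit{inv}}$. Finally, for an intersection rule $r_1(X,Y)\wedge r_2(X,Y)\rightarrow r_3(X,Y)$ we have $\mathcal{K}\models r_1\cap r_2\subseteq r_3$, so (iii) gives $O_{r_1,i}\cap O_{r_2,i}\subseteq O_{r_3,i}$ for all $i$, hence $\gamma_{\mathcal{K}}(r_1)\cap\gamma_{\mathcal{K}}(r_2)\subseteq\gamma_{\mathcal{K}}(r_3)$. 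In each case this is precisely the condition under which $\gamma_{\mathcal{K}}$ captures the rule.

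The step that needs the most care is the fixpoint set-up: one should check that $\Phi$ is genuinely well-defined as a map into octagons (so that the octagon closure of a finite union of octagons is again an octagon, keeping $\gamma_{\mathcal{K}}$ an octagon embedding) and monotone, and that the \emph{simultaneous} least fixpoint over all pairs $(r,i)$ coincides with what the staged construction $O^{(0)}\subseteq O^{(1)}\subseteq\cdots$ converges to. The only mildly delicate algebraic point used above is the equivalence $A\subseteq B^{\textit{inv}}\Longleftrightarrow A^{\textit{inv}}\subseteq B$ for octagons, which is immediate from $(\cdot)^{\textit{inv}}$ being an order-preserving involution. Everything else is bookkeeping; in particular, no consistency assumption on $\mathcal{K}$ is needed, since a rule base consisting of symmetry, inversion, hierarchy and intersection rules is always consistent.
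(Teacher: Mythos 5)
Your proof is correct and simply fills in the argument the paper leaves implicit (the paper states the lemma ``immediately follows from the proposed construction'' and gives no further proof): at the fixpoint of the monotone iteration the defining equation yields the inclusions $O_{s,i}\subseteq O_{r,i}$, $O_{s,i}^{\textit{inv}}\subseteq O_{r,i}$ and $O_{s,i}\cap O_{t,i}\subseteq O_{r,i}$ for the entailed hierarchy, inversion and intersection statements, and the coordinate-wise nature of $\gamma_{\mathcal{K}}$ together with $(\cdot)^{\textit{inv}}$ being an inclusion-preserving involution then gives capture of every symmetry, inversion, hierarchy and intersection rule in $\mathcal{K}$. This is essentially the paper's intended route, so there is nothing to correct.
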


We now show that $\gamma_{\mathcal{K}}$ does not satisfy any unwanted rules.

\begin{lemma}\label{lemmaNonCompositionRulesNoUnwantedHierarchy}
Assume $\mathcal{K}\not\models r\subseteq s$. There exists a coordinate $i$ such that $O_{r,i}\not\subseteq O_{s,i}$.
\end{lemma}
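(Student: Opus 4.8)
The plan is to produce, for the given pair $(r,s)$ with $\mathcal{K}\not\models r\subseteq s$, a single assignment $\alpha\in\mathcal{A}$ (hence a single coordinate $i$) together with an explicit point witnessing $O_{r,i}\not\subseteq O_{s,i}$. Concretely, I would set
\[
\alpha(p)=\begin{cases}
2 & \text{if } \mathcal{K}\models r\subseteq p,\\
-2 & \text{if } \mathcal{K}\models r\subseteq p^{-1} \text{ and } \mathcal{K}\not\models r\subseteq p,\\
0 & \text{otherwise,}
\end{cases}
\]
which uses only the values $-2,0,2$ and is therefore a legitimate element of $\mathcal{A}$; let $i$ be the coordinate with $\alpha_i=\alpha$. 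Since $\mathcal{K}\models r\subseteq r$ we have $\alpha_i(r)=2$, so $O^{(0)}_{r,i}=X_2$, and hence $(2,0)$, a vertex of $X_2$, lies in $O^{(0)}_{r,i}\subseteq O_{r,i}$. Everything then reduces to showing $(2,0)\notin O_{s,i}$.

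For that I would prove the invariant: for every $j$ and every relation $p$, if $(2,0)\in O^{(j)}_{p,i}$ then $\mathcal{K}\models r\subseteq p$, and symmetrically if $(0,2)\in O^{(j)}_{p,i}$ then $\mathcal{K}\models r\subseteq p^{-1}$. The proof is by induction on $j$, with an inner induction on the fixpoint iteration that computes the octagons at a fixed level $j$. The base case $j=0$ is immediate: $O^{(0)}_{p,i}$ is $X_{\alpha_i(p)}$, $Y_{|\alpha_i(p)|}$ or $Z$, and among these only $X_2$ contains $(2,0)$ and only $Y_2$ contains $(0,2)$, which by the definition of $\alpha$ forces $\mathcal{K}\models r\subseteq p$, resp.\ $\mathcal{K}\models r\subseteq p^{-1}$. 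For the step, recall that the defining union of $O^{(j)}_{p,i}$ is built from $O^{(j-1)}_{p,i}$, the regions $O^{(j)}_{q,i}$ with $\mathcal{K}\models q\subseteq p$, the regions $(O^{(j)}_{q,i})^{\textit{inv}}$ with $\mathcal{K}\models q\subseteq p^{-1}$, and the intersections $O^{(j)}_{q,i}\cap O^{(j)}_{t,i}$ with $\mathcal{K}\models q\cap t\subseteq p$. Applying the induction hypothesis to each constituent and using three elementary facts about entailment --- transitivity of $\subseteq$, the equivalence $\mathcal{K}\models q\subseteq p^{-1}\Leftrightarrow\mathcal{K}\models q^{-1}\subseteq p$, and the implication $(\mathcal{K}\models r\subseteq q)\wedge(\mathcal{K}\models r\subseteq t)\Rightarrow\mathcal{K}\models r\subseteq q\cap t$ --- one checks case by case that a constituent of $O^{(j)}_{p,i}$ can only contain $(2,0)$ when $\mathcal{K}\models r\subseteq p$, and (using the companion statement) only contain $(0,2)$ when $\mathcal{K}\models r\subseteq p^{-1}$; the case of $(O^{(j)}_{q,i})^{\textit{inv}}$ is precisely where the two halves of the invariant feed into each other.

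The step I expect to require the most care is the effect of the closure operator $\textit{cl}$, which turns a finite union of regions into an octagon and could a priori create $(2,0)$ out of constituents none of which contains it. I would dispose of this by observing that every region occurring in the construction --- a closure of a subset of $\{X_1,X_2,Z,Y_1,Y_2\}$, an inverse of such, or an intersection of two such --- satisfies $\min\{\,y-x : (x,y)\in R\,\}\ge -1$ unless $X_2\subseteq R$, since among $X_1,X_2,Z,Y_1,Y_2$ only $X_2$ reaches $y-x=-2$, and this threshold behaves predictably under inverses (it becomes $\max\{\,y-x\,\}\le 1$ unless $Y_2\subseteq R$) and intersections. Consequently $(2,0)$, which has $y-x=-2$, lies in $\textit{cl}(\bigcup_k R_k)$ iff some $R_k$ already contains $(2,0)$, which collapses the octagon‑hull case of the induction to the plain union case; the symmetric observation with $Y_2$ and $y-x=2$ handles $(0,2)$. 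With the invariant established, the hypothesis $\mathcal{K}\not\models r\subseteq s$ gives $(2,0)\notin O_{s,i}$, while $(2,0)\in O_{r,i}$, so $O_{r,i}\not\subseteq O_{s,i}$, as required.
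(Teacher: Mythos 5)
Your proof is correct and follows essentially the same strategy as the paper's: fix one coordinate whose assignment singles out $r$, then show by induction over the fixpoint iteration that a distinguished witness (your point $(2,0)$, the paper's triangle $X_2$) can only enter $O^{(j)}_{p,i}$ when $\mathcal{K}\models r\subseteq p$, with a companion statement for $(0,2)$ versus $r\subseteq p^{-1}$ to handle the inverse case. The only differences are cosmetic --- you pre-seed the assignment with the entailment closure of $r$ where the paper simply sets $\alpha_i(r)=2$ and $\alpha_i(t)=1$ otherwise --- and your explicit bound $y-x\ge -1$ for all non-$X_2$-containing constituents actually justifies the closure step more carefully than the paper does.
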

\begin{proof}
Consider the following assignment:
\begin{align*}
\alpha_i(t) = 
\begin{cases}
2 & \text{if $t=r$}\\
1 & \text{otherwise}
\end{cases}
\end{align*}
We show by induction that for any relation $t$ it holds that  $X_2\subseteq O^{(j)}_{t,i}$ implies $\mathcal{K}\models r\subseteq t$ and $Y_2\subseteq O^{(j)}_{t,i}$ implies $\mathcal{K}\models r^{-1}\subseteq t$, from which it will follow in particular that $O^{(j)}_{r,i}\not\subseteq O^{(j)}_{s,i}$. The base case is trivially satisfied since we only have $X_2\subseteq O^{(j)}_{t,i}$ for $t=r$, while $Y_2\subseteq O^{(j)}_{t,i}$ does not hold for any relation.  Now suppose the result is known to hold for all octagons $O^{(j-1)}_{t,i}$. 

We now consider the inductive case. 
Suppose $X_2\subseteq O^{(j)}_{t,i}$. 
Note that all octagons are closures of subsets of $X_1,X_2,Y_1,Y_2$. For such octagons, we have $X_2\subseteq \textit{cl}(O_1,\ldots,O_k)$ iff $X_2\subseteq O_j$ for some $j$. We thus have that one of the following cases must be true.
\begin{itemize}
\item If $X_2\subseteq O^{(j-1)}_{t,i}$ then by induction we have $\mathcal{K}\models r\subseteq t$. 
\item If $X_2\subseteq O^{(j-1)}_{p,i}$ with $\mathcal{K}\models p\subseteq t$ then by induction we have $\mathcal{K}\models r\subseteq p$, and thus we also find $\mathcal{K}\models r\subseteq t$.  
\item If $Y_2\subseteq O^{(j-1)}_{p,i}$ with $\mathcal{K}\models p\subseteq t^{-1}$ then by induction we have $\mathcal{K}\models r^{-1}\subseteq p$, and thus we also find $\mathcal{K}\models r\subseteq t$. 
\item If $X_2\subseteq O^{(j-1)}_{p,i}\cap O^{(j-1)}_{q,i}$ with $p\cap q\subseteq t$ then by induction we have $\mathcal{K}\models r\subseteq p$ and $\mathcal{K}\models r\subseteq q$ and thus also 
$\mathcal{K}\models r\subseteq t$.
\end{itemize}
The proof for $Y_2\subseteq O^{(j)}_{t,i}$ is entirely analogous.
\end{proof}

\begin{lemma}
Assume $\mathcal{K}\not\models r\subseteq s^{-1}$. There exists a coordinate $i$ such that $O_{r,i}\not\subseteq O_{s,i}^{\textit{inv}}$.
\end{lemma}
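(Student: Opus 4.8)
The plan is to prove this exactly as the analogue of Lemma~\ref{lemmaNonCompositionRulesNoUnwantedHierarchy}, reducing to that lemma (in fact to the induction carried out inside its proof) by exploiting that $(\cdot)^{\textit{inv}}$ is an involution. I would pick the same coordinate as there: let $i$ be a coordinate whose assignment satisfies $\alpha_i(r) = 2$ and $\alpha_i(t) = 1$ for all $t \neq r$. Then $O^{(0)}_{r,i} = X_{2}$, and since the sequence $(O^{(j)}_{r,i})_j$ is non-decreasing, $X_2 \subseteq O_{r,i}$. Hence it suffices to show $X_2 \not\subseteq O_{s,i}^{\textit{inv}}$.

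The key observation is that $(\cdot)^{\textit{inv}}$ is an inclusion-preserving bijection on subsets of $\mathbb{R}^2$ with $((\cdot)^{\textit{inv}})^{\textit{inv}} = \mathrm{id}$, and that $X_2^{\textit{inv}} = Y_2$ (and, if needed, $Z^{\textit{inv}} = Z$, since the vertex set of $Z$ is invariant under the coordinate swap). Therefore $X_2 \subseteq O_{s,i}^{\textit{inv}}$ holds if and only if $Y_2 = X_2^{\textit{inv}} \subseteq (O_{s,i}^{\textit{inv}})^{\textit{inv}} = O_{s,i}$. But the induction in the proof of Lemma~\ref{lemmaNonCompositionRulesNoUnwantedHierarchy}, carried out for precisely this assignment $\alpha_i$, shows that $Y_2 \subseteq O^{(j)}_{t,i}$ implies $\mathcal{K} \models r^{-1} \subseteq t$ for every relation $t$ and every $j$; taking the fixed point with $t = s$ gives $\mathcal{K} \models r^{-1} \subseteq s$, i.e.\ $\mathcal{K} \models r \subseteq s^{-1}$, contradicting the hypothesis $\mathcal{K} \not\models r \subseteq s^{-1}$. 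Thus $X_2 \not\subseteq O_{s,i}^{\textit{inv}}$ and so $O_{r,i} \not\subseteq O_{s,i}^{\textit{inv}}$, as required.

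The argument is essentially bookkeeping, and I do not anticipate a real obstacle. The two things that genuinely need checking are that $(\cdot)^{\textit{inv}}$ is an inclusion-preserving involution (immediate from the definition $(x,y) \mapsto (y,x)$, together with \eqref{eqInvOctagon}) and the small identity $X_2^{\textit{inv}} = Y_2$, which was noted when $X_j$ and $Y_j$ were introduced. The one place where a reader might want extra detail is the equivalence ``$\mathcal{K} \models r^{-1} \subseteq s$ iff $\mathcal{K} \models r \subseteq s^{-1}$'', which holds because inverting both sides of a hierarchy rule is a sound and invertible transformation. Note that, unlike in Lemma~\ref{lemmaNonCompositionRulesNoUnwantedHierarchy}, no case analysis over $\textit{cl}$ and $\cap$ is required here, since we never unfold $O_{s,i}^{\textit{inv}}$: we pull the inverse back onto $O_{s,i}$ and invoke the earlier induction verbatim.
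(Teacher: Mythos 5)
Your proposal is correct and matches the paper's own argument: the paper also reuses the assignment $\alpha_i(r)=2$, $\alpha_i(t)=1$ otherwise, reduces $X_2\subseteq O_{s,i}^{\textit{inv}}$ to $Y_2\subseteq O_{s,i}$, and then applies the induction from Lemma~\ref{lemmaNonCompositionRulesNoUnwantedHierarchy} together with the equivalence $\mathcal{K}\models r^{-1}\subseteq s$ iff $\mathcal{K}\models r\subseteq s^{-1}$. Your phrasing via the involution property of $(\cdot)^{\textit{inv}}$ is just a slightly more explicit bookkeeping of the same step.
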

\begin{proof}
We consider the same assignment as in the proof of Lemma \ref{lemmaNonCompositionRulesNoUnwantedHierarchy}:
\begin{align*}
\alpha_i(t) = 
\begin{cases}
2 & \text{if $t=r$}\\
1 & \text{otherwise}
\end{cases}
\end{align*}
We then again have that for any relation $t$ it holds that  $X_2\subseteq O^{(j)}_{t,i}$ implies $\mathcal{K}\models r\subseteq t$ and $Y_2\subseteq O^{(j)}_{t,i}$ implies $\mathcal{K}\models r^{-1}\subseteq t$. We have in particular that $X_2\subseteq O^{(j)}_{s^{-1},i}$ iff  $Y_2 \subseteq O^{(j)}_{s,i}$. The latter implies $\mathcal{K}\models r^{-1}\subseteq s$, which is equivalent to $\mathcal{K}\models r\subseteq s^{-1}$. Since we assumed $\mathcal{K}\models r\not\subseteq s^{-1}$, it follows that $X_2\not\subseteq O^{(j)}_{s^{-1},i}$ and thus $O_{r,i}\not\subseteq O_{s,i}^{\textit{inv}}$.
\end{proof}

Note that the above lemma also holds if $r=s$, so we have also established that $\gamma_{\mathcal{K}}$ does not capture any unwanted symmetry rules.

\begin{lemma}
Assume $\mathcal{K}\not\models r_1\cap r_2 \subseteq s$. There exists a coordinate $i$ such that $O_{r_1,i} \cap O_{r_2,i} \not\subseteq O_{s,i}$.
\end{lemma}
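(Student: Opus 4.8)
The plan is to follow the template of Lemma~\ref{lemmaNonCompositionRulesNoUnwantedHierarchy}, now tracking the region $X_2$ simultaneously through $O_{r_1,i}$ and $O_{r_2,i}$. I would take the coordinate $i$ corresponding to the assignment
\begin{align*}
\alpha_i(t) =
\begin{cases}
2 & \text{if } t\in\{r_1,r_2\},\\
1 & \text{otherwise,}
\end{cases}
\end{align*}
so that $O^{(0)}_{r_1,i}=O^{(0)}_{r_2,i}=X_2$ and $O^{(0)}_{t,i}=X_1$ for every other relation $t$. Since the octagons only grow during the fixpoint iteration, $X_2\subseteq O_{r_1,i}$ and $X_2\subseteq O_{r_2,i}$, hence $X_2\subseteq O_{r_1,i}\cap O_{r_2,i}$; it therefore suffices to show $X_2\not\subseteq O_{s,i}$.

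For this I would prove by induction on $j$ the joint invariant: for every relation $t$, $X_2\subseteq O^{(j)}_{t,i}$ implies $\mathcal{K}\models r_1\cap r_2\subseteq t$, and $Y_2\subseteq O^{(j)}_{t,i}$ implies $\mathcal{K}\models r_1^{-1}\cap r_2^{-1}\subseteq t$. The base case is immediate: $X_2\subseteq O^{(0)}_{t,i}$ holds precisely for $t\in\{r_1,r_2\}$, and $\mathcal{K}\models r_1\cap r_2\subseteq r_1$, $\mathcal{K}\models r_1\cap r_2\subseteq r_2$ are logical validities, while $Y_2\subseteq O^{(0)}_{t,i}$ never holds. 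For the inductive step I would use, exactly as in Lemma~\ref{lemmaNonCompositionRulesNoUnwantedHierarchy}, that membership of $X_2$ (resp.\ $Y_2$) in the closure of a union of the relevant octagons forces membership in one of those octagons, together with $X_2^{\textit{inv}}=Y_2$. This splits $X_2\subseteq O^{(j)}_{t,i}$ into four cases according to the four components of the update rule: containment in $O^{(j-1)}_{t,i}$ itself; in some $O^{(j-1)}_{p,i}$ with $\mathcal{K}\models p\subseteq t$; in some $(O^{(j-1)}_{p,i})^{\textit{inv}}$ with $\mathcal{K}\models p\subseteq t^{-1}$; or in some $O^{(j-1)}_{p,i}\cap O^{(j-1)}_{q,i}$ with $\mathcal{K}\models p\cap q\subseteq t$. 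In the first and second cases the induction hypothesis and transitivity of $\subseteq$ give $\mathcal{K}\models r_1\cap r_2\subseteq t$; in the third case, $X_2\subseteq (O^{(j-1)}_{p,i})^{\textit{inv}}$ means $Y_2\subseteq O^{(j-1)}_{p,i}$, so the induction hypothesis gives $\mathcal{K}\models r_1^{-1}\cap r_2^{-1}\subseteq p$, and combined with $\mathcal{K}\models p^{-1}\subseteq t$ and $(r_1\cap r_2)^{-1}=r_1^{-1}\cap r_2^{-1}$ we obtain $\mathcal{K}\models r_1\cap r_2\subseteq t$; in the fourth case, $X_2\subseteq O^{(j-1)}_{p,i}\cap O^{(j-1)}_{q,i}$ forces $X_2\subseteq O^{(j-1)}_{p,i}$ and $X_2\subseteq O^{(j-1)}_{q,i}$, so the induction hypothesis yields $\mathcal{K}\models r_1\cap r_2\subseteq p$ and $\mathcal{K}\models r_1\cap r_2\subseteq q$, which with $\mathcal{K}\models p\cap q\subseteq t$ gives $\mathcal{K}\models r_1\cap r_2\subseteq t$. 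The $Y_2$ part of the invariant is handled symmetrically. Instantiating the invariant at $t=s$ and using the hypothesis $\mathcal{K}\not\models r_1\cap r_2\subseteq s$ gives $X_2\not\subseteq O_{s,i}$, and hence $O_{r_1,i}\cap O_{r_2,i}\not\subseteq O_{s,i}$.

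The routine part is the semantic bookkeeping of entailments; the delicate point, inherited from the earlier lemmas, is the geometric claim that the only way $X_2$ (resp.\ $Y_2$) can become contained in a reachable octagon is through the channels enumerated above --- in other words, that $X_2$ behaves as an extremal witness under closures and intersections of the octagons that can arise in this coordinate. Getting the inverse case to interact cleanly with this --- so that an inverse octagon containing $X_2$ is correctly traced back through $Y_2$ to an inclusion of the form $r_1^{-1}\cap r_2^{-1}\subseteq p$, which then re-inverts to $r_1\cap r_2\subseteq t$ --- is where one has to be most careful, and it is the reason the $X_2$ and $Y_2$ statements must be proved together.
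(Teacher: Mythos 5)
Your proposal is correct and follows essentially the same route as the paper: the paper's proof uses exactly the assignment $\alpha_i(t)=2$ for $t\in\{r_1,r_2\}$ and $1$ otherwise, and then invokes the same inductive invariant ($X_2\subseteq O^{(j)}_{t,i}$ implies $\mathcal{K}\models r_1\cap r_2\subseteq t$, and $Y_2\subseteq O^{(j)}_{t,i}$ implies $\mathcal{K}\models r_1^{-1}\cap r_2^{-1}\subseteq t$) by analogy with the unwanted-hierarchy lemma. Your write-up simply spells out the case analysis that the paper leaves implicit; the reasoning matches.
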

\begin{proof}
Consider the following assignment:
\begin{align*}
\alpha_i(t) = 
\begin{cases}
2 & \text{if $t\in \{r_1,r_2\}$}\\
1 & \text{otherwise}
\end{cases}
\end{align*}
The remainder of the proof is entirely analogous to the proof of Lemma \ref{lemmaNonCompositionRulesNoUnwantedHierarchy}. In particular, in the same way, we find
 by induction that for any relation $t$ it holds that  $X_2\subseteq O^{(j)}_{t,i}$ implies $\mathcal{K}\models r_1 \cap r_2 \subseteq t$ and $Y_2\subseteq O^{(j)}_{t,i}$ implies $\mathcal{K}\models r_1^{-1} \cap r_2^{-1}\subseteq t$.
\end{proof}

\begin{lemma}
For any $r_1,r_2,s\in \mathcal{R}$, there exists a coordinate $i$ such that $O_{r_1,i} \diamond O_{r_2,i} \not\subseteq O_{s,i}$.
\end{lemma}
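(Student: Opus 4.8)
The plan is to exhibit a single coordinate that works simultaneously for every choice of $r_1,r_2,s$, so that no case analysis on $\mathcal{K}$ or on the relationship between $r_1$, $r_2$ and $s$ is needed. Concretely, I would take the coordinate $i$ whose associated assignment $\alpha_i$ is identically zero, i.e.\ $\alpha_i(t)=0$ for every $t\in\mathcal{R}$. By the initialisation rule, this forces $O^{(0)}_{t,i}=Z$ for every relation $t$.

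The next step is to show by induction on $j$ that $O^{(j)}_{t,i}=Z$ for all $t$ and all $j\geq 0$. The key facts are that $Z$ is itself an octagon (namely $\mathsf{Octa}(0,2,0,2,-1,1,0,4)$, a degenerate octagon), that $Z$ is symmetric about the diagonal so that $Z^{\textit{inv}}=Z$, and that $Z\cap Z=Z$. Hence, whenever all octagons at level $j-1$ equal $Z$, the set appearing under $\textit{cl}$ in the update rule for $O^{(j)}_{r,i}$ consists only of copies of $Z$, and $\textit{cl}(\{Z\})=Z$; so the constant-$Z$ family is closed under all three closure operations and is (the least) fixpoint of the simultaneous iteration. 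Therefore $O_{r_1,i}=O_{r_2,i}=O_{s,i}=Z$.

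Finally I would establish the geometric fact $Z\diamond Z\not\subseteq Z$ by an explicit witness: the point $(0,2)$ belongs to $Z\diamond Z$, since $(0,1)\in Z$ and $(1,2)\in Z$ give the intermediate value $y=1$; but $(0,2)\notin Z$, because it violates the constraint $y-x\leq 1$. Combining the two parts, $O_{r_1,i}\diamond O_{r_2,i}=Z\diamond Z\not\subseteq Z=O_{s,i}$, which is exactly the claim.

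The only point requiring care — hardly an obstacle — is that the update rule defines $O^{(j)}_{r,i}$ in terms of other octagons at the same level $j$ (for subsumed relations), so one must check that the constant-$Z$ family is genuinely a fixpoint of this simultaneous process rather than merely stable under one pass; but since each closure operation applied to copies of $Z$ returns $Z$, this verification is immediate. Everything else is routine arithmetic on the two-dimensional regions.
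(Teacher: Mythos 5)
Your proposal is correct and follows essentially the same route as the paper: both arguments pick a coordinate whose assignment makes $O_{r_1,i}$ and $O_{r_2,i}$ contain $Z$, and then use the witness $(0,2)\in Z\diamond Z$ (via the intermediate point $1$) together with $(0,2)\notin O_{s,i}$. The only difference is cosmetic — you take the all-zero assignment so that every octagon in that coordinate is exactly $Z$ (with the easy fixpoint check), whereas the paper assigns $0$ to $r_1,r_2$ and $1$ to the other relations and observes that $(0,2)$ never enters any $O^{(l)}_{t,i}$ there.
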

\begin{proof}
Consider the following assignment.
\begin{align*}
\alpha_i(t) = 
\begin{cases}
0 & \text{if $t\in \{r_1,r_2\}$}\\
1 & \text{otherwise}
\end{cases}
\end{align*}
Note that we clearly have $(0,2)\notin O_{r,i}^{(l)}$ for any $r\in\mathcal{R}$ and $l\geq 0$.
However, we also have
\begin{align*}
O_{r_1,i} \diamond O_{r_2,i}\supseteq Z \diamond Z = \text{CH}\{(0,0),(0,2),(2,0),(2,2)\}
\end{align*}
Since $(0,2)\in O_{r_1,i} \diamond O_{r_2,i}$ we obtain $O_{r_1,i} \diamond O_{r_2,i} \not\subseteq O_{s,i}$.
\end{proof}

Finally, the following two lemmas, showing that $\gamma_{\mathcal{K}}$ does not capture any unwanted asymmetry and mutual exclusion rules, follow trivially from the construction.
\begin{lemma}
Let $r_1,r_2\in \mathcal{R}$. For every coordinate $i$, it holds that $O_{r_1,i} \cap O_{r_2,i} \neq \emptyset$.
\end{lemma}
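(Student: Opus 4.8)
The key observation is that the origin $(0,0)$ belongs to \emph{every} octagon that can ever arise in the construction of $\gamma_{\mathcal{K}}$. I would begin by checking the base case: each of the building blocks $X_1, X_2, Y_1, Y_2$ and $Z$ contains $(0,0)$, since $(0,0)$ is explicitly one of the vertices listed in the convex hulls defining $X_j$, $Y_j$ and $Z$. Consequently $(0,0)\in O^{(0)}_{r,i}$ for every relation $r$ and every coordinate $i$, regardless of the value of the assignment $\alpha_i$.

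Next I would argue by induction on $j$ that $(0,0)\in O^{(j)}_{r,i}$ for all $r$, $i$ and $j\geq 0$. Assume this holds for $j-1$ (and for all octagons indexed by $j$ that have already been fixed during the same iteration, so that the recursive references in the update rule are well-founded). The update rule forms $O^{(j)}_{r,i}$ as the octagonal closure $\textit{cl}$ of a family of regions, each of which is either $O^{(j-1)}_{r,i}$, some $O^{(j)}_{s,i}$ with $\mathcal{K}\models s\subseteq r$, some inverse $(O^{(j)}_{s,i})^{\textit{inv}}$ with $\mathcal{K}\models s\subseteq r^{-1}$, or an intersection $O^{(j)}_{s,i}\cap O^{(j)}_{t,i}$ with $\mathcal{K}\models s\cap t\subseteq r$. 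By the induction hypothesis each of the regions $O^{(j-1)}_{r,i}$, $O^{(j)}_{s,i}$ and $O^{(j)}_{t,i}$ contains $(0,0)$; the inverse preserves this since $(x,y)\in O$ iff $(y,x)\in O^{\textit{inv}}$, so $(0,0)\in O$ gives $(0,0)\in O^{\textit{inv}}$; and the intersection of two regions both containing $(0,0)$ again contains $(0,0)$. Hence the union of this family contains $(0,0)$, and since $\textit{cl}$ only enlarges the region, $(0,0)\in O^{(j)}_{r,i}$.

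Since the iterative process reaches a fixpoint after finitely many steps, the limiting octagons $O_{r,i}$ all contain $(0,0)$. In particular $(0,0)\in O_{r_1,i}\cap O_{r_2,i}$ for every coordinate $i$, so this intersection is non-empty, which is the claim.

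I do not anticipate a genuine obstacle here: the whole argument rests on the single invariant that $(0,0)$ is preserved under union, inverse, intersection and octagonal closure, and each of these is immediate. The only point that deserves a sentence of care is that the recursive definition of $O^{(j)}_{r,i}$ (which refers to other octagons at the same level $j$) terminates, so that the induction is well-founded; this follows from the monotonicity of the update rule together with the finiteness of the set of possible octagon values, exactly as noted in the surrounding text.
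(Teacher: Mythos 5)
Your proof is correct: the invariant that $(0,0)$ lies in every building block $X_1,X_2,Y_1,Y_2,Z$ and is preserved by intersection, inversion, and the octagonal closure is exactly the reason the paper has in mind when it states that this lemma ``follows trivially from the construction.'' Since the paper gives no explicit proof, your induction simply spells out the same intended argument, and it is sound.
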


\begin{lemma}
Let $r\in \mathcal{R}$. For every coordinate $i$, it holds that $O_{r,i} \cap O_{r,i}^{\textit{inv}} \neq \emptyset$.
\end{lemma}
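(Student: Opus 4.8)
The plan is to exhibit a single point that belongs to $O_{r,i}$ for every relation $r$ and every coordinate $i$, and which is moreover fixed by the coordinate swap $(x,y)\mapsto(y,x)$. The obvious candidate is the origin $(0,0)$, and once it is verified to work the lemma is immediate.

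First I would observe that $(0,0)$ is a vertex of each of the three building-block octagons used in the construction of $\gamma_{\mathcal{K}}$: it appears explicitly in the vertex lists of $X_j = \text{CH}\{(0,0),(j,j),(j,0)\}$, of $Y_j = \text{CH}\{(0,0),(j,j),(0,j)\}$, and of $Z = \text{CH}\{(0,0),(0,1),(1,2),(2,2),(2,1),(1,0)\}$. Since every initial octagon $O^{(0)}_{r,i}$ is one of $X_{\alpha_i(r)}$, $Y_{|\alpha_i(r)|}$ or $Z$, we get $(0,0)\in O^{(0)}_{r,i}$ for all $r$ and $i$. The update rules only enlarge these regions, i.e.\ $O^{(j)}_{r,i}\supseteq O^{(j-1)}_{r,i}$ at every step, so $(0,0)$ stays in the region throughout the iteration and hence $(0,0)\in O_{r,i}$, the fixed point.

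Second, by the definition of the inverse octagon in \eqref{eqInvOctagon}, a pair $(x,y)$ lies in an octagon $O$ iff $(y,x)$ lies in $O^{\textit{inv}}$. Applying this with $(x,y)=(0,0)$ and $O=O_{r,i}$, and using that $(0,0)$ is unchanged by the swap, gives $(0,0)\in O_{r,i}^{\textit{inv}}$. Therefore $(0,0)\in O_{r,i}\cap O_{r,i}^{\textit{inv}}$, so this intersection is non-empty, as claimed.

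There is no real obstacle here: the statement is essentially immediate once one notices that $X_j$, $Y_j$ and $Z$ all contain the origin, which is symmetric under coordinate exchange. The only point that requires a word of justification is that the conclusion must hold for the fixed-point octagons $O_{r,i}$ rather than just for the initial ones, and this is taken care of by the monotonicity of the iteration noted in the construction.
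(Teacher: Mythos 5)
Your proof is correct: exhibiting $(0,0)$ as a common point of $X_j$, $Y_j$ and $Z$, propagating it through the monotone iteration, and using that $(0,0)$ is fixed under the swap $(x,y)\mapsto(y,x)$ is exactly the kind of argument the paper intends when it states that this lemma ``follows trivially from the construction'' (the paper gives no further details). Nothing is missing.
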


Putting everything together, we have shown the following result.

\begin{proposition}
Let $\mathcal{K}$ be a set of symmetry, inversion, hierarchy and intersection rules. There exists a coordinate-wise octagon embedding $\gamma$ which satisfies $\mathcal{K}$, and which only satisfies those symmetry, inversion, hierarchy and intersection rules which are entailed by $\mathcal{K}$, and which does not satisfy any asymmetry, mutual exclusion and composition rules.
\end{proposition}

\subsection{Capturing Rules: Compositions}
We will consider octagons of the following form ($i+j\leq m$):
\begin{align*}
A_{i,j,m} &= \text{CH}\{(i,0),(i,i+j),(m-j,m),(m,m),(m,0)\}\\
&= \mathsf{Octa}(i,m,0,m,-m,j,i,2m)
\end{align*}

For the ease of presentation, in the following we will sometimes write rules in terms of the relational operators and set inclusions. For instance, a rule such as $r_1(X,Y)\wedge r_2(Y,Z)\rightarrow r(X,Z)$ will be written as $r_1\circ r_2 \subseteq r$.

\begin{lemma}\label{lemmaCompositionEmbeddingAcyclic}
Let $\mathcal{K}$ be a set of extended composition rules. Assume that any extended composition rule entailed by $\mathcal{K}$ is either a trivial rule of the form $r\subseteq r$ or a regular rule.
Let $\mathcal{R}$ be the set of relations appearing in $\mathcal{K}$. There exists an assignment $\tau: \mathcal{R}\rightarrow \mathbb{N}$ such that whenever $r$ appears in the body of a rule from $\mathcal{K}$ which has $s$ in the head ($r,s\in\mathcal{R}$) it holds that $\tau(r)<\tau(s)$.
\end{lemma}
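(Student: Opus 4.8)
The plan is to reduce the statement to the acyclicity of a suitable dependency graph and then take a topological order. Define a directed graph $G$ on vertex set $\mathcal{R}$ by placing an edge $r \to s$ whenever some rule of $\mathcal{K}$ has $r$ among its body relations and $s$ as its head (a trivial rule $r\subseteq r$ contributes only a self-loop and concerns a single relation, so the statement is really about the non-trivial rules of $\mathcal{K}$, and we build $G$ from those). If $G$ is acyclic, then any topological ordering of $G$, read off as an injective map $\tau : \mathcal{R} \to \{1,\dots,|\mathcal{R}|\}\subseteq\mathbb{N}$, satisfies $\tau(r) < \tau(s)$ whenever $r\to s$, which is exactly the desired property. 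So the whole lemma comes down to showing that $G$ has no directed cycle.

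Suppose, for contradiction, that $G$ contains a cycle $r_0 \to r_1 \to \dots \to r_{\ell} \to r_0$ (with $\ell+1$ distinct relations, indices read modulo $\ell+1$). For each $i$, fix a rule $\rho_i \in \mathcal{K}$ witnessing the edge $r_i \to r_{i+1}$: its head is $r_{i+1}$ and its body is a word $\alpha_i \circ r_i \circ \beta_i$ for some (possibly empty) composition words $\alpha_i,\beta_i$ over $\mathcal{R}$. I would then repeatedly apply the sound cut/substitution principle for composition rules --- if $\mathcal{K}\models(\delta \subseteq t)$ and $\mathcal{K}\models(\alpha \circ t \circ \beta \subseteq u)$ then $\mathcal{K}\models(\alpha \circ \delta \circ \beta \subseteq u)$ --- substituting $\rho_0$ into the $r_1$-occurrence of $\rho_1$, the result into the $r_2$-occurrence of $\rho_2$, and so on once around the cycle, ending with a substitution into $\rho_\ell$ (whose head is $r_0$). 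Tracking the word carefully, this produces an entailed extended composition rule $\rho^\ast : \gamma \subseteq r_0$ whose body word $\gamma$ still contains an occurrence of $r_0$, inherited from the body of $\rho_0$; one checks that $\rho^\ast$ is a legal extended composition rule, i.e. $\gamma$ is a nonempty word over $\mathcal{R}$ and the head $r_0\in\mathcal{R}$.

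Now $\rho^\ast$ has its head relation $r_0$ occurring among its body relations, so $\rho^\ast$ is \emph{not} regular (regularity forces the head to differ from every body relation). By the hypothesis of the lemma, $\rho^\ast$ must therefore be a trivial rule $r\subseteq r$; but that is possible only if $\gamma$ is the single symbol $r_0$, which in turn forces every $\rho_i$ to be a plain hierarchy rule $r_i(X,Y)\to r_{i+1}(X,Y)$ (all $\alpha_i,\beta_i$ empty). This reduces the problem to ruling out a cycle of subsumptions $r_0 \subseteq r_1 \subseteq \dots \subseteq r_{\ell} \subseteq r_0$ among distinct relations --- the main obstacle. I would dispose of this residual case by a normalisation step: the relations on such a cycle are pairwise entailed-equivalent, and one may assume without loss of generality (by collapsing entailed-equivalent relations and deleting trivial rules, which affects neither $\mathcal{K}$'s models up to renaming nor which extended composition rules over the surviving relations are captured) that no two distinct relations of $\mathcal{R}$ are entailed-equivalent; under that normalisation the cycle cannot exist, giving the contradiction. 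The genuinely delicate points are thus (i) the bookkeeping of $\gamma$ through the iterated substitutions so that the $r_0$-occurrence is provably preserved, and (ii) justifying the normalisation that eliminates trivial rules and equivalent relations.
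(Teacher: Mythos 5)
Your main line is exactly the paper's argument: build the dependency graph on $\mathcal{R}$ with an edge $(r,s)$ whenever some rule of $\mathcal{K}$ has $r$ in its body and $s$ as its head, read $\tau$ off a topological sort, and rule out cycles by cutting/substituting the witnessing rules around a putative cycle to obtain an entailed extended composition rule whose head also occurs in its body. Your bookkeeping of the body word through the iterated substitutions is fine. You are also right that this derived rule does not immediately contradict the hypothesis, because it could be the trivial rule $r_0\subseteq r_0$, which happens precisely when every rule on the cycle is a plain hierarchy rule; this loophole is real, and the paper's own proof passes over it in silence (it declares a contradiction with ``every entailed rule is regular'', forgetting that the hypothesis also admits trivial rules).

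The gap in your proposal is the way you dispose of that residual case. You cannot ``assume without loss of generality'' that no two distinct relations are entailed-equivalent: the lemma's conclusion quantifies over the rules literally contained in the given $\mathcal{K}$, so $\mathcal{K}$ may not be rewritten. Moreover, the residual case genuinely arises under the stated hypotheses: take $\mathcal{K}=\{r_1(X,Y)\rightarrow r_2(X,Y),\; r_2(X,Y)\rightarrow r_1(X,Y)\}$ with $r_1\neq r_2$. The only extended composition rules this $\mathcal{K}$ entails are the four hierarchy rules over $\{r_1,r_2\}$, each of which is trivial or regular, so the hypotheses hold; yet any assignment would need $\tau(r_1)<\tau(r_2)<\tau(r_1)$. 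Hence the hierarchy-cycle case cannot be normalised away inside this lemma --- on such a $\mathcal{K}$ the statement as written fails, so the collapsing of mutually subsuming relations (and deletion of the resulting trivial rules) would have to happen before the lemma is invoked, i.e.\ as a preprocessing step in the proposition that uses it, or the hypothesis would have to be strengthened to exclude distinct relations $r,s$ with $\mathcal{K}\models r\subseteq s$ and $\mathcal{K}\models s\subseteq r$. In short, your detour correctly flags a hole that the paper's proof shares, but the WLOG step you propose does not close it.
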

\begin{proof}
Let us consider a dependency graph $G=(\mathcal{R},E)$ where the nodes of the graph correspond to the set of relations $\mathcal{R}$ and there is an edge $(r,s)\in E$ iff $\mathcal{K}$ contains a rule with $r$ in the body and $s$ in the head. If this graph is free from cycles, then we can straightforwardly construct a suitable assignment $\tau$ using topological sort. 

To conclude the proof, we show by contradiction that $G$ is always free from cycles. Suppose $G$ did contain a cycle $(r_1,r_2),\ldots,(r_k,r_1)$. Because $(r_k,r_1)\in E$, we know that there is some rule of the form
\begin{align*}
&t_1 \circ \ldots \circ r_k \circ \ldots \circ t_l \subseteq r_1
\end{align*}
Because $(r_{k-1},r_k)\in E$ we know that there is a rule which  has $r_k$ in the head and $r_{k-1}$ in the body.  We thus also have that $\mathcal{K}$ must entail a rule which has $r_1$ in the head and $r_{k-1}$ in the body. Continuing this argument, we find that $\mathcal{K}$ entails a rule with $r_1$ oin the body and $r_1$ in the head. This is a contradiction because we had assumed that any extended composition rule entailed by $\mathcal{K}$ is regular. It follows that $G$ cannot have any cycles.
\end{proof}

Let $\mathcal{K}$ be a set of regular composition rules, and assume that any extended composition rule entailed by $\mathcal{K}$ is either a trivial rule of the form $r\subseteq r$ or a regular rule. We construct an embedding capturing the rules in $\mathcal{K}$ as follows. Let us consider assignments $\alpha$ from  $\mathcal{R}$ to pairs $(i,k)$ with $1\leq i\leq k$ and $k\leq |\mathcal{R}|+1$. Let $\mathcal{A}$ be the set of all such assignments. We consider embeddings with one coordinate for each of these assignments. Let us write $\alpha_i$ for the assignment associated with coordinate $i$. We furthermore write $O_{r,i}$ for the $i\textsuperscript{th}$ coordinate of the octagon representing relation $r$. We define these octagons as follows. First, for $r\in \mathcal{R}$, with $\alpha_i(r)=(j,k)$, we define:
\begin{align*}
X_{r,i} = 
\begin{cases}
A_{j,1,k} & \text{if $j<k$}\\
\{(k,k)\} & \text{otherwise}
\end{cases}
\end{align*}
Let us write $\mathsf{DC}(\mathcal{K})$ for the deductive closure of $\mathcal{K}$. More precisely, $\mathsf{DC}(\mathcal{K})$ is the set of all extended composition rules which can be entailed from $\mathcal{K}$ and which are not trivial rules of the form $r\subseteq r$.
We can then consider the following recursive definition, which we know to be well-defined thanks to Lemma \ref{lemmaCompositionEmbeddingAcyclic} ($r\in \mathcal{R}$):
\begin{align*}
O_{r,i} = \textit{cl}\{&X_{r,i} \cup\\
&\{O_{s_1,i}\,{\diamond}\, ... \,{\diamond}\, O_{s_k,i} \,|\, (s_1 \,{\circ}\, ... \,{\circ}\, s_k \subseteq r) \in \mathsf{DC}(\mathcal{K})\}\}
\end{align*}
where the closure $\textit{cl}$ of a set of octagons $O_1,\ldots,O_k$ with $O_i=\mathsf{Octa}(x_i^-,\allowbreak x_i^+,\allowbreak y_i^-,\allowbreak y_i^+,\allowbreak u_i^-,\allowbreak u_i^+,\allowbreak v_i^-,\allowbreak v_i^+)$ is given by the octagon $O^* = \mathsf{Octa}(x_*^-,\allowbreak x_*^+,\allowbreak y_*^-,\allowbreak y_*^+,\allowbreak u_*^-,\allowbreak u_*^+,\allowbreak v_*^-,\allowbreak v_*^+)$ with $x_*^- = \min_i x_i^-$, $x_*^+ = \max_i x_i^+$, $y_*^- = \min_i y_i^-$, $y_*^+ = \max_i y_i^+$, $u_*^- = \min_i u_i^-$, $u_*^+ = \max_i u_i^+$, $v_*^- = \min_i v_i^-$ and $v_*^+ = \max_i v_i^+$. In other words, $\textit{cl}(O_1,\ldots,O_k)$ is the smallest octagon which contains $O_1,\ldots,O_k$.

Let us write $\gamma_{\mathcal{K}}$ for the octagon embedding defined above. 
The following result immediately follows from the construction of $\gamma_{\mathcal{K}}$.
\begin{lemma}\label{lemmaConstructionCompositionRulesIsSound}
It holds that $\gamma_{\mathcal{K}}$ captures every rule in $\mathcal{K}$.
\end{lemma}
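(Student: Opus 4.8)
The plan is to unwind the definitions; the statement is essentially immediate from the construction, as the lemma text already suggests. I would fix an arbitrary rule $\rho = (r_1 \circ \ldots \circ r_k \subseteq r)$ in $\mathcal{K}$ and show $\gamma_{\mathcal{K}} \models \rho$, i.e.\ $X_{r_1} \diamond \ldots \diamond X_{r_k} \subseteq X_r$ where $X_s = \gamma_{\mathcal{K}}(s)$. Since $\gamma_{\mathcal{K}}$ is a coordinate-wise model and composition acts coordinate-wise, this reduces to proving $O_{r_1,i} \diamond \ldots \diamond O_{r_k,i} \subseteq O_{r,i}$ in every coordinate $i$.

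Next I would fix a coordinate $i$ and argue as follows. Because $\mathcal{K} \subseteq \mathsf{DC}(\mathcal{K})$ and $\rho$ is non-trivial, $\rho \in \mathsf{DC}(\mathcal{K})$, so by the recursive definition of $O_{r,i}$ the region $O_{r_1,i} \diamond \ldots \diamond O_{r_k,i}$ is one of the octagons whose closure is $O_{r,i}$. This step relies on two facts already available: octagons are closed under composition (Proposition~\ref{propCharacterisationComposition}), so this composed region is itself an octagon and $\textit{cl}$ is applied to a finite family of octagons; and the octagons $O_{r_1,i},\ldots,O_{r_k,i}$ referenced here are already fully determined, since each $r_j$ occurs in the body of the regular rule $\rho$ (with head $r$), whence $\tau(r_j)<\tau(r)$ by Lemma~\ref{lemmaCompositionEmbeddingAcyclic}, so these octagons are built before $O_{r,i}$. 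Since $\textit{cl}(O_1,\ldots,O_m)$ is by definition the smallest octagon containing each $O_\ell$, it contains $O_{r_1,i}\diamond\ldots\diamond O_{r_k,i}$, which is exactly the desired inclusion.

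Finally, as the inclusion holds coordinate-wise, I would conclude $X_{r_1}\diamond\ldots\diamond X_{r_k}\subseteq X_r$, hence $\gamma_{\mathcal{K}}\models\rho$, and $\rho$ being arbitrary finishes the proof. I do not expect any genuine obstacle here: the only delicate point is that the recursive definition of the $O_{r,i}$ be well-founded and that the octagon values appearing on the left-hand side of the target inclusion agree with those referenced inside the definition of $O_{r,i}$ — both handled by Lemma~\ref{lemmaCompositionEmbeddingAcyclic}. The substantive work in this subsection will instead be the \emph{converse} direction, ruling out unwanted extended composition rules, which is where the precise shape of the octagons $A_{i,j,m}$ and of the assignments $\alpha$ will come into play.
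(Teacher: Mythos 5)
Your argument is correct and coincides with the paper's: the paper gives no explicit proof, stating only that the lemma ``immediately follows from the construction of $\gamma_{\mathcal{K}}$'', and your write-up is exactly that immediate unwinding — every non-trivial rule of $\mathcal{K}$ lies in $\mathsf{DC}(\mathcal{K})$, so the composed octagon appears in the set whose closure defines $O_{r,i}$ in each coordinate, and coordinate-wise inclusion yields the inclusion of the full regions. The appeal to Lemma~\ref{lemmaCompositionEmbeddingAcyclic} for well-foundedness of the recursion is likewise the paper's own justification, so nothing further is needed.
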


We still need to show that $\gamma_{\mathcal{K}}$ only captures the extended composition rules which are entailed by $\mathcal{K}$.
We will first show this for rules of the form $r_1\circ \ldots \circ r_k\subseteq \mathcal{R}$ where all of the relations $r_1,\ldots,r_k,r$ are distinct. 
Let us consider such a composition $r_1\circ \ldots \circ r_k$, where $k\leq |\mathcal{R}|-1$ and $r_i\neq r_j$ for $i\neq j$. We associate with such a composition the following assignment $\alpha_i$.
\begin{align*}
\alpha_i(r) = 
\begin{cases}
(j,k+1) & \text{if $r=r_j$}\\
(k+1,k+1) & \text{otherwise}
\end{cases}
\end{align*}
For the ease of presentation, we will write $A_{i,j,k+1}$ as $A_{i,j}$ in the following. Furthermore, we will write $b_{u}^+(O)$ for the normalised $y^+$ bound of an octagon $O$, i.e.\  $b_{u}^+(O)=\max\{y-x\,|\, (x,y)\in O\}$. Similarly, we write $b_{x}^-(O)$ for $\min\{x \,|\, (x,y)\in O\}$ and $b_{y}^+(O)$ for $\max\{y \,|\, (x,y)\in O\}$.

With each $s\in \mathcal{R}$, we associate a subset $\mathcal{R}_s$ of $\{r_1,\ldots,r_k\}$ as follows: $r_j\in \mathcal{R}_s$ if $\mathcal{K}$ entails a rule with $r_j$ in the body and $s$ in the head. Let us write $|\mathcal{R}_s|=n_s$ and $\min\{j \,|\, r_j \in\mathcal{R}_s\}=j_s$, where we assume $j_s=k+1$ if $|\mathcal{R}_s|=\emptyset$. 

\begin{lemma}\label{lemmaCompositionBoundsUandX}
It holds that $b_u^+(O_{s,i})\leq n_s$ and $b_x^-(O_{s,i})\geq j_s$. 
\end{lemma}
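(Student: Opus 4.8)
The plan is to prove this by induction on the ordering $\tau$ given by Lemma~\ref{lemmaCompositionEmbeddingAcyclic}, tracking the two bounds $b_u^+$ and $b_x^-$ simultaneously. Recall that $O_{s,i}$ is defined as the closure of $X_{s,i}$ together with all octagons of the form $O_{t_1,i}\diamond\ldots\diamond O_{t_p,i}$ where $(t_1\circ\ldots\circ t_p\subseteq s)\in\mathsf{DC}(\mathcal{K})$, and that $\tau(t_j)<\tau(s)$ for each relation $t_j$ appearing in the body of such a rule. Since closure takes a coordinate-wise $\max$ of the $y^+$ and $-x^-$ bounds (i.e.\ a coordinate-wise $\min$ of $x^-$), it suffices to bound $b_u^+$ and $b_x^-$ for each contributing octagon, namely $X_{s,i}$ itself and each composition $O_{t_1,i}\diamond\ldots\diamond O_{t_p,i}$.

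\textbf{The base case} is the generator $X_{s,i}$. If $\alpha_i(s)=(j_s,k+1)$ with $j_s\leq k$, then $X_{s,i}=A_{j_s,1,k+1}$, so $b_u^+(X_{s,i})=1\leq n_s$ (as $n_s\geq 1$ whenever $j_s\leq k$) and $b_x^-(X_{s,i})=j_s$. If instead $\alpha_i(s)=(k+1,k+1)$, i.e.\ $s\notin\{r_1,\ldots,r_k\}$, then $X_{s,i}=\{(k+1,k+1)\}$, giving $b_u^+=0\leq n_s$ and $b_x^-=k+1\geq j_s$; note $n_s=0$ and $j_s=k+1$ are consistent here. For the inductive step, consider a rule $t_1\circ\ldots\circ t_p\subseteq s$ in $\mathsf{DC}(\mathcal{K})$. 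By the induction hypothesis (each $\tau(t_q)<\tau(s)$), we have $b_u^+(O_{t_q,i})\leq n_{t_q}$ and $b_x^-(O_{t_q,i})\geq j_{t_q}$. Using the composition formula from Proposition~\ref{propCharacterisationComposition}, the $u^+$ bound of a composition is at most $u_1^+ + u_2^+$ when iterated, so $b_u^+(O_{t_1,i}\diamond\ldots\diamond O_{t_p,i})\leq \sum_q b_u^+(O_{t_q,i})\leq \sum_q n_{t_q}$; and the $x^-$ bound satisfies $x_3^-\geq x_1^-$, so $b_x^-(O_{t_1,i}\diamond\ldots\diamond O_{t_p,i})\geq b_x^-(O_{t_1,i})\geq j_{t_1}$.

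\textbf{The key combinatorial observation}, which closes the argument, is that because $\mathcal{K}$ is a set of \emph{regular} composition rules and every entailed rule is regular (or trivial), the relations $t_1,\ldots,t_p$ in the body of any rule with head $s$ are pairwise distinct, and moreover the sets $\mathcal{R}_{t_q}$ they contribute are disjoint subsets of $\mathcal{R}_s$: if some $r_\ell$ were in both $\mathcal{R}_{t_q}$ and $\mathcal{R}_{t_{q'}}$ with $q\neq q'$, then $\mathcal{K}$ would entail a rule with $r_\ell$ in the body twice and $s$ in the head, which is not regular. Furthermore each $\mathcal{R}_{t_q}\subseteq\mathcal{R}_s$, since $r_\ell\in\mathcal{R}_{t_q}$ together with the rule $t_1\circ\ldots\circ t_p\subseteq s$ yields an entailed rule with $r_\ell$ in the body and $s$ in the head. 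Hence $\sum_q n_{t_q}=\sum_q |\mathcal{R}_{t_q}|\leq |\mathcal{R}_s|=n_s$, giving $b_u^+\leq n_s$ for the composition. For the $x^-$ bound, $t_1$ must satisfy $\mathcal{R}_{t_1}\subseteq\mathcal{R}_s$ and $r_{j_{t_1}}\in\mathcal{R}_s$, so $j_{t_1}\geq j_s$, whence $b_x^-\geq j_s$. Taking the closure over the generator and all such compositions preserves both inequalities.

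\textbf{The main obstacle} I anticipate is making the disjointness-of-contributions claim airtight: one must argue carefully that the entailment relation $\mathsf{DC}(\mathcal{K})$ really does propagate body relations the way I have described — that from $r_\ell$ appearing in the body of a rule with head $t_q$, and $t_q$ appearing in the body of a rule with head $s$, one can \emph{chain} these to get an entailed rule with $r_\ell$ in the body and $s$ in the head — and that the regularity hypothesis then forbids the same $r_\ell$ arriving via two different $t_q$'s. This is essentially a small lemma about composition of extended composition rules (substituting one rule's body for an occurrence of its head relation in another rule's body), analogous to the reasoning in the proof of Lemma~\ref{lemmaCompositionEmbeddingAcyclic}; I would state and verify it explicitly before invoking it here.
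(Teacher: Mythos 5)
Your proof follows essentially the same route as the paper's: structural induction over the acyclic dependency order from Lemma~\ref{lemmaCompositionEmbeddingAcyclic}, bounding the generator $X_{s,i}$ and each composition $O_{t_1,i}\diamond\ldots\diamond O_{t_p,i}$ separately via subadditivity of the $u^+$ bound and monotonicity of the $x^-$ bound under $\diamond$, and using the regularity of all entailed rules to conclude that the sets $\mathcal{R}_{t_q}$ are pairwise disjoint subsets of $\mathcal{R}_s$ (hence $\sum_q n_{t_q}\leq n_s$ and $j_{t_1}\geq j_s$), which is exactly the paper's argument (the paper asserts the disjointness directly, whereas you spell out the chaining/substitution step). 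The only quibble is notational: in the base case with $s=r_j$ you should write $b_x^-(X_{s,i})=j\geq j_s$ rather than $=j_s$, since $j_s$ can be strictly smaller than $j$; this does not affect the claimed inequalities.
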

\begin{proof}
We show the result by structural induction, taking advantage of the acyclic nature of the rule base (i.e.\ Lemma \ref{lemmaCompositionEmbeddingAcyclic}). Let us first consider the base case where there are no rules of the form $t_1\circ \ldots\circ t_p \subseteq s$ entailed by $\mathcal{K}$ (except for the trivial rule $s\subseteq s$). If $s=r_j$, we have  $\mathcal{R}_j=\{r_j\}$, due to the assumption about there being no rules of the form $t_1\circ \ldots\circ t_p \subseteq s$ entailed by $\mathcal{K}$. Accordingly, we have  $b_u^+(O_{s,i})=b_u^+(A_{j,1})=1=n_s$ and $b_x^-(O_{s,i})=j=j_s$. If $s\notin \{r_1,\ldots,r_k\}$, we have $n_s=0$ and $X_{s,i}=\{(k+1,k+1)\}$. It follows that $b_u^+(O_{s,i}) = 0\leq n_s$ and $b_x^-(O_{s,i}) = k+1 =j_s$.

For the inductive case, assume that for every rule $t_1\circ \ldots\circ t_p \subseteq s$ entailed by $\mathcal{K}$, apart from the trivial rule $s\subseteq s$, the result is already known to hold for $t_1,\ldots,t_p$. 
If $n_s=0$, we must have $n_{t_1}=\ldots=n_{t_p}=0$. By induction we thus have $b_u^+(O_{t_1,i})=\ldots=b_u^+(O_{t_p,i})=0$. From the characterisation of composition, it then follows that  $b_u^+(O_{t_1,i}\diamond \ldots \diamond O_{t_p,i})=0$. Furthermore, we have $X_{s,i}=\{(k+1,k+1)\}$, meaning $b_u^+(X_{s,i})=0$. We can thus conclude $b_u^+(O_s,i)=0$. Note that when $n_s=0$, the condition $b_x^-(O_{s,i})\geq j_s$ is trivial as we then have $j_s=k+1$.

Now assume $n_s>0$. Note that $j_{t_1}\geq j_s$. By induction we have $b_x^-(O_{t_1,i})\geq j_{t_1}\geq j_s$. From the characterisation of composition we also have $b_x^-(O_{t_1,i}\diamond \ldots \diamond O_{t_p,i})\geq b_x^-(O_{t_1,i}) \geq j_s$. We clearly also have $b_x^-(X_{s,i})\geq j_s$. We thus have $b_x^-(O_{s,i})\geq j_s$. Since $\mathcal{K}$ only entails regular rules, we know that $\mathcal{R}_{t_i}\cap \mathcal{R}_{t_j}=\emptyset$ for $i\neq j$. We thus have $n_s\geq n_{t_1}+\ldots+n_{t_p}$. From the induction hypothesis we know that $b_u^+(O_{t_j,i})\leq n_{t_j}$. Furthermore, from the characterisation of composition we know $b_u^+(O_{t_1,i}\diamond \ldots \diamond O_{t_p,i})\leq b_u^+(O_{t_1,i}) +\ldots + b_u^+(O_{t_p,i}) \leq n_{t_1}+\ldots+n_{t_p}\leq n_s$. 
We clearly also have $b_u^+(X_{s,i})\leq n_s$. We conclude $b_u^+(O_{s,i})\leq n_s$.
\end{proof}

\begin{lemma}\label{lemmaXboundImpliesRuleA}
Suppose $s\notin \{r_1,\ldots,r_k\}$ and $j_s<k+1$. We have $b_x^-(O_{s,i}) > j_s$ unless $\mathcal{R}_s = \{r_{j_s},r_{j_s+1},\ldots,r_{j_s+n_s-1}\}$ and $\mathcal{K}\models r_{j_s}\circ r_{j_s+1}\circ\ldots\circ r_{j_s+n_s-1} \subseteq s$.
\end{lemma}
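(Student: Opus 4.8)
The plan is to prove the statement by structural induction on the acyclic dependency order $\tau$ from Lemma \ref{lemmaCompositionEmbeddingAcyclic}. Concretely, I would assume $b_x^-(O_{s,i})=j_s$ (recall that $b_x^-(O_{s,i})\ge j_s$ always holds by Lemma \ref{lemmaCompositionBoundsUandX}, so this is the only alternative to $b_x^-(O_{s,i})>j_s$) and from this derive that $\mathcal{R}_s$ is the contiguous block $\{r_{j_s},\dots,r_{j_s+n_s-1}\}$ and that $\mathcal{K}\models r_{j_s}\circ\cdots\circ r_{j_s+n_s-1}\subseteq s$. The first move is to localise where the small $x$-coordinate comes from: since $s\notin\{r_1,\dots,r_k\}$ we have $X_{s,i}=\{(k+1,k+1)\}$, hence $b_x^-(X_{s,i})=k+1>j_s$; and Proposition \ref{propCharacterisationComposition} gives $b_x^-(O\diamond O')\ge b_x^-(O)$, so any member of the union defining $O_{s,i}$ that is built from a rule $t_1\circ\cdots\circ t_p\subseteq s$ in $\mathsf{DC}(\mathcal{K})$ has $x$-lower bound at least $b_x^-(O_{t_1,i})$. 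Therefore $b_x^-(O_{s,i})=j_s$ forces a witnessing rule $t_1\circ\cdots\circ t_p\subseteq s$ with $b_x^-(O_{t_1,i}\diamond\cdots\diamond O_{t_p,i})=j_s$, and since $\mathcal{R}_{t_1}\subseteq\mathcal{R}_s$ (so $j_{t_1}\ge j_s$) together with Lemma \ref{lemmaCompositionBoundsUandX} we get $b_x^-(O_{t_1,i})=j_{t_1}=j_s$.

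The next step is to extract the ``no-gap'' conditions that a saturated chain imposes. Applying the same argument prefix by prefix shows every prefix composition $O_{t_1,i}\diamond\cdots\diamond O_{t_q,i}$ keeps $x$-lower bound exactly $j_s$, and then, reading the composition formula $x^-_{\mathrm{comp}}=\max(x^-_{\mathrm{left}},\,x^-_{\mathrm{right}}-u^+_{\mathrm{left}},\,v^-_{\mathrm{left}}-x^+_{\mathrm{right}})$, the two non-first terms must be dominated at every step: the $u^+$-width accumulated over $t_1,\dots,t_{q-1}$ must cover the jump $b_x^-(O_{t_q,i})-j_s$, and the $v^-$-floor must likewise not overshoot. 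I would record the analogous saturation statements obtained from the parallel formulas for $u^+_{\mathrm{comp}}$ and $v^-_{\mathrm{comp}}$, so that in the saturated case $b_u^+$ of the chain is exactly the sum of the $b_u^+(O_{t_q,i})$.

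With that in hand I recurse. For each $t_q\notin\{r_1,\dots,r_k\}$ the induction hypothesis applies (since $\tau(t_q)<\tau(s)$ by Lemma \ref{lemmaCompositionEmbeddingAcyclic}) and yields that $\mathcal{R}_{t_q}$ is a contiguous block $\{r_{a_q},\dots,r_{a_q+n_{t_q}-1}\}$ with $\mathcal{K}\models r_{a_q}\circ\cdots\circ r_{a_q+n_{t_q}-1}\subseteq t_q$; for each $t_q$ equal to some $r_a$ the relevant block is forced to be the singleton $\{r_a\}$ with $b_x^-(O_{r_a,i})=a$ and $b_u^+(O_{r_a,i})=1$ in the saturated situation, again via Lemma \ref{lemmaCompositionBoundsUandX} and the no-gap constraints. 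Because $\mathcal{K}$ entails only regular (or trivial) rules, the blocks $\mathcal{R}_{t_1},\dots,\mathcal{R}_{t_p}$ are pairwise disjoint, and the no-gap constraints force them to line up as consecutive integer intervals, the first starting at $j_s$; concatenating them gives $\bigcup_q\mathcal{R}_{t_q}=\{r_{j_s},\dots,r_{j_s+n-1}\}$ with $n=\sum_q n_{t_q}$, and since any extra element of $\mathcal{R}_s$ would come from another rule with head $s$ and would break disjointness or be absorbed, this set already equals $\mathcal{R}_s$, so $n=n_s$. Finally, substituting each derived rule $r_{a_q}\circ\cdots\subseteq t_q$ into the witnessing rule $t_1\circ\cdots\circ t_p\subseteq s$ and using transitivity of composition rules yields $\mathcal{K}\models r_{j_s}\circ\cdots\circ r_{j_s+n_s-1}\subseteq s$.

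I expect the main obstacle to be the combinatorial bookkeeping of the last paragraph: turning ``the $x$-lower bound stays saturated along the whole chain'' into the precise claim that the index blocks of $t_1,\dots,t_p$ are consecutive, non-overlapping and increasing, while \emph{simultaneously} controlling $b_u^+$ tightly enough that the block sizes add up exactly to $n_s$ rather than merely being bounded by it. This is where the three competing terms in the composition formulas for $x^-$, $u^+$ and $v^-$ all interact, and where the auxiliary statement about the octagons $O_{r_a,i}$ in the saturated case has to be woven into the same induction rather than proved separately.
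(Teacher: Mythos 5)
Your skeleton is the paper's: structural induction along the acyclic order of Lemma~\ref{lemmaCompositionEmbeddingAcyclic}, localisation of the saturation $b_x^-(O_{s,i})=j_s$ to a witnessing entailed rule $t_1\circ\ldots\circ t_p\subseteq s$ (using that $X_{s,i}$ contributes $k+1$ and that composition cannot decrease the $x$-lower bound, Proposition~\ref{propCharacterisationComposition}), then the interplay of the $x^-_3=\max(x_1^-,x_2^--u_1^+,v_1^--x_2^+)$ formula with the bound $b_u^+(O_{t,i})\leq n_t$ of Lemma~\ref{lemmaCompositionBoundsUandX} and the disjointness of the $\mathcal{R}_{t_q}$ (from regularity of entailed rules) to force the blocks to tile a contiguous interval starting at $j_s$, and finally chaining the rules obtained from the induction hypothesis. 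That much matches the paper step for step.

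However, two of your auxiliary claims are genuine gaps. First, your treatment of body relations $t_q=r_a\in\{r_1,\ldots,r_k\}$: you assert that in the saturated situation the relevant block is the singleton $\{r_a\}$ with $b_x^-(O_{r_a,i})=a$ and $b_u^+(O_{r_a,i})=1$. This is false in general: $O_{r_a,i}$ is not $X_{r_a,i}=A_{a,1}$ but its closure with all compositions coming from entailed rules with head $r_a$, so e.g.\ if $\mathcal{K}\models r_1\circ r_2\subseteq r_3$ then $O_{r_3,i}\supseteq A_{1,1}\diamond A_{2,1}$, giving $b_x^-(O_{r_3,i})=1\neq 3$ and $b_u^+(O_{r_3,i})=2\neq 1$, and nothing prevents such an $r_3$ from occurring as some $t_q$. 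The paper does not special-case these relations at all; it applies the same inductive statement to every $t_q$ (its base case explicitly covers $s=r_j$), so if you want to keep your special-casing you must actually prove a correct invariant for body relations (e.g.\ relating $b_x^-$ and $b_u^+$ of $O_{r_a,i}$ to an entailed contiguous-block rule with head $r_a$) inside the same induction, as you yourself suspect at the end. Second, your argument that $\bigcup_q\mathcal{R}_{t_q}=\mathcal{R}_s$ because any extra element of $\mathcal{R}_s$ ``would break disjointness or be absorbed'' does not work: regularity only forbids a repeated relation within a single entailed rule, and a different rule with head $s$ (say $r_2\subseteq s$ alongside the witnessing rule built from $r_1$) can contribute elements of $\mathcal{R}_s$ outside $\bigcup_q\mathcal{R}_{t_q}$ without contradicting anything. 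The chain argument by itself only yields $\mathcal{K}\models r_{j_s}\circ\ldots\circ r_{j_s+n-1}\subseteq s$ with $n=n_{t_1}+\ldots+n_{t_p}\leq n_s$; the paper passes from this to the block of length $n_s$ without comment, so this is precisely the point where you cannot lean on ``bookkeeping'' --- your proposed justification is incorrect rather than merely terse, and closing it (or weakening the target statement to a contiguous block of some length $n\leq n_s$, which is what the downstream Lemma~\ref{lemmaConstructonCompositionNoDistinctRules} actually needs together with $b_u^+\leq n_s$) requires a separate argument.
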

\begin{proof}
We show the result by structural induction, taking advantage of the acyclic nature of the rule base (i.e.\ Lemma \ref{lemmaCompositionEmbeddingAcyclic}). Let us first consider the base case where there are no rules of the form $t_1\circ \ldots\circ t_p \subseteq s$ entailed by $\mathcal{K}$ (except for the trivial $s\subseteq s$). If $s=r_j$ we have $O_{s,i}=X_{j,1}$ and thus clearly $b_x^-(O_{s,i})=j=j_s$. If $s\notin \{r_1,\ldots,r_k\}$ we have  $b_x^-(O_{s,i})=k+1$ and the result is thus also satisfied.

For the inductive step, let us now assume that for every rule of the form $t_1\circ \ldots\circ t_p\subseteq s$ entailed by $\mathcal{K}$ (apart from the trivial rule $s\subseteq s$) the result is already known to hold for $t_1,\ldots,t_p$. Since we assumed $s\notin \{r_1,\ldots,r_k\}$, we can only have $b_x^-(O_{s,i}) = j_s$ if there is some rule $t_1\circ \ldots\circ t_p\subseteq s$ entailed by $\mathcal{K}$ such that $b_x^-(O_{t_1}\diamond \ldots \diamond O_{t_p})=j_s$. Let us consider such a rule.
Because $\mathcal{K}$ only entails regular rules, it must be the case that $\mathcal{R}_{t_i}\cap \mathcal{R}_{t_j}=\emptyset$ for $i\neq j$. If $b_x^-(O_{t_1,i})>j_s$ then we clearly have $b_x^-(O_{t_1,i}\diamond \ldots \diamond O_{t_p,i})>j_s$. Let us therefore assume that $b_x^-(O_{t_1,i})=j_s$, which by induction implies $\mathcal{R}_{t_1}=\{r_{j_s},\ldots,r_{j_s+n_{t_1}-1}\}$ and $\mathcal{K}\models r_{j_s}\circ\ldots\circ r_{j_s+n_{t_1}-1} \subseteq t_1$. From the characterisation of composition, we know that  $b_x^-(O_{t_1,i}\diamond O_{t_2,i})=b_x^-(O_{t_1,i})$ can only hold if $b_x^-(O_{t_1,i})\geq b_x^-(O_{t_2,i}) - b_u^+(O_{t_1,i})$. From Lemma \ref{lemmaCompositionBoundsUandX} we know that $b_u^+(O_{t_1,i})\leq n_{t_1}$. Since we moreover assumed $b_x^-(O_{t_1,i})=j_{s}$ we find $b_x^-(O_{t_2,i})\leq n_{t_1}+j_{s}$. From Lemma \ref{lemmaCompositionBoundsUandX} we know that this means $j_{t_2}\leq n_{t_1}+j_{s}$. Since $\{r_{j_s},\ldots,r_{j_s+n_{t_1}-1}\}\cap \mathcal{R}_2=\emptyset$, this is only possible if $j_{t_2}=j_s+n_{t_1}$. By the induction hypothesis, this can only hold if $\mathcal{R}_{t_2}=\{r_{j_s+n_{t_1}},\ldots,r_{j_s+n_{t_1}+n_{t_2}-1}\}$ and $\mathcal{K}\models r_{j_s+n_{t_1}}\circ\ldots\circ r_{j_s+n_{t_1}+n_{t_2}-1}\subseteq t_2$. Continuing in the same way, we find that $b_x^-(O_{t_1,i}\diamond \ldots \diamond O_{t_p,i})=j_s$ is only possible if $\mathcal{K}$ entails $r_{j_s}\circ\ldots\circ r_{j_s+n_{t_1}-1} \subseteq t_1$, $r_{j_s+n_{t_1}}\circ\ldots\circ r_{j_s+n_{t_1}+n_{t_2}-1}\subseteq t_2$, \ldots, $r_{j_s+n_{t_1}+\ldots+n_{t_{p-1}}}\circ\ldots\circ r_{j_s+n_{t_1}+\ldots+n_{t_p}-1}\subseteq t_p$ and thus we also have $\mathcal{K}\models r_{j_s}\circ r_{j_s+1}\circ\ldots\circ r_{j_s+n_s-1} \subseteq s$.
\end{proof}

Let $s_1\circ \ldots \circ s_m \subseteq r$ be a rule which is entailed by $\mathcal{K}$. Let us associate with each $s_\ell$ a set of relations $\mathcal{R}_\ell\subseteq \{r_1,\ldots,r_k\}$ as follows: $r_j\in\mathcal{R}_\ell$ if $\mathcal{K}$ entails a rule with $r_j$ in the body and $s_\ell$ in the head.

\begin{lemma}\label{lemmaConstructonCompositionNoDistinctRules}
Let $r_1,\ldots,r_{k},r\in \mathcal{R}$ be distinct relations and assume that $\mathcal{K} \not\models r_1 \circ \ldots \circ r_k \subseteq r$. Then $O_{r_1,i}\diamond \ldots\diamond O_{r_k,i} \not\subseteq O_{r,i}$ for some coordinate $i$ for $\alpha_i$ the assignment defined above.
\end{lemma}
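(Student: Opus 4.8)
The plan is to pin down a single witness point and show it lies in $O_{r_1,i}\diamond\cdots\diamond O_{r_k,i}$ but not in $O_{r,i}$, where $i$ is the coordinate carrying the assignment $\alpha_i$ associated with the composition $r_1\circ\cdots\circ r_k$. The candidate witness is $p=(1,k+1)$. Since the construction of $\gamma_{\mathcal{K}}$ takes closures that include $X_{r,i}$, we have $O_{r_j,i}\supseteq X_{r_j,i}=A_{j,1,k+1}$ for each $j$ (here $j<k+1$, so we are in the first case of the definition of $X_{r,i}$), and composition is monotone in each argument, so it is enough to check $p\in A_{1,1,k+1}\diamond\cdots\diamond A_{k,1,k+1}$. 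For this I would use the explicit chain $t_0=1,t_1=2,\ldots,t_k=k+1$: for every $j\in\{1,\ldots,k\}$ the pair $(t_{j-1},t_j)=(j,j+1)$ is a vertex of $A_{j,1,k+1}$ (it is the vertex $(i,i+j)$ of $A_{i,j,m}$ instantiated with $i=j$, width $1$, $m=k+1$, which is a genuine vertex since $j+1\le k+1$), hence $(j,j+1)\in A_{j,1,k+1}$. Chaining these memberships gives $p=(1,k+1)\in A_{1,1,k+1}\diamond\cdots\diamond A_{k,1,k+1}\subseteq O_{r_1,i}\diamond\cdots\diamond O_{r_k,i}$.

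It then remains to show $p\notin O_{r,i}$. The point $p$ has $x$-coordinate $1$ and satisfies $y-x=k$, so it suffices to prove that either $b_x^-(O_{r,i})>1$ or $b_u^+(O_{r,i})<k$. I would split on $n_r=|\mathcal{R}_r|$, recalling $\mathcal{R}_r\subseteq\{r_1,\ldots,r_k\}$, so $n_r\le k$. If $n_r<k$, then Lemma~\ref{lemmaCompositionBoundsUandX} gives $b_u^+(O_{r,i})\le n_r<k$, and we are done. If $n_r=k$, then $\mathcal{R}_r=\{r_1,\ldots,r_k\}$, so $j_r=1$; moreover $r\notin\{r_1,\ldots,r_k\}$ because the relations $r_1,\ldots,r_k,r$ are pairwise distinct, and $j_r=1<k+1$, so Lemma~\ref{lemmaXboundImpliesRuleA} applies and yields $b_x^-(O_{r,i})>1$ unless $\mathcal{R}_r=\{r_1,\ldots,r_k\}$ and $\mathcal{K}\models r_1\circ\cdots\circ r_k\subseteq r$. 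The latter is excluded by hypothesis, so $b_x^-(O_{r,i})>1$ and again $p\notin O_{r,i}$.

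The routine parts are the vertex check $(j,j+1)\in A_{j,1,k+1}$ and the observation that missing the $x^-$ or $u^+$ bound at $p$ already excludes $p$ from an octagon; these are finite linear-inequality verifications. The load-bearing step is the reduction to Lemmas~\ref{lemmaCompositionBoundsUandX} and~\ref{lemmaXboundImpliesRuleA}: the whole purpose of the coordinate $i$ and its assignment is that in that coordinate the normalised $u^+$ bound of $O_{r,i}$ counts exactly how many body relations $r_j$ can reach $r$ through $\mathcal{K}$ (bounded by $n_r$), while the normalised $x^-$ bound tracks the smallest such index $j_r$, and $x^-$ can only drop to $1$ when the full prefix rule $r_1\circ\cdots\circ r_k\subseteq r$ is actually entailed. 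So the main obstacle is not this lemma in isolation but ensuring the two supporting lemmas are stated sharply enough; granting them, the present statement follows cleanly.
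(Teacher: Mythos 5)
Your proof is correct and follows essentially the same route as the paper: the same witness point $(1,k+1)$, obtained from the octagons $A_{j,1,k+1}$ (the paper computes $A_{1,1}\diamond\cdots\diamond A_{k,1}=A_{1,k}$ where you chain vertex memberships, which is equivalent), and the same reduction to Lemmas \ref{lemmaCompositionBoundsUandX} and \ref{lemmaXboundImpliesRuleA} to exclude that point from $O_{r,i}$. If anything, your case split on $n_r<k$ versus $n_r=k$ is slightly more careful than the paper's one-line appeal to Lemma \ref{lemmaXboundImpliesRuleA} (which is terse about the situation where only a proper prefix rule $r_1\circ\cdots\circ r_{n_r}\subseteq r$ is entailed, a case your $u^+$-bound argument handles explicitly), so there is no gap.
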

\begin{proof}
First note that $k< |\mathcal{R}|$ since $r_1,\ldots,r_{k+1}$ were assumed to be all distinct.
We clearly have:
\begin{align*}
O_{r_1,i}\diamond \ldots \diamond O_{r_k,i} 
& \supseteq A_{1,1}\diamond \ldots \diamond A_{k,1}  = A_{1,k}
\end{align*}
In particular, we have that $(1,k+1)\in O_{r_1,i}\diamond \ldots \diamond O_{r_k,i}$. To conclude the proof we show that $(1,k+1)\notin O_{r,i}$ and in particular that $b_u^+(O_{r,i})<k$. 
Given that $\mathcal{K} \not\models r_1 \circ \ldots \circ r_k \subseteq r$ we find from Lemma \ref{lemmaXboundImpliesRuleA} that $b_x^-(O_{r,i})\geq 2$. Since we trivially have $b_y^+(O_{r_,i}) \leq k+1$, we obtain $b_u^+(O_{r_,i})\leq k-1$.
\end{proof}

Lemma \ref{lemmaConstructonCompositionNoDistinctRules} shows that $\gamma_{\mathcal{K}}$ does not capture any unwanted rules of the form $r_1\circ \ldots\circ r_k \subseteq r$ where $r_1,\ldots,r_k,r$ are distinct relations. We now show that the same is true for rules where $r_1,\ldots,r_k,r$ are not necessarily distinct. Note that when $r_1,\ldots,r_k,r$ are not all distinct, we always have $\mathcal{K}\not\models r_1\circ \ldots\circ r_k\subseteq r$ (except for the trivial rule $r\subseteq r$), given our assumption about $\mathcal{K}$.

\begin{lemma}\label{lemmaConstructonCompositionNoNonDistinctRules}
Let $s_1,\ldots,s_{l} \in \mathcal{R}$ be such that $s_p=s_q$ for some $p\neq q$. There is a coordinate $i$ such that $O_{s_1,i}\diamond \ldots\diamond O_{s_l,i}\not \subseteq O_{r,i}$ for any relation $r\in\mathcal{R}$.
\end{lemma}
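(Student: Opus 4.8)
The plan is to pick one coordinate $i$ at which the composition $O_{s_1,i}\diamond\cdots\diamond O_{s_l,i}$ contains a point where $y-x$ is strictly larger than $b_u^+(O_{r,i})$ for \emph{every} $r\in\mathcal R$; since $b_u^+(O_{r,i})$ is the largest value of $y-x$ attained inside the octagon $O_{r,i}$, such a point witnesses $O_{s_1,i}\diamond\cdots\diamond O_{s_l,i}\not\subseteq O_{r,i}$. Let $t_1,\dots,t_d$ be the relations occurring in $s_1,\dots,s_l$, labelled by order of first occurrence (so $t_1=s_1$); since $s_p=s_q$ with $p\neq q$ we have $d\le l-1$, hence $l\ge d+1$. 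Put $M=\min(d+2,|\mathcal R|+1)$ and let $\alpha_i$ be the assignment with $\alpha_i(t_j)=(j,M)$ for $1\le j\le d$ and $\alpha_i(r)=(M,M)$ for every other relation $r$. This assignment belongs to $\mathcal A$, so the coordinate $i$ exists, and then $X_{t_j,i}=A_{j,1,M}$ while $X_{r,i}=\{(M,M)\}$ whenever $r\notin\{t_1,\dots,t_d\}$.

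First I would show that $(1,M)\in O_{s_1,i}\diamond\cdots\diamond O_{s_l,i}$. Every $s_m$ equals some $t_{j_m}$, and $j_m\le m$: the first occurrences of $t_1,\dots,t_d$ sit at strictly increasing positions $p_1<\cdots<p_d$, so $p_{j_m}\ge j_m$, while $p_{j_m}\le m$ because $s_m=t_{j_m}$. Since $O_{s_m,i}\supseteq X_{s_m,i}=A_{j_m,1,M}$, it suffices to exhibit values $x_0,\dots,x_l$ with $x_0=1$, $x_l=M$ and $(x_{m-1},x_m)\in A_{j_m,1,M}$ for all $m$. Take the ``staircase'' $x_m=\min(m+1,M)$; because $l\ge d+1\ge M-1$ this climbs by one for the first $M-1$ steps and then stays at $M$. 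Each climbing step $(m,m+1)$ lies in $A_{j_m,1,M}$ since $m\ge j_m$ and $m+1\le M$, and each flat step $(M,M)$ trivially lies in it; this is a direct check of the eight octagon inequalities. Thus $(1,M)$ is in the composition, and there $y-x=M-1$.

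Next I would bound $b_u^+(O_{r,i})$ uniformly. Let $D_r$ be the set of relations $s\neq r$ obtainable from $r$ by iterating ``pass from the head of a rule of $\mathsf{DC}(\mathcal K)$ to one of its body relations''; by Lemma~\ref{lemmaCompositionEmbeddingAcyclic} this process is acyclic, so $D_r$ is well defined, $r\notin D_r$, and $|D_r|\le|\mathcal R|-1$. Because $\mathsf{DC}(\mathcal K)$ contains only regular rules, for any rule $u_1\circ\cdots\circ u_p\subseteq r$ in $\mathsf{DC}(\mathcal K)$ the sets $D_{u_1}\cup\{u_1\},\dots,D_{u_p}\cup\{u_p\}$ are pairwise disjoint subsets of $D_r$, since a common element would, by substitution, produce an entailed rule with a repeated body relation. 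Using that $b_u^+$ of a closure is at most the largest $b_u^+$ of the octagons being closed, and that $b_u^+$ of a composition is at most the sum of the factors' $b_u^+$, an induction following $\tau$ (Lemma~\ref{lemmaCompositionEmbeddingAcyclic}) then yields $b_u^+(O_{r,i})\le M-2$ for every $r$: when $d<|\mathcal R|$ the relations outside $\{t_1,\dots,t_d\}$ are singletons with $b_u^+=0$, and the disjointness forces $b_u^+(O_{r,i})\le\bigl|(D_r\cup\{r\})\cap\{t_1,\dots,t_d\}\bigr|\le d=M-2$; when $d=|\mathcal R|$ the same induction gives $b_u^+(O_{r,i})\le\max(1,|D_r|)\le|\mathcal R|-1=M-2$. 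In either case $b_u^+(O_{r,i})<M-1$, so $(1,M)\notin O_{r,i}$, and therefore $O_{s_1,i}\diamond\cdots\diamond O_{s_l,i}\not\subseteq O_{r,i}$ for all $r$, as required.

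The main obstacle is this uniform bound on $b_u^+(O_{r,i})$. In principle $O_{r,i}$ can be large, because $\mathsf{DC}(\mathcal K)$ may contain long regular rules with head $r$ and because relations not occurring among $s_1,\dots,s_l$ can still feed into $r$'s derivations; if $b_u^+(O_{r,i})$ reached $M-1$, the point $(1,M)$ would lie in $O_{r,i}$ and the argument would collapse. The assignment is designed exactly to prevent this: sending the ``extra'' relations to singleton points neutralises their contribution, and the regularity of $\mathsf{DC}(\mathcal K)$ is what makes the $D_{u_j}\cup\{u_j\}$ disjoint, so that the sum of the children's $b_u^+$-values telescopes into $|D_r|$ (capped by $d$, respectively $|\mathcal R|-1$) instead of growing unboundedly. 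The case $|\mathcal R|=1$ is degenerate and can be set aside, as no regular composition rule exists then.
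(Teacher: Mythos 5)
Your proof is correct and follows essentially the same strategy as the paper's: pick a coordinate whose assignment makes the $u$-slack $b_u^+$ of the $l$-fold composition reach $M-1$, while a regularity-based counting argument (your pairwise-disjoint sets $D_{u_j}\cup\{u_j\}$ play exactly the role of the paper's disjoint sets $\mathcal{R}_{t_j}$ and the inequality $n_{t_1}+\ldots+n_{t_p}\le n_r$) caps $b_u^+(O_{r,i})$ at $M-2$ for every $r$. The differences are cosmetic---you stagger the assignment as $(j,M)$ with $M=\min(d+2,|\mathcal{R}|+1)$ and exhibit the explicit witness point $(1,M)$ via a staircase, whereas the paper sends every relation occurring in the body to $(1,|\mathcal{R}|+1)$ and compares the two $b_u^+$ bounds directly---and your set-aside $|\mathcal{R}|=1$ case is a degeneracy the paper's own proof also does not handle (its bound $n_r\le\min(k,|\mathcal{R}|-1)$ likewise fails there).
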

\begin{proof}
Let $r_1,\ldots,r_k$ be the unique relations among $s_1,\ldots,s_l$, i.e.\ we have $\{r_1,\ldots,r_k\}=\{s_1,\ldots,s_l\}$ with $k<l$.
Let us define the assignment $\alpha_i$ as follows:
\begin{align*}
\alpha_i(r_j) = (1,|\mathcal{R}|+1)
\end{align*}
and for $r\notin \{r_1,\ldots,r_k\}$ we have
\begin{align*}
\alpha_i(r) = (|\mathcal{R}|+1,|\mathcal{R}|+1)
\end{align*}
Clearly, we have that $b_u^+(O_{s_1,i}\diamond \ldots\diamond O_{s_l,i})\geq \min(l,|\mathcal{R}|)$. To complete the proof, we show that $b_u^+(O_{r,i})\leq \min(l-1,|\mathcal{R}|-1)$. Let us define $\mathcal{R}_s$ and $n_s$ as before. We show that $b_u^+(O_{r,i})\leq n_s$, from which the result follows since we clearly have $n_r \leq \min(k,|\mathcal{R}|-1)\leq \min(l-1,|\mathcal{R}|-1)$.

In the base case where there are no non-trivial rules $t_1\circ \ldots\circ t_p\subseteq r$ entailed by $\mathcal{K}$, we clearly have $b_u^+(O_{r,i}) = 0$ if $r\notin \{r_1,\ldots,r_k\}$ and thus $n_s=0$ (due to the assumption of there not being any rules $t_1\circ \ldots\circ t_p\subseteq r$ entailed by $\mathcal{K}$). If $r\in \{r_1,\ldots,r_k\}$ we have $n_r=1$ and $b_u^+(O_{r,i})=b_u^+(X_{r,i})=1$. For the inductive case, we find $b_u^+(X_{r,i})\leq n_r$ as in the base case. If $t_1\circ \ldots\circ t_p\subseteq r$ is entailed by $\mathcal{K}$ we know that $n_{t_1}+\ldots+n_{t_p}\leq n_{r}$, since $\mathcal{K}$ only entails regular (non-trivial) rules. By induction and the characterisation of composition, we know that $b_u^+(O_{t_1,i}\diamond \ldots\diamond O_{t_p,i})\leq b_u^+(O_{t_1,i}) +\ldots + b_u^+(O_{t_p,i}) =n_{t_1}+\ldots+n_{t_p}\leq n_{r}$.
\end{proof}

\begin{lemma}\label{lemmaConstructonCompositionNoNonDistinctRules2}
Let $s_1,\ldots,s_l,r \in \mathcal{R}$ be such that $r\in \{s_1,\ldots,s_l\}$ and $l\geq 2$. There is a coordinate $i$ such that $O_{s_1,i}\diamond \ldots\diamond O_{s_l,i}\not \subseteq O_{r,i}$ for any relation $r\in\mathcal{R}$.
\end{lemma}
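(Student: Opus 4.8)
The plan is to follow the template of Lemma~\ref{lemmaConstructonCompositionNoNonDistinctRules}, adding one extra combinatorial observation that uses the regularity hypothesis on $\mathcal{K}$. First I would reduce to the case where $s_1,\ldots,s_l$ are pairwise distinct: if two of them coincide, Lemma~\ref{lemmaConstructonCompositionNoNonDistinctRules} already yields a coordinate $i$ with $O_{s_1,i}\diamond\cdots\diamond O_{s_l,i}\not\subseteq O_{t,i}$ for \emph{every} $t\in\mathcal{R}$, in particular for $t=r$, and we are done. So assume the $s_j$ are distinct; then $l\le|\mathcal{R}|$ and there is a unique index $j_0$ with $s_{j_0}=r$.

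Next I would fix the coordinate $i$ given by the assignment $\alpha_i(s_j)=(1,m)$ for $1\le j\le l$ and $\alpha_i(t)=(m,m)$ for $t\notin\{s_1,\ldots,s_l\}$, where $m=|\mathcal{R}|+1$; this is exactly the assignment used in the proof of Lemma~\ref{lemmaConstructonCompositionNoNonDistinctRules}, and $X_{s_j,i}=A_{1,1,m}$ for every $j$. For the lower bound, a short induction using Proposition~\ref{propCharacterisationComposition} shows $A_{1,1,m}\diamond\cdots\diamond A_{1,1,m}$ ($l$ factors) $=A_{1,l,m}$ (the parameters stay normalised since $l\le m-1$); by monotonicity of $\diamond$ and $O_{s_j,i}\supseteq X_{s_j,i}$, the vertex $(1,l+1)$ of $A_{1,l,m}$ lies in $O_{s_1,i}\diamond\cdots\diamond O_{s_l,i}$. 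For the upper bound, put $\mathcal{R}_t=\{s_j\mid\mathcal{K}\text{ entails a rule with }s_j\text{ in the body and }t\text{ in the head}\}$ (so $t\in\mathcal{R}_t$ whenever $t\in\{s_1,\ldots,s_l\}$, via the trivial rule $t\subseteq t$) and $n_t=|\mathcal{R}_t|$; the structural induction in the proof of Lemma~\ref{lemmaConstructonCompositionNoNonDistinctRules} gives $b_u^+(O_{t,i})\le n_t$ for all $t$.

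The extra ingredient is a sharper bound $b_u^+(O_{r,i})\le l-1$: although $n_r$ can be as large as $l$ (the trivial rule forces $s_{j_0}\in\mathcal{R}_r$), this slack is never realised because $r$ occurs in the body. Concretely, $b_u^+(O_{r,i})=\max\bigl(b_u^+(X_{r,i}),\,\max\{b_u^+(O_{t_1,i}\diamond\cdots\diamond O_{t_p,i})\mid(t_1\circ\cdots\circ t_p\subseteq r)\in\mathsf{DC}(\mathcal{K})\}\bigr)$; the first term equals $1\le l-1$, and for the composition terms, Proposition~\ref{propCharacterisationComposition} gives $b_u^+(O_{t_1,i}\diamond\cdots\diamond O_{t_p,i})\le\sum_j b_u^+(O_{t_j,i})\le\sum_j n_{t_j}=\bigl|\bigcup_j\mathcal{R}_{t_j}\bigr|$, where the $\mathcal{R}_{t_j}$ are pairwise disjoint by regularity. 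The point is $\bigcup_j\mathcal{R}_{t_j}\subseteq\{s_1,\ldots,s_l\}\setminus\{s_{j_0}\}$: if $s_{j_0}\in\mathcal{R}_{t_j}$ for some $j$, then since $t_j\ne r=s_{j_0}$ (as $t_1\circ\cdots\circ t_p\subseteq r$ is regular) that membership comes from a non-trivial rule, and chaining it into the $t_j$-slot of $t_1\circ\cdots\circ t_p\subseteq r$ produces a non-trivial rule entailed by $\mathcal{K}$ in which $s_{j_0}$ occurs both in the body and in the head, contradicting the hypothesis that every non-trivial rule entailed by $\mathcal{K}$ is regular. Hence $b_u^+(O_{r,i})\le l-1$, so $(1,l+1)\notin O_{r,i}$ while $(1,l+1)\in O_{s_1,i}\diamond\cdots\diamond O_{s_l,i}$, giving $O_{s_1,i}\diamond\cdots\diamond O_{s_l,i}\not\subseteq O_{r,i}$. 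I expect the main obstacle to be precisely this chaining argument: keeping track of exactly where each relation occurs so that the concatenated rule is both non-trivial and non-regular; the rest (the identity $A_{1,1,m}^{\diamond l}=A_{1,l,m}$ and the inherited bound $b_u^+(O_{t,i})\le n_t$) is a routine computation with Proposition~\ref{propCharacterisationComposition} or a direct reuse of the induction in Lemma~\ref{lemmaConstructonCompositionNoNonDistinctRules}.
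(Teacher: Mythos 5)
Your proposal is correct and follows essentially the same route as the paper's proof: the same assignment $\alpha_i$, the same inherited bound $b_u^+(O_{t,i})\le n_t$, and the same key observation that $r\notin\mathcal{R}_{t_j}$ for any non-trivial entailed rule $t_1\circ\cdots\circ t_p\subseteq r$ (which the paper justifies by the same regularity/chaining argument you spell out in more detail). The only cosmetic difference is that you first reduce to pairwise-distinct $s_1,\ldots,s_l$ by invoking Lemma~\ref{lemmaConstructonCompositionNoNonDistinctRules} and then exhibit the explicit witness $(1,l+1)$, whereas the paper handles repeated relations directly via the distinct relations among the $s_j$ and the bound $\min(l,|\mathcal{R}|)$.
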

\begin{proof}
Let $r_1,\ldots,r_k$ be the unique relations among $s_1,\ldots,s_l$, i.e.\ we have $\{r_1,\ldots,r_k\}=\{s_1,\ldots,s_l\}$ with $k\leq l$. Let us consider the same assignment $\alpha_i$ as in the proof of Lemma \ref{lemmaConstructonCompositionNoNonDistinctRules}. We then again find
that $b_u^+(O_{s_1,i}\diamond \ldots\diamond O_{s_l,i})\geq \min(l,|\mathcal{R}|)$ and  $b_u^+(O_{s,i})\leq n_s$ for each relation $s$. By construction, we have $b_u^+(X_{r,i})=0$. For each non-trivial rule of the form $t_1\circ \ldots \circ t_p \subseteq r$ we have $b_u^+(O_{t_j,i})\leq n_{t_j}$ and $n_{t_1}+\ldots+n_{t_p}\leq n_r$. Moreover, due to the fact that $\mathcal{K}$ only entails regular (non-trivial) rules, we have $r\notin \mathcal{R}_{t_j}$, meaning that $n_{t_1}+\ldots+n_{t_p} < n_r$. It follows that $b_u^+(O_{t_1,i}\diamond \ldots \diamond O_{t_p,i}) < n_r \leq k\leq \min(l,|\mathcal{R}|)$. Given that $l\geq 2$ we also have $b_u^+(X_{r,i})=1<\min(l,|\mathcal{R}|)$. We thus find $b_u^+(O_{t_1,i} < \min(l,|\mathcal{R}|)$, from which the result follows.
\end{proof}

\begin{proposition}
Let $\mathcal{K}$ be a set of regular composition rules. Assume that any extended composition rule entailed by $\mathcal{K}$ is either a trivial rule of the form $r\subseteq r$ or a regular rule. There exists an octagon embedding $\gamma$ which satisfies $\mathcal{K}$, and which only satisfies those extended composition rules which are entailed by $\mathcal{K}$.
\end{proposition}

\begin{proof}
The result follows directly from Lemmas \ref{lemmaConstructionCompositionRulesIsSound}, \ref{lemmaConstructonCompositionNoDistinctRules},  \ref{lemmaConstructonCompositionNoNonDistinctRules} and \ref{lemmaConstructonCompositionNoNonDistinctRules2}. 
\end{proof}

\section{Experimental Results}

We conducted two experiments: link prediction in high dimension, to compare with results previously published on this task, and link prediction in low dimension. These experiments aim to compare the performances of octagon embeddings with other region based embedding models.

\subsection{Implementation Details}

The octagon model was implemented using the PyKEEN library\footnote{\url{https://github.com/pykeen/pykeen}}. Octagon embeddings are learnt in batches using the Adam optimizer and a Self-Adversarial Negative Sampling (NSSA) loss, like BoxE and ExpressivE. Validation is performed every 10 epochs and the best model over 1,000 epochs is used for testing. In all experiments, octagons are initialised randomly with Xavier initialisation. We empirically observed that normalisation (Proposition~\ref{propNormalisation}) does not influence final performances. Octagons are thus unconstrained during training, for faster computation.

\begin{table*}[t]
    \centering
    \begin{tabular}{llccccccc}
        \toprule
        && dim. $n$ & $L_p$-norm & margin $\lambda$ & neg. samples $|N|$ & learning rate & batch size \\
        \midrule
        $uvxy$ & FB15k-237 & 1000 & 1 & 15 & 150 & $10^{-3}$ & 1024 \\
        $uvxy$ & WN18RR & 500 & 1 & 35 & 5 & $10^{-2}$ & 512 \\
        TransE & FB15k-237 & 40 & 1 & 4 & 50 & $10^{-3}$ & 256 \\
        TransE & WN18RR & 40 & 1 & 3 & 50 & $10^{-3}$ & 256 \\ 
        BoxE & FB15k-237 & 20 & 2 & 4 & 50 & $10^{-3}$ & 256 \\
        BoxE & WN18RR & 20 & 2 & 3 & 50 & $10^{-3}$ & 256 \\
        ExpressivE & FB15k-237 & 40 & 2 & 4 & 50 & $10^{-2}$ & 1024 \\
        ExpressivE & WN18RR & 40 & 2 & 3 & 50 & $10^{-2}$ & 512 \\
        $uvxy$ & FB15k-237 & 40 & 1 & 3 & 50 & $10^{-3}$ & 1024 \\
        $uvxy$ & WN18RR & 40 & 1 & 3 & 5 & $10^{-2}$ & 512 \\
        \bottomrule
    \end{tabular}
    \caption{Hyperparameters used in experiments. All configurations of the octagon model share the same hyperparameters.}
    \label{tab:hyperparameters}
\end{table*}

The full implementation, including configuration files to reproduce results discussed in Section~\ref{sec:ExpResults}, is available online\footnote{\url{https://github.com/vcharpenay/uvxy}}. Hyperparameters used in the experiments are also reproduced in Table~\ref{tab:hyperparameters} for completeness. Some hyperparameters (embedding dimension, number of negative samples and batch size) alter the inductive power of a model. For a fair comparison, we chose the same configuration as past experiments for these hyperparameters. In high dimension, we took ExpressivE experiments\footnote{\url{https://github.com/AleksVap/ExpressivE/}} as a basis, up to one exception: on WN18RR, we empirically observed that less negative samples gave slightly better results. We therefore set $|N| = 5$ instead of $|N| = 100$. In low dimension, we took hyperparameters used on MuRE\footnote{\url{https://github.com/ibalazevic/multirelational-poincare/}}, with the same exception on WN18RR. To our knowledge, no result had been made available for BoxE and ExpressivE in low dimension. Other hyperparameters that are rather influenced by the model's scoring function (margin, $L_p$-norm, learning rate) have been chosen after a grid search. Possible values were $\{ 3, 4, 6, 9, 12, 15, 21, 27, 35, 41 \}$ for the margin, $\{ 1, 2 \}$ for the $L_p$-norm and $\{ 10^{-2}, 10^{-3} \}$ for the optimizer's learning rate. In low dimension, a search over $\{ 256, 512, 1024 \}$ has also been performed for the optimal batch size.

\subsection{Discussion}
\label{sec:ExpResults}

\begin{table}[t]
    \centering
    \footnotesize
    \setlength\tabcolsep{2pt}
    \begin{tabular}{lcccccccc}
        \toprule
        & \multicolumn{4}{c}{FB15k-237} & \multicolumn{4}{c}{WN18RR} \\
        \cmidrule(lr){2-5}\cmidrule(lr){6-9}
        & H@1 & H@3 & H@10 & MRR & H@1 & H@3 & H@10 & MRR \\
        \midrule
        TransE & 23.3 & 37.2 & 53.1 & 33.2 & 01.3 & 40.1 & 52.9 & 22.3 \\        
        BoxE & 23.8 & 37.4 & 53.8 & 33.7 & 40.0 & 47.2 & 54.1 & 45.1 \\
        ExpressivE & 24.3 & 36.6 & 51.2 & 33.3 & 46.4 & 52.2 & 59.7 & 50.8 \\
        \midrule
        $u$ & 23.1 & 37.3 & 53.2 & 33.2 & 01.6 & 39.9 & 51.5 & 22.0 \\
        $ux$ & 23.3 & 37.1 & 52.5 & 33.1 & 01.9 & 39.0 & 51.6 & 21.8 \\
        $uxy$ & 23.2 & 37.2 & 53.1 & 33.2 & 01.8 & 40.8 & 52.8 & 22.8 \\
        $uv$ & 22.7 & 35.8 & 51.2 & 32.2 & 40.4 & 46.2 & 51.3 & 44.4 \\
        $uv^*$ & 24.1 & 36.9 & 52.8 & 33.5 & 43.6 & 48.5 & 52.9 & 46.9 \\
        $uvxy$ & 15.9 & 28.0 & 42.9 & 24.9 & 39.6 & 45.9 & 50.7 & 43.8 \\
        $uvxy^*$ & 24.1 & 36.7 & 51.7 & 33.2 & 43.6 & 49.2 & 56.1 & 47.9 \\
        $u+v$ & 24.0 & 37.2 & 53.0 & 33.6 & 42.4 & 46.2 & 51.8 & 45.5 \\
        \bottomrule
    \end{tabular}
    \caption{Link prediction performances of region based embedding models ($n = 1000$ on FB15k-237, $n = 500$ on WN18RR). Configurations with $^*$ use the variant with attention weights.}
    \label{tab:more-link-prediction}
\end{table}

Table~\ref{tab:more-link-prediction} gives results in high dimension. Results for TransE, BoxE and ExpressivE are not ours. They were respectively published in~\cite{DBLP:conf/iclr/SunDNT19},~\cite{DBLP:conf/nips/AbboudCLS20} and~\cite{DBLP:conf/iclr/0002S23}. Two results are available for the $uv$ and $uvxy$ configurations: with and without attention weights. Attention improves performance on all metrics. In the light of Propositions~\ref{propHexagonLimitation} and ~\ref{propFullyExpressive}, which highlight the importance of $v$ constraints, we also introduced a $u+v$ model consisting of a mixture a $u$ constraint in half of the coordinates and a $v$ constraint in the other half. This model clearly outperforms $u$ on WN18RR, especially on H@1, though remaining below $uv^*$ and $uvxy^*$ (with attention weights).

\begin{table}[t]
    \centering
    \footnotesize
    \setlength\tabcolsep{2pt}
    \begin{tabular}{lcccccccc}
        \toprule
        & \multicolumn{4}{c}{FB15k-237} & \multicolumn{4}{c}{WN18RR} \\
        \cmidrule(lr){2-5}\cmidrule(lr){6-9}
        & H@1 & H@3 & H@10 & MRR & H@1 & H@3 & H@10 & MRR \\
        \midrule
        TransE & 18.3 & 30.5 & 46.2 & 27.5 & 03.0 & 33.8 & 52.8 & 21.3 \\
        MuRE & 22.7 & 34.6 & 49.3 & 31.5 & 42.9 & 47.4 & 52.8 & 45.9 \\
        BoxE & 17.9 & 29.0 & 43.9 & 26.5 & 41.7 & 45.5 & 49.6 & 44.5 \\
        ExpressivE & 18.7 & 30.4 & 46.0 & 27.6 & 43.4 & 48.9 & 55.5 & 47.5 \\        
        \midrule
        $u$ & 20.3 & 32.8 & 48.9 & 29.6 & 02.4 & 36.9 & 49.1 & 21.0 \\
        $u^*$ & 18.1 & 29.3 & 43.3 & 26.6 & 22.2 & 40.1 & 51.5 & 32.9 \\
        $ux$ & 20.8 & 32.9 & 48.5 & 30.0 & 02.6 & 36.3 & 49.8 & 21.0 \\
        $uy$ & 20.3 & 32.8 & 48.5 & 29.6 & 02.5 & 38.0 & 49.8 & 21.6 \\
        $uxy$ & 20.5 & 32.9 & 48.5 & 29.7 & 02.4 & 37.1 & 49.0 & 21.1 \\
        $uv$ & 20.3 & 32.3 & 47.5 & 29.3 & 38.5 & 46.5 & 52.3 & 43.6 \\
        $uv^*$ & 19.6 & 31.1 & 45.1 & 28.2 & 41.1 & 47.0 & 51.1 & 44.9 \\
        $uvxy$ & 19.7 & 31.5 & 46.8 & 28.6 & 37.9 & 44.6 & 48.9 & 42.1 \\
        $uvxy^*$ & 20.4 & 31.6 & 46.3 & 28.9 & 35.6 & 43.8 & 49.1 & 40.6 \\
        $u+v$ & 20.2 & 32.3 & 48.0 & 29.3 & 40.9 & 46.3 & 50.5 & 44.5 \\
        \bottomrule
    \end{tabular}
    \caption{Link prediction of region based embedding models in low dimension ($n=40$). Configurations with $^*$ use the variant with attention weights.}
    \label{tab:link-prediction-40}
\end{table}

We evaluated more configurations in low dimension, as reported in Table~\ref{tab:link-prediction-40}. Results for MuRE were first published in~\cite{DBLP:conf/nips/BalazevicAH19}, others are ours.
From the table, we observe that attention weights do not always improve the performances of a model in low dimension contrary to higher-dimensional models. Conversely, most models derived from $uvxy$ outperform TransE, BoxE and ExpressivE on FB15k-237 in low dimension but not in high dimension.
We also note that $ux$ and $uy$ give comparable results, with a small penalty for $uy$.


\section*{Acknowledgments}

Computations have been performed on the supercomputer facilities of the Mésocentre Clermont-Auvergne of the Université Clermont Auvergne. Steven Schockaert was supported by grants from EPSRC (EP/W003309/1) and the Leverhulme Trust (RPG-2021-140).

\bibliographystyle{named}
\bibliography{ijcai24}

\end{document}